\documentclass{article} 
\usepackage{iclr2027_conference_preprint,times}

\usepackage{amsmath,amsfonts,bm}

\def\eqref#1{equation~\ref{#1}}

\def\1{\bm{1}}

\DeclareMathAlphabet{\mathsfit}{\encodingdefault}{\sfdefault}{m}{sl}
\SetMathAlphabet{\mathsfit}{bold}{\encodingdefault}{\sfdefault}{bx}{n}

\usepackage{hyperref}
\usepackage{url}
\hypersetup{
colorlinks,
linkcolor=blue,
citecolor=blue,
anchorcolor=red,
}
\usepackage{url}            
\usepackage{booktabs}       
\usepackage{amsfonts}       
\usepackage{nicefrac}       
\usepackage{microtype}      
\usepackage{xcolor}         
\usepackage{graphicx}
\usepackage{amssymb}
\usepackage{amsmath}
\usepackage{amsthm}
\usepackage{bm}
\usepackage{algorithmicx,algorithm}
\usepackage[noend]{algpseudocode}
\usepackage{wrapfig}
\usepackage{threeparttable}
\usepackage{multirow}
\usepackage{colortbl}
\usepackage{enumitem}
\usepackage{circledsteps}
\usepackage{booktabs}
\usepackage{tcolorbox}
\usepackage{listings}
\usepackage{makecell}
\theoremstyle{plain}
\newtheorem{definition}{Definition}
\newtheorem{proposition}{Proposition}
\theoremstyle{definition}
\newtheorem{assumption}{Assumption}

\definecolor{darkred}{rgb}{0.8, 0, 0}
\definecolor{myPurple}{HTML}{7030A0}
\definecolor{MistBlue}{HTML}{D1E4F0}
\definecolor{SkyBlue}{HTML}{7FA9CD}
\definecolor{SteelBlue}{HTML}{4060A3}
\definecolor{myRed}{HTML}{BD3C33}
\definecolor{myOrange}{HTML}{E5935B}
\definecolor{myYellow}{HTML}{FEEAAC}
\definecolor{upColor}{RGB}{220,20,120}

\title{Structure-Internalized Rule Language Model for Faithful Knowledge Graph Reasoning}

\author{%
Xingrui Zhuo\textsuperscript{1,2}, Jiapu Wang\textsuperscript{3}, Manzong Huang\textsuperscript{1,2}, Gongqing Wu\textsuperscript{1,2}\thanks{Corresponding author}, Xindong Wu\textsuperscript{1,2}\footnotemark[\value{footnote}]\\
\textsuperscript{1}Key Laboratory of Knowledge Engineering with Big Data (Hefei University of Technology), \\Ministry of Education, China\\
\textsuperscript{2}School of Computer Science and Information Engineering, Hefei University of Technology, China\\
\textsuperscript{3}Nanjing University of Science and Technology, China\\
\texttt{zxr@mail.hfut.edu.cn, jiapu.wang@njust.edu.cn,}\\ \texttt{manzonghuang@mail.hfut.edu.cn, wugq@hfut.edu.cn, xwu@hfut.edu.cn}
}

\iclrfinalcopy 
\begin{document}

\maketitle
\begin{abstract}
Knowledge Graph Reasoning (KGR) aims to discover latent facts by leveraging the structural evidence available in KGs, posing a challenge to the structural semantic understanding capability of KGR models. 
Recent studies have demonstrated that Large Language Models (LLMs) can achieve remarkable progress on KGR tasks via flexible in-context learning. 
However, the inherent representation inconsistency between KG structural context and LLM parametric knowledge remains inadequately addressed. This limitation prevents LLMs from effectively perceiving reasoning evidence that aligns with KG constraints, which undermines both the effectiveness and faithfulness of reasoning. We refer to this problem as \textit{reasoning evidence perception drift} of LLMs over KGs. 
To address this problem, we propose a Structure-Internalized Rule Language Model (SIRLM), which centers on structural rule generation to couple the parametric learning of structural knowledge with the faithfulness evaluation of reasoning logic, enabling LLMs to anchor tightly to KG-grounded evidence. 
Specifically, we first design a Structure-Internalized Rule Generator (SIRG), which incorporates an in-context learning block augmented with a structural relation memory to coordinate structural and parametric knowledge. 
Furthermore, we equip SIRG with a KG tokenizer based on structural invariance learning and a neuro-symbolic reasoner based on rule-constrained message propagation. These components provide SIRG with learnable structural representations and faithful rule-execution feedback, respectively. 
Our SIRLM can be seamlessly integrated into standard LLM training paradigms, such as SFT and GRPO. 
Extensive experiments against 17 state-of-the-art KGR methods on 36 datasets demonstrate the significant superiority of SIRLM\footnote{Our source codes are available at \url{https://github.com/lazyloafer/SIRLM}}.
\end{abstract}
\section{Introduction}
\label{Introduction}
\vspace{-2mm}
Knowledge Graphs (KGs)~\citep{KGSurvey} represent real-world facts through structural triplets composed of entities and relations, maintaining a unified framework for storing and querying large-scale knowledge. However, KGs are generally incomplete, which leaves many potential facts unobserved. Consequently, Knowledge Graph Reasoning (KGR)~\citep{KGRSurvey} has been proposed to leverage the available KG context to infer missing facts, which provides more sufficient evidence for knowledge-driven applications~\citep{Relink,rog}.
\par Traditional KGR methods mainly focus on knowledge embedding~\citep{RotatE} and logical rule learning~\citep{DRUM}, which achieve efficient fact prediction via structural representation learning and interpretable reasoning~\citep{GaussianPath}, respectively. However, these methods generally rely on specific contextual instances within a KG. When confronted with an unfamiliar KG, they often require complex rule mining or model retraining, which makes it difficult for these methods to generalize to more challenging KGR scenarios~\citep{ANet,InGram}.
\par Recently, Large Language Models (LLMs)~\citep{LLMSurvey}, pre-trained on massive natural language corpora, have demonstrated outstanding performance in various reasoning tasks. Latest studies have confirmed that LLMs can achieve breakthrough success on KGR tasks~\citep{KGGPT,LLM-DA,QIPP}. With the support of flexible in-context learning capabilities, LLMs integrate inherent parametric knowledge with structural context of KGs and exhibit remarkable knowledge emergence~\citep{Roadmap}, thereby discovering new facts in KGs that are not explicitly observed. More importantly, compared with traditional KGR methods, LLM-based approaches show stronger transferability when facing previously unseen KG structures in inductive reasoning settings~\citep{PROLINK,MKGL,KRLM}.
\begin{figure*}[t!]
  \centering
  \includegraphics[width=\linewidth]{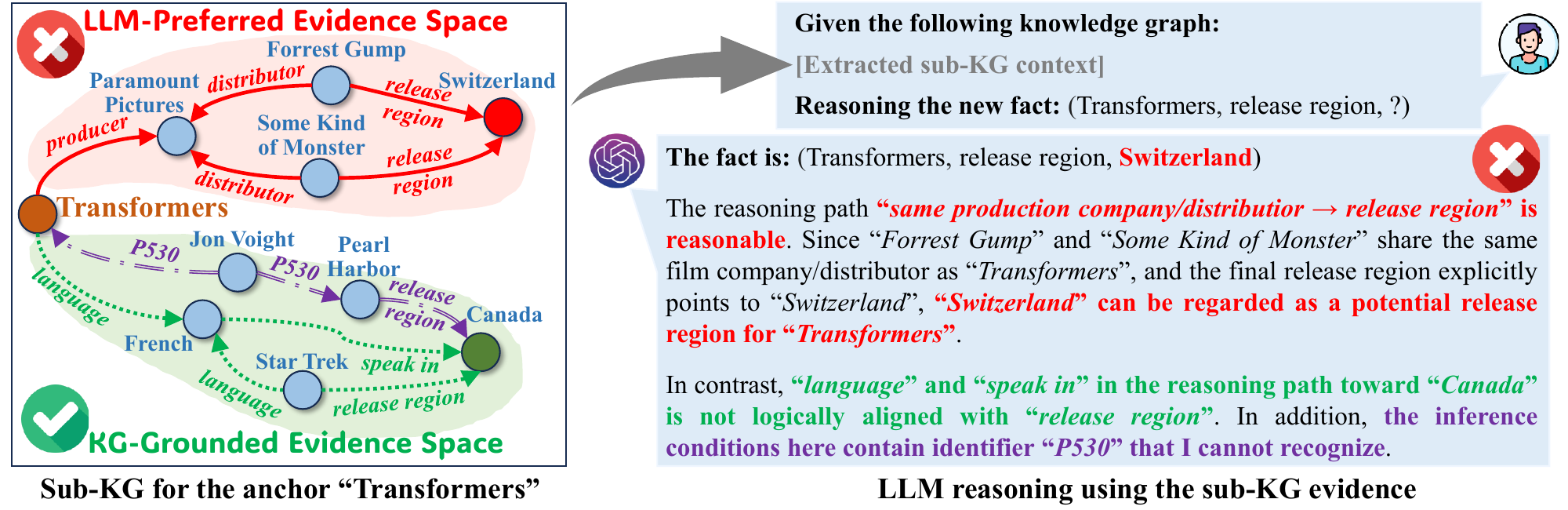}
\vspace{-7mm}
  \caption{\label{fig_challenge}Case of reasoning evidence perception drift of LLMs over KGs. LLM select incorrect reasoning evidence based on its semantic preference, resulting in its inability to follow KG-grounded reasoning logic to arrive at correct reasoning conclusions.}
\vspace{-6mm}
\end{figure*}
\par Despite significant accomplishments, existing LLM-based KGR methods still struggle to address the representation inconsistency between structural and parametric knowledge~\citep{KGLLMGap}, which constrains the ability of LLMs to grasp structural KG context during reasoning. This limitation is reflected in the fact that most LLMs tend to select reasoning evidence from a KG based on their pre-trained semantic preferences, rather than strictly following the structural constraints induced from the KG. As a result, LLMs are inclined to generate reasoning logic that appears semantically plausible but is difficult to ground in a KG, leading to the reasoning bias and the faithfulness degradation of results. We refer to this issue as the \textbf{\textit{reasoning evidence perception drift}} problem of LLMs over KGs, meaning that inconsistencies in knowledge representation hinder LLMs from identifying reasonable evidence from KGs, thereby limiting the effectiveness of LLM reasoning.
\par Figure \ref{fig_challenge} illustrates the harm that such a perception shift brings to LLMs. In this case, LLMs tend to accept the assertion “\textit{films produced or distributed by the same company are likely to be released in the same region}” based on their semantic preferences. As a result, the evidence space is anchored to the sub-KG marked in \textcolor{red}{\textbf{red}} in Figure \ref{fig_challenge}. However, the structural rule that can be explicitly induced from the KG, namely \textit{release\_region(X, Y) = language(X, Z) $\land$ speak\_in(X, Z)}, is instead misjudged as a weakly related evidence space (marked in \textcolor{green}{\textbf{green}} in Figure \ref{fig_challenge}). In addition, for relation paths represented by anonymized identifiers (marked in \textcolor{myPurple}{\textbf{purple}} in Figure \ref{fig_challenge}), the absence of natural language semantics often leads LLMs to deliberately ignore this structural evidence that could potentially support correct reasoning. This preference for pre-trained semantic cues and the neglect of structural context cause the LLM's reasoning process to deviate from the correct KG-grounded evidence space, ultimately producing erroneous results.
\par To address the aforementioned limitation, we propose a Structure-Internalized Rule Language Model (SIRLM), which enables LLMs to accurately perceive KG-grounded evidence for faithful reasoning. 
Overall, SIRLM consists of a KG tokenizer, a Structure-Internalized Rule Generator (SIRG) and a neuro-symbolic reasoner. In this framework, SIRG serves as the core module to generate structurally grounded rules. Therefore, we propose an in-context learning block based on a Structural Relation Memory (SRM) mechanism, which allows SIRG to align pre-trained knowledge with structural context during rule generation. Complementing SIRG, the KG tokenizer is built upon the Structural Invariance Learning (SIL) principle~\citep{ULTRA,MOTIF,TRIX}, which encodes KG elements into universal structure representations, enabling LLMs to grasp them in the form of new tokens without specific textual annotations. Furthermore, we design the neuro-symbolic reasoner as a rule executor based on Rule-Constraint Message Propagation (RCMP), which feeds back a rule faithfulness metric composed of reasoning conclusions to SIRG, forming a closed-loop optimization that reinforces structurally consistent reasoning.
\par In terms of model training, SIRLM can seamlessly integrate with standard Supervised Fine-Tuning (SFT) and post-training frameworks of LLMs. This capability endows SIRLM with significant transferable advantages across a wide range of cross-scenario KGR tasks.
\par Our main contributions can be summarized as follows:
\begin{enumerate}[left=0pt]
\item[$\bullet$] We propose a structure-internalized rule language model to mitigate the reasoning evidence perception drift of LLMs over KGs, enabling reliable LLM-based KG reasoning.
\item[$\bullet$] We design a structure-internalized rule generator with a structural relation memory to coordinate LLM knowledge with KG context, together with a SIL-based KG tokenizer and a RCMP-based rule reasoner for universal structure representation learning and faithful rule execution feedback. 
\item[$\bullet$] SIRLM can seamlessly integrate with standard LLM training frameworks such as SFT and GRPO, achieving generalizable reasoning across unknown KGs.
\item[$\bullet$] Extensive experimental results conducted on 36 datasets demonstrate that SIRLM exhibits remarkable reasoning capabilities in both transductive and inductive KGR scenarios.
\end{enumerate}
\vspace{-4mm}
\section{Related Work}
\label{Related Work}
\vspace{-2mm}
KGR studies have long focused on mining latent facts from KGs, providing interpretable knowledge support for downstream tasks. Traditional KGR methods adopt structural representation learning techniques by projecting entities and relations into appropriate embedding spaces, such as Euclidean~\citep{TransE}, complex~\citep{complex}, or manifold spaces~\citep{ManifoldE}, to infer latent associations between entities and relations. To ensure interpretability in reasoning, some studies introduce rule-based approaches that leverage techniques such as rule mining~\citep{RNNLogic,DRUM}, relational path retrieval~\citep{MINERVA}, or Markov decision processes~\citep{PRA} to perform KGR while providing explicit reasoning paths.
\par As KGs scale up, early embedding-based and rule-based methods struggle to adapt to more complex knowledge structures. Consequently, researchers have incorporated Graph Neural Networks (GNN) into KGR~\citep{NodePiece,GraIL,ANet}. Methods such as NBFNet~\citep{NBFNet}, RED-GNN~\citep{RED-GNN}, and InGram~\citep{InGram} exploit the message-passing mechanism of GNNs to capture fine-grained structural context about entities and relations, providing stronger evidence for comprehensive reasoning. Building on this, some studies~\citep{ULTRA,MOTIF,TRIX,KGICL} focus on improving the generalization of KGR models to out-of-distribution graph structures. These approaches aim to discover reusable minimal subgraph patterns within KGs and learn universal structural representations of entities and relations, \textit{i.e.}, Structural Invariance Learning (SIL)~\citep{ULTRA}. This technique enables unknown KG elements to be projected into learned structural patterns, thereby enhancing model performance in complex reasoning scenarios.
\par In recent years, LLM-based KGR has emerged as a new research focus~\citep{PROLINK,PPMT,KRLM}. By leveraging the strong knowledge emergence and in-context learning capabilities of LLMs~\citep{Roadmap}, these approaches uncover deeper facts in KGs. For example, methods such as KICGPT~\citep{KICGPT} and ChatRule~\citep{ChatRule} adopt the \textit{LLM-planning and KG-retrieval} paradigm to prompt LLMs in performing structural reasoning over given sub-KGs. Meanwhile, approaches such as KoPA~\citep{KoPA}, MKGL~\citep{MKGL}, and KRLM~\citep{KRLM} rely on fine-tuning techniques to inject structural knowledge into the pre-trained space of LLMs, aligning their parametric knowledge with the structural knowledge of KGs. Despite significant development, existing LLM-based methods still face the challenge in handling the widely observed KG-LLM representation gaps. Under this condition, structural knowledge that is beneficial for reasoning in KGs is often regarded by LLMs as irrelevant evidence, leading to a phenomenon known as \textit{evidence perception drift}. 
\par The aforementioned challenge have prompted us to propose a more refined mechanism for internalizing structural knowledge, which uses structural rule generation as a feedback to guide LLMs in perceiving grounded structural logic over KGs, thereby improving model reasoning.
\vspace{-2mm}
\section{Preliminaries}
\label{Preliminaries}
\vspace{-2mm}
In this section, we introduce the background and key conceptions of our study.
\vspace{-2mm}
\subsection{Knowledge Graph Reasoning}
\label{Preliminaries 1}
\vspace{-2mm}
We define a KG as $\mathcal{G}=(\mathcal{E},\mathcal{R},\mathcal{T})$, where $\mathcal{E}$ and $\mathcal{R}$ denote the sets of entities and relations, respectively. $\mathcal{T}=\{<e_h,r_q,e_t>|e_h,e_t\in\mathcal{E},r_q\in\mathcal{R}\}$ is the set of triples. Each triple represents a factual statement indicating that the head entity $e_h$ and the tail entity $e_t$ are connected via relation $r_q$.
\par KGR aims to infer new facts of the form $<e_h,r_q,?>\notin\mathcal{T}$. Given a training KG $\mathcal{G}_{tr}=(\mathcal{E}_{tr},\mathcal{R}_{tr},\mathcal{T}_{tr})$ and a inference KG $\mathcal{G}_{inf}=(\mathcal{E}_{inf},\mathcal{R}_{inf},\mathcal{T}_{inf})$, KGR tasks are typically categorized into \textbf{transductive} ($\mathcal{E}_{tr}=\mathcal{E}_{inf}$, $\mathcal{R}_{tr}=\mathcal{R}_{inf}$, and $\mathcal{T}_{tr}\neq\mathcal{T}_{inf}$) and \textbf{inductive} setting 
($\mathcal{E}_{tr}\neq\mathcal{E}_{inf}$ or $\mathcal{R}_{tr}\neq\mathcal{R}_{inf}$, and $\mathcal{T}_{tr}\neq\mathcal{T}_{inf}$).
The inductive setting requires KGR models to generalize to previously unseen entities or relations, thereby ensuring strong extrapolation and generalization capabilities.
\vspace{-2mm}
\subsection{Structural Invariance Learning of KG}
\label{Preliminaries 2}
\vspace{-2mm}
SIL of KG derives universal structural representations for unseen entities and relations through a relational graph and a Neural Bellman-Ford Network (NBFNet)~\citep{NBFNet}.
\par \textbf{The relational graph} is designed to characterize the structural motifs between relations in a KG. Let the relational graph be defined as $\mathcal{G}_r=(\mathcal{R},\mathcal{R}^*,\mathcal{T}^*)$, where $\mathcal{R}$ corresponds to the set of relations in the original KG $\mathcal{G}$ and serves as the node set of $\mathcal{G}_r$. $\mathcal{R}^*$ denotes the set of motif edges to connect relation nodes. The detailed definition of motif edges is provided in \textbf{Appendix~\ref{appendixRelationalGraph}}.
\par Under this inductive paradigm, any relation in an arbitrary KG can be mapped into $\mathcal{G}_r$. By aggregating these motif edges, we can obtain the universal structural representation for any relation.
\par \textbf{NBFNet} is a GNN that generalizes path formulations over graph nodes. Given a query triple $<e_h,r_q,?>$, we first perform $\text{NBFNet}(U,\bm{R}^*, \mathcal{G}_r)$, an $N$-layer NBFNet, over $\mathcal{G}_r$:
\begin{equation}
\label{eq1}
\small
\begin{aligned}
\text{NBFNet}(U,\bm{R}^*, \mathcal{G}_r)=
\begin{cases}
\bm{r}^{(0)}_{j|q} = \text{\textsc{Init}}(r_j, U),\hspace{2mm}\text{where }U=\{<r_q, \mathbf{1}^d>\}, r_q,r_j\in\mathcal{R},\\
\bm{r}^{(n)}_{j|q} = \text{\textsc{Up}}\Big(\bm{r}_{j|q}^{(n-1)},\text{\textsc{Agg}}\big(\{\text{\textsc{Msg}}(\bm{r}_{z|q}^{(n-1)},\bm{r}^*)|r_z\in\mathcal{N}_{r^*}(r_j)\}|r^*\in\mathcal{R}^*,\bm{r}^*\in\bm{R}^*\big)\Big)\\
\end{cases}
\end{aligned}
\end{equation}
where $U$ is a tuple set that stores conditions that satisfy non-zero initialization nodes, $\textbf{1}^d$ is a $d$-dimensional full-one vector, $n\in[1, N]$ is the layer index, and $\bm{R}^*\in\mathbb{R}^{|\mathcal{R}^*|\times d}$ is the randomly initialized embeddings of motif edges. $\text{\textsc{Init}}(\cdot)$, $\text{\textsc{Msg}}(\cdot)$, $\text{\textsc{Agg}}(\cdot)$, and $\text{\textsc{Up}}(\cdot)$ are an indicator function, the DistMult message propagation~\citep{DistMult}, the summation aggregation operation, and a Multi-Layer Perceptron (MLP), respectively. The details of Eq. (\ref{eq1}) is provided in Eq. (\ref{eq17}).
\par According to Eq. (\ref{eq1}), we obtain the structural representations of all relations conditioned on $r_q$, denoted as $\bm{R}_{|q}=\text{NBFNet}(U,\bm{R}^*, \mathcal{G}_r)$. Based on this, we further compute the structural representations of all entities conditioned on $e_h$ as $\bm{E}_{|h}$:
\begin{equation}
\label{eq2}
\small
\begin{aligned}
\bm{E}_{|h}=\text{NBFNet}(\{<e_h, \bm{r}_{q|q}^{(N)}>\},\bm{R}_{|q}, \mathcal{G}), \hspace{2mm}\bm{r}_{q|q}^{(N)}\in\bm{R}_{|q}.
\end{aligned}
\end{equation}
The details of Eq. (\ref{eq2}) is provided in Eq. (\ref{eq18}).
\par Finally, we compute scores for the candidate tail entities of the query triple $<e_h,r_q,?>$:
\begin{equation}
\label{eq3}
\small
\begin{aligned}
s_{\text{KG}}^{(i)}=\mathcal{S}_{\text{KG}}(\bm{e}_{i|h}^{(N)}||\bm{r}_{q|q}^{(N)})\hspace{2mm}i\in[1, |\mathcal{E}|],
\end{aligned}
\end{equation}
where $\bm{e}_{i|h}\in\bm{E}_{|h}$ and $\mathcal{S}_{\text{KG}}:\mathbb{R}^{2d}\rightarrow\mathbb{R}^{1}$ is a scoring function.
\vspace{-2mm}
\subsection{In-context learning and next-token prediction of LLM}
\label{Preliminaries 3}
\vspace{-2mm}
\textbf{In context learning:} Let $X$ be an instruction and $\text{TKN}_{\text{LLM}}$ denote the pre-trained token embedding table of LLM. By looking up $X$ in $\text{TKN}_{\text{LLM}}$, the instruction is transformed into a sequence $\text{TKN}_{\text{LLM}}[X]=\bm{X}\in\mathbb{R}^{L\times F}$ with $L$ $F$-dimensional token embeddings. The LLM aggregates each token embedding $\bm{x}_i\in\bm{X}$ together with the embeddings of all preceding tokens through the self-attention mechanism to perform in-context learning, thereby producing the hidden state of $\bm{x}_i$, defined as $\bm{h}_i=\text{InCon}(\bm{x}_i,\bm{X}_{\le i})$:
\vspace{-2mm}
\begin{equation}
\label{eq4}
\small
\begin{aligned}
\text{InCon}(\bm{x}_i,\bm{X}_{\le i})=\text{FFN}\Big(\text{softmax}\big(\frac{f_Q(\bm{x}_i)[f_K(\bm{X}_{\le i})]^{\text{T}}}{\sqrt{F}}\big)f_V(\bm{X}_{\le i})\Big),
\end{aligned}
\end{equation}
where $f_{\{Q,K,V\}}(\cdot):\mathbb{R}^F\rightarrow\mathbb{R}^F$ are pre-trained linear layers and $\text{FFN}(\cdot):\mathbb{R}^F\rightarrow\mathbb{R}^F$
is a pre-trained Feed Forward Network (FFN).
\par \textbf{Next-token prediction:} Based on Eq. (\ref{eq4}), the new token generated by LLM conditioned on the input instruction $X$ can be expressed as:
\begin{equation}
\label{eq5}
\small
\begin{aligned}
\hat{\bm{x}}_{z+1} \leftarrow \underset{\hat{\bm{x}}\in \text{TKN}_{\text{LLM}}}{\text{arg}\max}P_{\bm{\Theta}}(\hat{\bm{x}}|\text{InCon}(\hat{\bm{x}}_z,[\bm{X}:\hat{\bm{X}}_{\le z}])),
\end{aligned}
\end{equation}
where $\hat{\bm{X}}_{\le z}\in\mathbb{R}^{z\times F}$ is the embeddings of the first $z$ generated tokens and $[:]$ denotes a row-wise concatenation operation. $\text{InCon}(\hat{\bm{x}}_z,[\bm{X}:\hat{\bm{X}}_{\le z}])$ represents the hidden state of the $z$-th generated token $\hat{\bm{x}}_z\in\hat{\bm{X}}_{\le z}$, which is projected through a MLP module parameterized by $\bm{\Theta}$ to produce the generation probability of the ($z$+1)-th token.
\begin{figure*}[t!]
\vspace{-10pt}
  \centering
  \includegraphics[width=\linewidth]{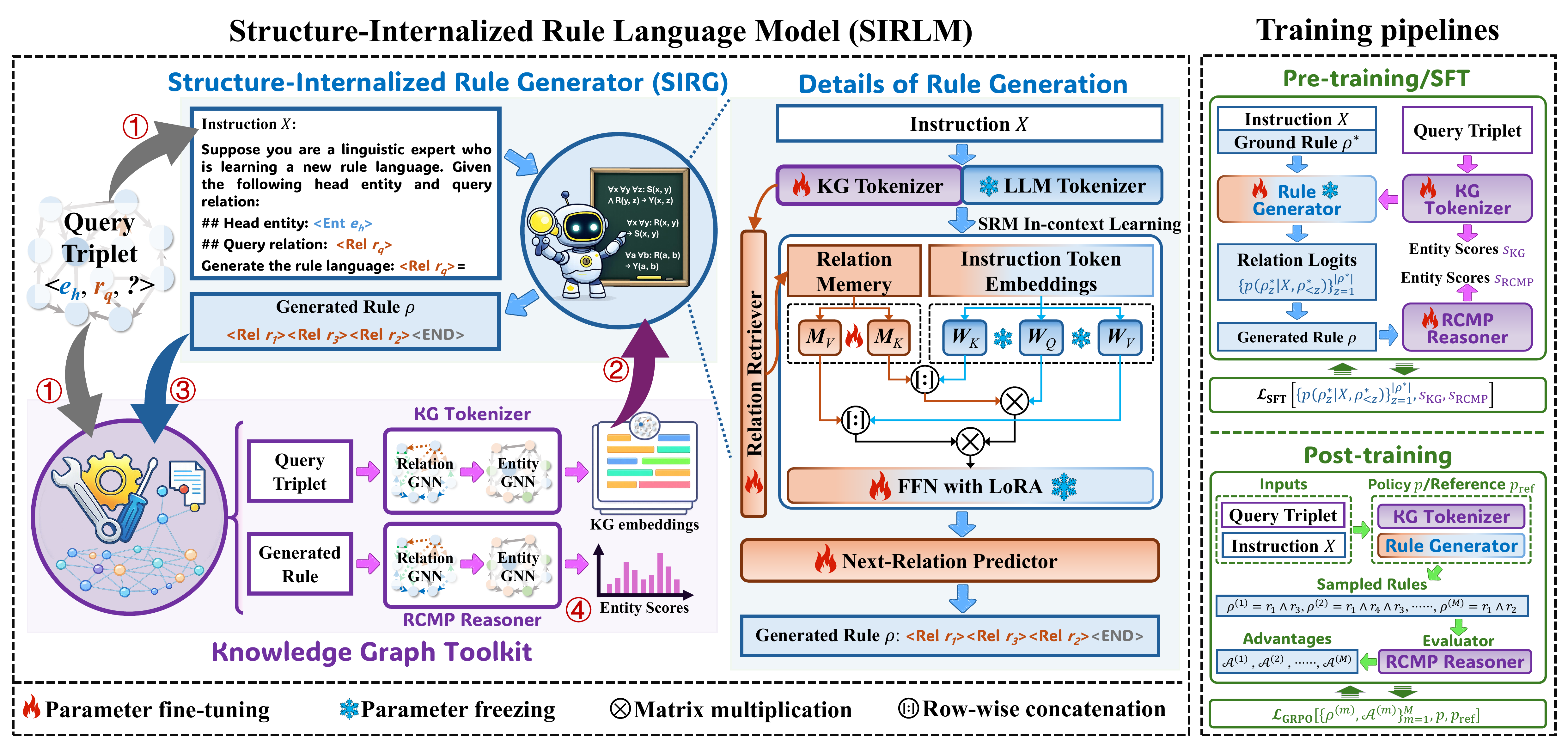}
\vspace{-6mm}
  \caption{\label{fig_model}Overall architecture and training framework of SILRM. Given a query triplet, we first \textcolor{darkred}{\scalebox{0.8}{\Circled{1}}} convert it into an instruction $X$ and input the triplet into the KG tokenizer to obtain structural embeddings of entities and relations. These structural representations are then \textcolor{darkred}{\scalebox{0.8}{\Circled{2}}} tokenized along with the LLM pre-trained tokenizer for $X$. Next, SIGR reads the aforementioned tokens and \textcolor{darkred}{\scalebox{0.8}{\Circled{3}}} generates structural rules for the RCMP reasoner. Afterwards, \textcolor{darkred}{\scalebox{0.8}{\Circled{4}}} the reasoner executes rules and performs neuro-symbolic reasoning, ultimately obtaining scores for candidate entities.}
\vspace{-4mm}
\end{figure*}
\vspace{-2mm}
\section{Methodology}
\label{Methodology}
\vspace{-2mm}
In this section, we elaborate on the proposed SILRM in detail. As illustrated in Figure~\ref{fig_model}, SILRM consists of a structure-internalized rule generator and a KG toolkit. In the following, we describe our method from four aspects: the construction of the query instruction and the KG tokenizer (\textbf{Section~\ref{Methodology1}}), LLM-based in-context learning and rule generation (\textbf{Section~\ref{Methodology2}}), and rule reasoning (\textbf{Section~\ref{Methodology3}}). Finally, we introduce how our framework seamlessly integrates with LLM SFT and post-training strategies in \textbf{Section~\ref{Methodology4}}.
\vspace{-2mm}
\subsection{Construction of Query instruction and KG tokenizer}
\label{Methodology1}
\vspace{-3mm}
Given a query triplet $<e_h,r_q,?>$, we design a query instruction $X$ suitable for generating structural rules using LLMs, whose detailed schema is provided in \textbf{Appendix~\ref{appendixInstruction}}.
\par In the instruction schema, strings of the form $<\text{Ent ID}>$ and $<\text{Rel ID}>$ are treated as indivisible structural token identifiers, which denote specific entities and relations in a KG, respectively. $<\text{END}>$ serves as a rule body termination. These structural tokens are embedded via a KG tokenizer $\text{TKN}_{\text{KG}}$ and are jointly processed with the pre-trained LLM tokenizer $\text{TKN}_{\text{LLM}}$ to encode the remaining text in instruction $X$. To unify the processing of structural and textual tokens in $X$, the tokenizer of the rule generator can be expressed as the combination of $\text{TKN}_{\text{KG}}$ and $\text{TKN}_{\text{LLM}}$:
\begin{equation}
\label{eq6}
\small
\begin{aligned}
\text{TKN}_{\text{RG}}=[\text{TKN}_{\text{LLM}}:\text{TKN}_{\text{KG}}],\hspace{2mm}
\text{TKN}_{\text{KG}}=f_{\text{up}}([\bm{r}_{\text{end}}:\bm{E}_{|h}:\bm{R}_{|q}]),
\end{aligned}
\end{equation}
where $\bm{r}_{\text{end}}\in\mathbb{R}^d$ is a randomly initialized embedding for $<\text{END}>$, $f_{\text{up}}(\cdot):\mathbb{R}^d\rightarrow\mathbb{R}^F$ is a trainable linear layer, and $\bm{R}_{|q}\in\mathbb{R}^{|\mathcal{R}|\times d}$ and $\bm{E}_{|h}\in\mathbb{R}^{|\mathcal{E}|\times d}$ are the structural embeddings of entities and relations obtained by Eqs. (\ref{eq1}) and (\ref{eq2}), respectively. It is important to note that the pre-trained $\text{TKN}_{\text{LLM}}$ remain frozen and only $\text{TKN}_{\text{KG}}$ is trained. This allows LLMs to perceive the structural representations of the KG through newly introduced tokens.
\vspace{-2mm}
\subsection{In-context Learning and Rule Generation}
\label{Methodology2}
\vspace{-2mm}
To enable LLMs to generate executable rules, we design an in-context learning strategy with the SRM mechanism and subsequently perform autoregressive relation token prediction, which ensures that the generated rules can be grounded in the corresponding KG, providing interpretable evidence for subsequent knowledge reasoning.
\par \textbf{In-context learning with structural relation memory:} To guide the LLM toward structure-internalized rule generation, we adapt the pre-trained attention layer by integrating a structural relation memory mechanism. Given a query triplet $<e_h,r_q,?>$, we first compute a relevance score for each relation $r_j\in\mathcal{R}$ and select the top-$K$ relation as the memory $\bm{R}_{\text{mem}}$:
\begin{equation}
\label{eq7}
\small
\begin{aligned}
\bm{R}_{\text{mem}}={\{\bm{r}_k\in\bm{R}_{|q}|s_{\text{rel}}^{(k)}\in \text{TopK}({\{s_{\text{rel}}^{(j)}\}}_{j=1}^{|\mathcal{R}|})\}}_{k=1}^K,\hspace{2mm}s_{\text{rel}}^{(j)}=\mathcal{S}_{\text{rel}}(\bm{e}_{h|h}^{(N)}||\bm{r}_{q|q}^{(N)}||\bm{r}_{j|q}^{(N)}),
\end{aligned}
\end{equation}
Here, $\bm{e}_{h|h}^{(N)}\in\bm{E}_{|h}$; $\bm{r}_{q|q}^{(N)},\bm{r}_{j|q}^{(N)}\in\bm{R}_{|q}$; and $\mathcal{S}_{\text{rel}}(\cdot):\mathbb{R}^{3d}\rightarrow\mathbb{R}^d$ is a MLP scorer. 
\par Let the instruction $X$ be tokenized by $\text{TKN}_{\text{RG}}$ into the embedding sequence $\bm{X}={\{\bm{x}_i\in\mathbb{R}^d\}}_{i=1}^L$, the in-context learning operation in Eq. (\ref{eq4}) can be improved to:
\begin{equation}
\label{eq8}
\small
\begin{aligned}
\hspace{-2mm}\text{InCon}(\bm{x}_i,\bm{R}_{\text{mem}},\bm{X}_{\le i})=\text{FFN}\Big(\text{softmax}\big(\frac{f_Q(\bm{x}_i)[f_K(\bm{X}_{\le i}):m_K(\bm{R}_{\text{mem}})]^{\text{T}}}{\sqrt{F}}\big)[f_V(\bm{X}_{\le i}):m_V(\bm{R}_{\text{mem}})]\Big),
\end{aligned}
\end{equation}
where $\text{FFN}(\cdot)$ incorporates a LoRA fine-tuning block~\citep{LoRA} in practical modeling and $m_{\{K,V\}}(\cdot):\mathbb{R}^d\rightarrow\mathbb{R}^F$ are trainable linear layers. Eq. (\ref{eq8}) allows the hidden state of last token $x_L$ to fully integrate the multi-modal token representations in $X$ and the relation memory, providing semantically complete activation for the subsequent rule generation. Its effectiveness analysis is provided in \textbf{Appendix~\ref{appendixSRMAttention}}.
\par \textbf{Next-relation prediction for rule generation:} Let the hidden state of the last token be $\bm{h}_L=\text{InCon}(\bm{x}_L,\bm{R}_{\text{mem}},\bm{X})$, we formulate rule generation as an autoregressive next-relation prediction process according to Eq. (\ref{eq5}). First, the distribution $P_{\bm{\Theta}}$ in Eq. (\ref{eq5}) can be concretized as:
\begin{equation}
\label{eq9}
\small
\begin{aligned}
P_{\bm{\Theta}}(\bm{r}|\bm{h}_L)=f_{\bm{\Theta}}(\bm{h}_L||\bm{r}),\hspace{2mm}\bm{r}\in[\bm{r}_{\text{end}}:\bm{R}_{|q}],
\end{aligned}
\end{equation}
where $f_{\bm{\Theta}}(\cdot):\mathbb{R}^{F+d}\rightarrow\mathbb{R}^1$ is a trainable MLP module, which scores each candidate relation embedding to determine its likelihood as the next generated token.
\par Then, relation tokens are generated sequentially in an autoregressive manner:
\begin{equation}
\label{eq10}
\small
\begin{aligned}
\bm{r}_{z+1}\leftarrow \underset{\bm{r}\in [\bm{r}_{\text{end}}:\bm{R}_{|q}]}{\text{arg}\max}P_{\bm{\Theta}}(\bm{r}|\bm{h}_{L+z}),\hspace{2mm}
\bm{h}_{L+z}=\text{InCon}\Big(f_{\text{up}}(\bm{r}_z),\bm{R}_{\text{mem}},[\bm{X}:f_{\text{up}}(\bm{R}_{\le z})]\Big),
\end{aligned}
\end{equation}
where $z\in[0,\varepsilon-1]$, $\varepsilon$ denotes the maximum rule length, $\bm{R}_{\le z}$ consists of the structural embeddings of the previously generated $z$ relations, and $f_{\text{up}}(\cdot):\mathbb{R}^d\rightarrow\mathbb{R}^F$ is a linear layer defined in Eq. (\ref{eq6}). The generation process terminates when $\bm{r}_{z+1}=\bm{r}_{\text{end}}$ or $z+1=\varepsilon$.
\par Through Eq. (\ref{eq10}), we can obtain the rule body $\rho=\bigwedge_{z=1}^{\varepsilon} r_z$ together with their hidden states ${\{\bm{h}_{L+z}\}}_{z=1}^{\varepsilon}$, which are used for subsequent rule reasoning for $<e_h, r_q, ?>$.
\vspace{-2mm}
\subsection{Reasoning with Rule-constraint Message Propagation}
\label{Methodology3}
\vspace{-2mm}
Unlike conventional SIL methods that initialize relational graph nodes solely with a full-one vector of the query relation, our RCMP mechanism incorporates the hidden states ${\{\bm{h}_{L+z}\}}_{z=1}^{\varepsilon}$ of the rule path $\bigwedge_{z=1}^{\varepsilon} r_z$ generated in Eq. (\ref{eq10}) as additional initialization signals. Specifically, we modify the initialization step of relation nodes in Eq. (\ref{eq1}) as follows:
\begin{equation}
\label{eq11}
\small
\begin{aligned}
\hat{\bm{R}}_{|q}=\text{NBFNet}(\{<r_q, \mathbf{1}^d>\}\cup{\{<r_z, f_{\text{down}}(\bm{h}_{L+z})>\}}_{z=1}^\varepsilon,\hat{\bm{R}}^*),
\end{aligned}
\end{equation}
where $f_{\text{down}}(\cdot):\mathbb{R}^F\rightarrow\mathbb{R}^d$ is a trainable linear layer. Similar to Eq. (\ref{eq1}), $\hat{\bm{R}}^*$ is a randomly initialized embeddings of motif edges. Eq. (\ref{eq11}) transforms the implicit semantic signals of the rule sequence into explicit structural constraints, which enhances the contextual representation and structural discriminability of the query relation. The details of Eq. (\ref{eq11}) is provided in Eq. (\ref{eq19}) and a theoretical analysis of RCMP is provided in \textbf{Appendix~\ref{appendixRCMP}}.
\par Based on Eq. (\ref{eq11}), we obtain $\hat{\bm{R}}_{|q}$ as the relation representations. Then, we compute entity representations $\hat{\bm{E}}_{|h}$ and score the candidate tail entities of $<e_h, r_q, ?>$:
\begin{equation}
\label{eq12}
\small
\begin{aligned}
\hat{\bm{E}}_{|h}=\text{NBFNet}(\{<e_h, \hat{\bm{r}}_{q|q}^{(N)}>\},\hat{\bm{R}}_{|q}, \mathcal{G}),
\end{aligned}
\end{equation}
\begin{equation}
\label{eq13}
\small
\begin{aligned}
s_{\text{RCMP}}^{(i)}=\mathcal{S}_{\text{RCMP}}(\hat{\bm{e}}_{i|h}^{(N)}||\hat{\bm{r}}_{q|q}^{(N)}), \hat{\bm{e}}_{i|h}\in\hat{\bm{E}}_{|h}^{(N)}, \hat{\bm{r}}_{q|q}^{(N)}\in\hat{\bm{R}}_{|q},
\end{aligned}
\end{equation}
where $\mathcal{S}_{\text{RCMP}}(\cdot):\mathbb{R}^{2d}\rightarrow\mathbb{R}^1$ is a MLP scorer. The details of Eq. (\ref{eq12}) is provided in Eq. (\ref{eq20}).
\vspace{-2mm}
\subsection{Training Frameworks}
\label{Methodology4}
\vspace{-2mm}
\textbf{During the pre-training or SFT phase}, SIRLM incorporates the KG tokenizer and RCMP reasoner into the standard LLM training framework with supervision signals. Let $\rho^*$ and $e_t$ represent the ground truth rule and target entity of $<e_h, r_q, ?>$, respectively. The pre-training/SFT loss for SIRLM can be expressed as:
\begin{equation}
\label{eq14}
\small
\begin{aligned}
\mathcal{L}_{\text{SFT}}=-\Big[\sum\limits_{z=1}^{|\rho^*|}\log{\big(p(\rho^*_z|X,\rho^*_{<z})\big)}+\log{(sc_{\text{KG}}^{(t)}sc_{\text{RCMP}}^{(t)})}-\frac{1}{|\mathcal{N}|}\sum\limits_{e_i\in\mathcal{N}}\log{\big((1-sc_{\text{KG}}^{(i)})(1-sc_{\text{RCMP}}^{(i)})\big)}\Big],
\end{aligned}
\end{equation}
where $p(\rho^*_z|X,\rho^*_{<z})$ represents the generation probability of the $z$-th relation token given the instruction $X$ and the first $z$-1 generated relation tokens. $sc_{\text{KG}}^{(t)}$ and $sc_{\text{RCMP}}^{(t)}$ are calculated by Eqs. (\ref{eq7}) and (\ref{eq13}), respectively, and $\mathcal{N}$ is the negative target set of $<e_h, r_q, ?>$.
\par \textbf{SIRLM is compatible with the post-training paradigm of existing LLMs}. When transferring the pre-trained SIRLM on a downstream KG, we use the GRPO framework to train SIRLM:
\begin{equation}
\label{eq15}
\small
\begin{aligned}
\hspace{-2mm}\mathcal{L}_{\text{GRPO}}=-\frac{1}{\sum\limits_{m=1}^M|\rho^{(m)}|}\sum\limits_{z=1}^{|\rho^{(m)}|}\Big[p(\rho^{(m)}_z|X,\rho^{(m)}_{<z})\mathcal{A}^{(m)}-\beta\text{KL}\big(p(\rho^{(m)}_z|X,\rho^{(m)}_{<z})|p_{\text{ref}}(\rho^{(m)}_z|X,\rho^{(m)}_{<z})\big)\Big],
\end{aligned}
\end{equation}
where $p$ and $p_{\text{ref}}$ represent the policy model and the reference model initialized by the pre-trained SIRLM, respectively. ${\{\rho^{(m)}\}}_{m=1}^M$ denotes the $M$ rules generated by SIRLM for $<e_h, r_q, ?>$ through a sampling strategy and $\beta$ is a fixed weight. The advantage $\mathcal{A}^{(m)}=\frac{\mathcal{W}^{(m)}-\text{mean}({\{\mathcal{W}^{(m)}\}}_{m=1}^M)}{\text{std}({\{\mathcal{W}^{(m)}\}}_{m=1}^M)}$ of each rule is represented as the standardization of the corresponding reward $\mathcal{W}^{(m)}$:
\begin{equation}
\label{eq16}
\small
\begin{aligned}
\mathcal{W}^{(m)}=\frac{\mathbb{I}\big(\phi(\rho^{(m)})\big)}{\text{Rank}\big(sc_{\text{RCMP}}^{(t)}\big)}-\mathbb{I}\big(\neg\phi(\rho^{(m)})\big),
\hspace{2mm}\phi(\rho)=
\begin{cases}
\text{True}, \rho_{<|\rho|}\in\mathcal{R} \text{ and } \rho_{|\rho|}=<\text{END}>\\
\text{False}, \text{else}
\end{cases}
\end{aligned}.
\end{equation}
Here, $\text{Rank}\big(sc_{\text{RCMP}}^{(t)}\big)$ denotes the score ranking of $e_t$ among all entities when $<e_h, r_q, ?>$ and the generated rule $\rho^{(m)}$ are given, where the score $sc_{\text{RCMP}}^{(t)}$ is obtained by Eq. (\ref{eq13}). $\phi(\cdot)$ is a boolean function used to examine the structure compliance of the generated rule. It should be noted that at this time, the RCMP reasoner, as a evaluation model in GRPO, does not participate in the parameter optimization of Eq. (\ref{eq15}).
\vspace{-3mm}
\section{Experiments}
\label{Experiments}
\vspace{-2mm}
In this section, we demonstrate SIRLM from the following research question: \textbf{RQ1}. Can SIRLM achieve significant performance across a wide range of transductive and inductive KGR scenarios? \textbf{RQ2}. Does the core modules of SIRLM play a crucial role in effect enhancement, including the Multi-Modal Query Instrction (MMQI) containing textual and structural tokens, the SRM mechanism in the SIRG module, and the RCMP reasoner? \textbf{RQ3}. Is SIRLM sensitive to hyperparameter settings, including the number of layers $N$ of NBFNet, the size $K$ of structural relation memory, and the number of rule samples $M$ of GRPO? \textbf{RQ4}. Is SIRLM applicable to different LLM backbones? 
\vspace{-2mm}
\subsection{Datasets, Baselines, and Experimental Settings}
\label{Setting}
\vspace{-2mm}
\textbf{Datasets}. To comprehensively evaluate SIRLM, we conduct experiments on 36 KGR datasets including four \textbf{Transductive} datasets (FB15k-237~\citep{FB15k237}, WN18RR~\citep{WN18RR}, CoDEx-M~\citep{CoDEx}, and NELL995~\citep{DeepPathNELL995}), 12 \textbf{Ind}uctive \textbf{E}ntity (\textbf{IndE}) datasets~\citep{GraIL} created from FB15k237, WN18RR, and NELL995, and 20 \textbf{Full}y \textbf{Ind}uctive (\textbf{FullInd}) datasets created from FB15k237, NELL995, Wikidata68K~\citep{RAILD}, and MTDEA~\citep{MTDEA}. Detailed dataset descriptions are provided in \textbf{Appendix~\ref{appendixDataset}}.
\par We perform SIRLM on the aforementioned datasets using four paradigms: End-to-End (E2E) training from scratch, Pre-Training (PT), SFT, and GRPO post-training. The training settings for each paradigm are detailed in \textbf{Appendix~\ref{appendixDataset}}.
\par \textbf{Baselines}. We compare SIRLM with \textbf{(1) conventional KG embedding methods} (TransE~\citep{TransE}, RotatE~\citep{RotatE}, and TuckER~\citep{TuckER}), \textbf{(2) rule-based methods} (NeuralLP~\citep{NeuralLP}, DRUM~\citep{DRUM}, and RNNLogic~\citep{RNNLogic}), \textbf{(3) GNN-based methods} (NBFNet~\citep{NBFNet}, RED-GNN~\citep{RED-GNN}, ULTRA~\citep{ULTRA}, and MOTIF~\citep{MOTIF}), and \textbf{(4) LLM-based methods} (KICGPT~\citep{KICGPT}, LSA~\citep{LSA}, ChatRule~\citep{ChatRule}, MKGL~\citep{MKGL}, KG-FIT~\citep{KG-FIT}, FtG~\citep{FtG}, and KRLM~\citep{KRLM}).
\begin{table}[t]
\setlength\tabcolsep{3.5pt}
\renewcommand{\arraystretch}{1.1}
\tiny
\vspace{-10pt} 
\newcommand{\tabincell}[2]{\begin{tabular}{@{}#1@{}}#2\end{tabular}}
\caption{\label{TabMainResults}
The overall performance of various methods on different datasets, where the MRR and Hit10 in the IndE and FullInd scenarios are summarized as average values. The colored cells represent the \colorbox{myRed!80}{best}, \colorbox{myOrange!80}{second-best}, and \colorbox{myYellow!80}{third-best} values, respectively. “–” indicates that the experimental results are unavailable, and “NA” indicates that a model is not applicable to a KGR task.
}
\centering
\begin{threeparttable}
\begin{tabular}{ c | c | c  c  c  c  c  c  c  c | c  c  c  c }
\hline
\multirow{2}{*}[-0.05in]{\tabincell{c}{\textbf{Type}}} & \multirow{2}{*}[-0.05in]{\tabincell{c}{\textbf{Methods}}} & \multicolumn{2}{c}{\raisebox{-0.5ex}{\textbf{FB15k237}}} & \multicolumn{2}{c}{\raisebox{-0.5ex}{\textbf{CoDEx-M}}} & \multicolumn{2}{c}{\raisebox{-0.5ex}{\textbf{WN18RR}}} & \multicolumn{2}{c}{\raisebox{-0.5ex}{\textbf{NELL995}}} & \multicolumn{2}{|c}{\raisebox{-0.5ex}{\textbf{12 IndE Datasets}}} & \multicolumn{2}{c}{\raisebox{-0.5ex}{\textbf{20 FullInd Datasets}}} \\
\cmidrule(r){3-4}\cmidrule(r){5-6}\cmidrule(r){7-8}\cmidrule(r){9-10}\cmidrule(r){11-12}\cmidrule(r){13-14}
 &  & \textbf{MRR}$\uparrow$ & \textbf{Hit10}$\uparrow$ & \textbf{MRR}$\uparrow$ & \textbf{Hit10}$\uparrow$ & \textbf{MRR}$\uparrow$ & \textbf{Hit10}$\uparrow$ & \textbf{MRR}$\uparrow$ & \textbf{Hit10}$\uparrow$ & \textbf{MRR}$\uparrow$ & \textbf{Hit10}$\uparrow$ & \textbf{MRR}$\uparrow$ & \textbf{Hit10}$\uparrow$\\
\hline
\multirow{3}{*}[-0.02in]{\tabincell{c}{\textbf{Embedding}\\\textbf{methods}}} & TransE & 0.313 & 0.495 & 0.320 & 0.481 & 0.226 & 0.501 & 0.401 & 0.501 & NA & NA & NA & NA\\
& RotatE & 0.338 & 0.533 & 0.325 & 0.466 & 0.476 & 0.571 & 0.483 & 0.565 & NA & NA & NA & NA\\
& TuckER & 0.358 & 0.544 & 0.328 & 0.458 & 0.470 & 0.526 & 0.520 & 0.624 & NA & NA & NA & NA\\
\hline
\multirow{3}{*}[-0.02in]{\tabincell{c}{\textbf{Rule-based}\\\textbf{methods}}} & NeuralLP & 0.237 & 0.362 & 0.334 & 0.444 & 0.435 & 0.566 & 0.394 & 0.482 & 0.449 & 0.623 & NA & NA\\
 & DRUM & 0.343 & 0.516 & 0.312 & 0.438 & 0.486 & 0.586 & 0.532 & \cellcolor{myYellow!80}\textbf{0.662} & 0.458 & 0.621 & NA & NA\\
 & RNNLogic & 0.344 & 0.530 & 0.310 & 0.445 & 0.483 & 0.558 & 0.416 & 0.478 & - & - & NA & NA\\
\hline
\multirow{4}{*}[-0.02in]{\tabincell{c}{\textbf{GNN-based}\\\textbf{methods}}} & NBFNet & \cellcolor{myYellow!80}\textbf{0.415} & \cellcolor{myRed!80}\textbf{0.599} & 0.343 & 0.509 & 0.551 & \cellcolor{myOrange!80}\textbf{0.666} & 0.525 & 0.639 & 0.527 & 0.670 & 0.122 & 0.259\\
& RED-GNN & 0.374 & 0.558 & 0.342 & 0.499 & 0.533 & 0.624 & \cellcolor{myOrange!80}\textbf{0.543} & 0.651 & 0.504 & 0.648 & 0.151\tnote{$\bullet$} & 0.242\tnote{$\bullet$}\\
& ULTRA & 0.368 & 0.564 & 0.372 & 0.525 & 0.480 & 0.614 & 0.509 & \cellcolor{myOrange!80}\textbf{0.660} & 0.565 & 0.724 & 0.366 & 0.529\\
& MOTIF & 0.357 & 0.550 & 0.361 & 0.517 & 0.529 & 0.628 & 0.514 & 0.655 & 0.582 & 0.739 & 0.373 & 0.535\\
\hline
\multirow{7}{*}[-0.05in]{\tabincell{c}{\textbf{LLM-based}\\\textbf{methods}}} & KICGPT & 0.412 & 0.554 & - & - & 0.549 & 0.641 & - & - & NA & NA & NA & NA\\
& LSA & 	0.356 & 0.538 & - & - & 0.469 & 0.542 & 0.469 & 0.649 & - & - & - & -\\
& KG-FIT & 0.362 & 0.572 & - & - & \cellcolor{myOrange!80}\textbf{0.553} & \cellcolor{myRed!80}\textbf{0.695} & - & - & NA & NA & NA & NA\\
& ChatRule\tnote{$\dagger$} & 0.383 & 0.564 & 0.320 & 0.481 & 0.445 & 0.518 & 0.511 & 0.627 & 0.535 & 0.660 & 0.319 & 0.472\\
& FtG & 0.392 &  0.542 & \cellcolor{myRed!80}\textbf{0.395} & 0.473 & - & - & 0.538 & 0.626 & NA & NA & NA & NA\\
& MKGL\tnote{$\dagger$} & \cellcolor{myYellow!80}\textbf{0.415} & 0.591 & 0.355 & 0.518 & \cellcolor{myYellow!80}\textbf{0.552} & 0.656 & 0.530 & 0.649 & 0.577 & 0.721 & NA & NA\\
& KRLM\tnote{$\dagger$} & 0.394 & 0.568 & 0.367 & \cellcolor{myYellow!80}\textbf{0.526} & \cellcolor{myYellow!80}\textbf{0.552} & \cellcolor{myYellow!80}\textbf{0.659} & 0.533 & 0.651 & 0.578 & 0.737 & 0.379 & 0.540\\
\hline
\multirow{5}{*}[-0.03in]{\tabincell{c}{\textbf{Ours}}} & $\text{SIRLM}_{\text{E2E}}$ & \cellcolor{myOrange!80}\textbf{0.427} & \cellcolor{myRed!80}\textbf{0.599} & 0.368 & 0.525 & \cellcolor{myYellow!80}\textbf{0.552} & 0.644 & \cellcolor{myYellow!80}\textbf{0.541} & 0.642 & \cellcolor{myYellow!80}\textbf{0.588} & \cellcolor{myYellow!80}\textbf{0.741} & \cellcolor{myOrange!80}\textbf{0.383} & \cellcolor{myOrange!80}\textbf{0.545}\\
& $\text{SIRLM}_{\text{PT}}$ & 0.408 & \cellcolor{myYellow!80}\textbf{0.593} & 0.367 & 0.522 & 0.461 & 0.556 & 0.520 & 0.638 & 0.580 & 0.738 & 0.374 & 0.538\\
& $\text{SIRLM}_{\text{SFT}}$ & \cellcolor{myRed!80}\textbf{0.429} & \cellcolor{myRed!80}\textbf{0.599} & \cellcolor{myYellow!80}\textbf{0.379} & \cellcolor{myOrange!80}\textbf{0.531} & \cellcolor{myRed!80}\textbf{0.556} & 0.648 & \cellcolor{myRed!80}\textbf{0.549} & \cellcolor{myRed!80}\textbf{0.668} & \cellcolor{myRed!80}\textbf{0.599} & \cellcolor{myRed!80}\textbf{0.749} & \cellcolor{myRed!80}\textbf{0.388} & \cellcolor{myRed!80}\textbf{0.550}\\
& $\text{SIRLM}_{\text{GRPO}}$ & 0.414 & \cellcolor{myOrange!80}\textbf{0.595} & \cellcolor{myOrange!80}\textbf{0.383} & \cellcolor{myRed!80}\textbf{0.542} & 0.523 & 0.629 & \cellcolor{myYellow!80}\textbf{0.541} & 0.644 & \cellcolor{myOrange!80}\textbf{0.593} & \cellcolor{myOrange!80}\textbf{0.744} & \cellcolor{myYellow!80}\textbf{0.381} & \cellcolor{myYellow!80}\textbf{0.544}\\
\cline{2-14}
& \textcolor{upColor}{\textit{Avg. gain (\%)}}\tnote{$\star$} & \textcolor{upColor}{\textit{+6.66}} & \textcolor{upColor}{\textit{+5.90}} & \textcolor{upColor}{\textit{+4.13}} & \textcolor{upColor}{\textit{+5.77}} & \textcolor{upColor}{\textit{+6.92}} & \textcolor{upColor}{\textit{+5.11}} & \textcolor{upColor}{\textit{+5.45}} & \textcolor{upColor}{\textit{+6.01}} & \textcolor{upColor}{\textit{+6.84}} & \textcolor{upColor}{\textit{+6.64}} & \textcolor{upColor}{\textit{+10.30}} & \textcolor{upColor}{\textit{+12.05}}\\
\hline
\end{tabular}
\end{threeparttable}
\begin{tablenotes}
\item[$\bullet$] $\bullet$ RED-GNN can only obtain experimental results on 12 FullInd datasets partitioned from FB15k237, NELL995, and Wikidata68K.
\item[$\dagger$] $\dagger$ We reproduce ChatRule, MKGL, and KRLM on all datasets. 
\item[$\star$] $\star$ We calculate the average gain of the optimal value in the four training modes of SIRLM compared to all baselines.
\end{tablenotes}
\vspace{-7mm}
\end{table}
\begin{wrapfigure}[15]{r}{0.55\linewidth}
    \begin{centering}
      \includegraphics[width=\linewidth]{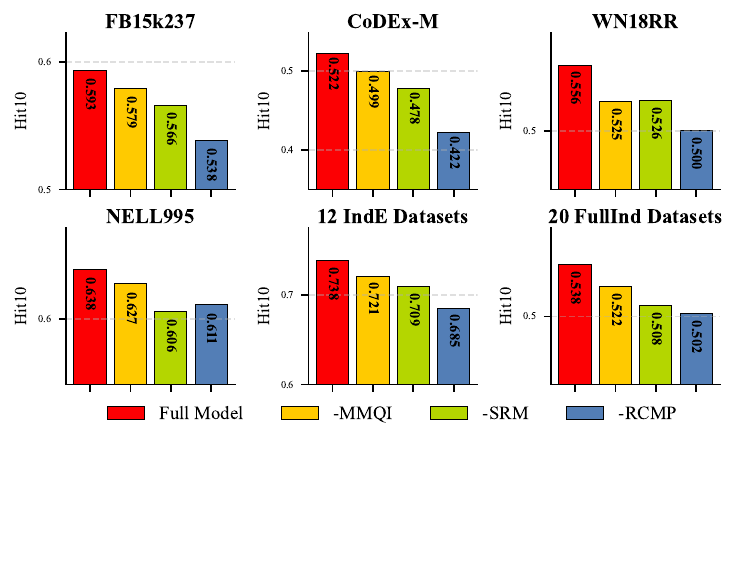}
\vspace{-6mm}
      \caption{\label{FigAblationResults}Hit10 of $\text{SIRLM}_{\text{PT}}$ variants on different datasets, where the results in the IndE and FullInd scenarios are summarized as average values.}
    \end{centering}
\vspace{-4mm}
\end{wrapfigure}
\par \textbf{Experimental settings}. Based on previous work~\citep{ULTRA}, we adopt Mean Recurrent Rank (MRR) and top-10 Hit rate (Hit10) as evaluation metrics. In the main experiment, we use Qwen2.5-1.5b as the backbone of the SIRG module. We pre-train and fine-tune SIRLM using 4 A100 (40GB) GPUs. The more detailed settings of model hyperparameters are provided in \textbf{Appendix~\ref{appendixHyperparameter}}.
\vspace{-2mm}
\subsection{Main Results (RQ1)}
\label{MainResults}
\vspace{-2mm}
Table~\ref{TabMainResults} reports the overall performance of various methods across 36 KGR datasets under transductive, IndE, and FullInd settings. Overall, our proposed SIRLM achieves consistently strong and competitive results, outperforming prior methods on most datasets and metrics. Compared with embedding and rule-based approaches, LLM-based methods demonstrate clear advantages. This is because LLMs are better at understanding structural context and leveraging it for effective fact inference. GNN-based methods, supported by their ability to learn invariant graph representations, can capture general structural semantics across different KGs and thus achieve meaningful progress on inductive tasks.
\par In comparison to other baselines, LLM-based methods generally provide a holistic performance. However, most of them primarily focus on the transductive setting. Consequently, we reproduce open-source LLM-based methods (ChatRule, MKGL, and KRLM) for inductive reasoning. These methods are built upon GPT-4o mini or LLaMA2-7B and achieve competitive performance. In contrast, our SIRLM, despite being based on a 1.5B-scale LLM, surpasses these LLM-based approaches. This is primarily attributed to its effective internalization of structural knowledge and stricter adherence to logical semantics. More detailed experimental analysis can be found in Appendixes~\ref{appendixMainResults}.
\begin{figure}
    \vspace{-10pt}  
    \begin{centering}
      \includegraphics[width=\linewidth]{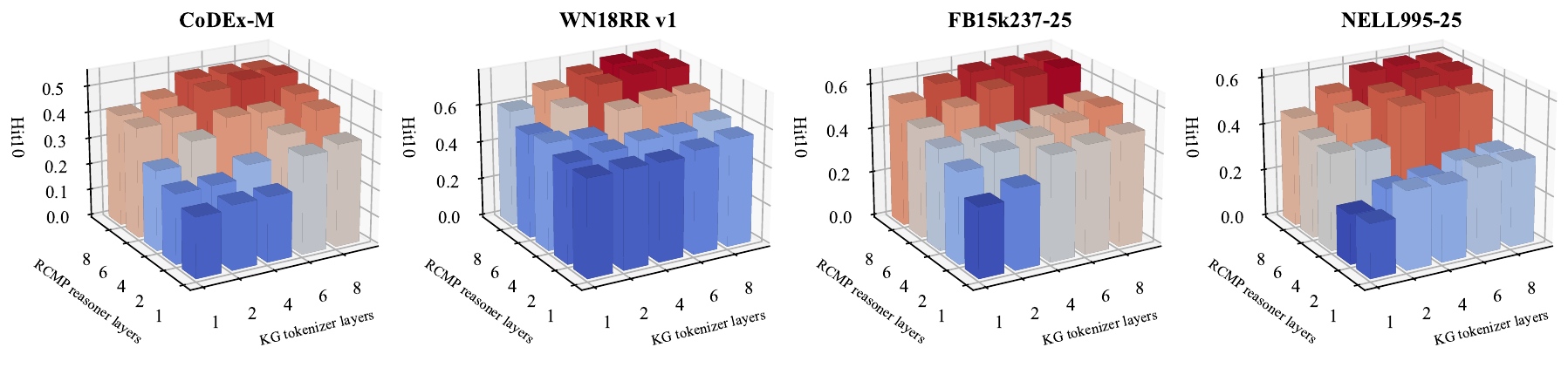}
\vspace{-6mm}
      \caption{\label{FigGNNLayers}Hit10 of $\text{SIRLM}_{\text{E2E}}$ with different NBFNet layers $N$ in KG tokenizer and RCMP reasoner.}
    \end{centering}
\vspace{-6mm}
\end{figure}
\vspace{-5mm}
\subsection{Ablation Experiments (RQ2)}
\label{Ablation Experiments}
\vspace{-3mm}
This section discusses the effectiveness of different modules in SIRLM. The experimental results are shown in Figure~\ref{FigAblationResults}. Overall, the effectiveness of each ablation variant is inferior to that of the full model, especially in structural knowledge learning modules such as “SRM” and “RCMP”. \textbf{Appendix~\ref{appendixMainResultsAbl}} provides detailed ablation variant settings and experimental results.
\vspace{-1mm}
\subsection{Parameter Analysis (RQ3)}
\label{Parameter Analysis}
\vspace{-2mm}
\begin{wrapfigure}[12]{r}{0.6\linewidth}
    \vspace*{-19pt}  
    \begin{centering}
      \includegraphics[width=\linewidth]{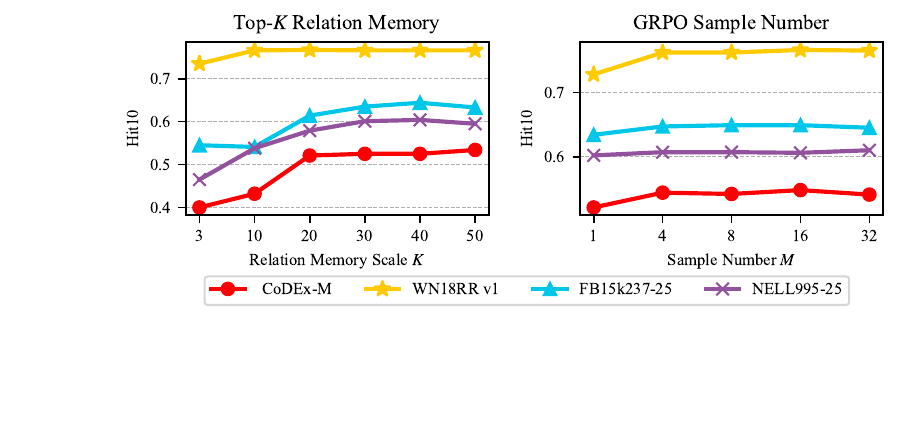}
\vspace{-7mm}
      \caption{\label{FigMemGRPOParameter}Hit10 of $\text{SIRLM}_{\text{E2E}}$ with different relation memory scale $K$ and $\text{SIRLM}_{\text{GRPO}}$ with different GRPO sample number $M$.}
    \end{centering}
\end{wrapfigure}
Figures~\ref{FigGNNLayers} and~\ref{FigMemGRPOParameter} illustrate the performance of SIRLM under different configurations of the NBFNet layers $N$, relation memory scale $	K$, and the number of GRPO samples $N$. When the layers in both the KG tokenizer and the RCMP reasoner is set to $N=6$, SIRLM achieves a good balance between reasoning accuracy and memory cost. $K$ is related to the relation scale in the KG. Therefore, we uniformly set $K=30$ in our experiments. When $M=1$, the group relative advantage in Eq. (\ref{eq15}) becomes ineffective, making it difficult for SIRLM to learn high-quality logical patterns, a phenomenon that is particularly pronounced in sparse KGs such as WN18RR v1 and CodeX-M. Therefore, we consistently set $M=8$ in our experiments.
\vspace{-2mm}
\subsection{Adaptability Analysis of LLM backbones (RQ4)}
\vspace{-2mm}
\label{LLMBackboneAnalysis}
\begin{wraptable}[8]{r}{0.54\textwidth}
\setlength\tabcolsep{1.5pt}
\scriptsize
\vspace*{-17pt}
\newcommand{\tabincell}[2]{\begin{tabular}{@{}#1@{}}#2\end{tabular}}
\caption{\label{TabLLMBackboneResults}The performance of $\text{SIRLM}_{\text{E2E}}$ with different LLM backbones.}
\vspace*{3pt}
\centering
\begin{threeparttable}
\begin{tabular}{ c | c  c  c  c  c  c  c  c }
\hline
\multirow{2}{*}[-0.05in]{\tabincell{c}{\textbf{LLM}\\\textbf{Backbone}}} & \multicolumn{2}{c}{\raisebox{-0.5ex}{\textbf{CoDEx-M}}} & \multicolumn{2}{c}{\raisebox{-0.5ex}{\textbf{WN18RR v1}}} & \multicolumn{2}{c}{\raisebox{-0.5ex}{\textbf{FB15k237-25}}} & \multicolumn{2}{c}{\raisebox{-0.5ex}{\textbf{NELL995-25}}} \\
\cmidrule(r){2-3}\cmidrule(r){4-5}\cmidrule(r){6-7}\cmidrule(r){8-9}
 & \textbf{MRR} & \textbf{Hit10} & \textbf{MRR} & \textbf{Hit10} & \textbf{MRR} & \textbf{Hit10} & \textbf{MRR} & \textbf{Hit10}\\
\hline
\textbf{Qwen2.5-0.5b} & 0.351 & 0.511 & 0.692 & 0.749 & 0.382 & 0.637 & 0.389 & 0.564\\
\textbf{Qwen2.5-1.5b} & 0.368 & 0.525 & 0.705 & 0.766 & 0.386 & 0.635 & 0.407 & 0.601\\
\textbf{Qwen2.5-7b} & 0.369 & 0.531 & 0.708 & 0.771 & 0.387 & 0.635 & 0.408 & 0.606\\
\textbf{Llama-2-7b} & 0.362 & 0.528 & 0.711 & 0.781 & 0.387 & 0.632 & 0.405 & 0.602\\
\hline
\end{tabular}
\end{threeparttable}
\end{wraptable}
This section evaluates the adaptability of $\text{SIRLM}_{\text{E2E}}$ across different LLM backbones. As shown in Table~\ref{TabLLMBackboneResults}, SIRLM remains relatively stable across most backbones, with only a slight performance decline on Qwen2.5-0.5B. We attribute this gap to the limited capacity of smaller LLMs to assimilate structural knowledge. As shown in Figure~\ref{FigLLMBackboneAnalysis} of \textbf{Appendix~\ref{appendixLLMBackboneAnalysis}}, Qwen2.5-0.5B exhibits the slowest convergence in rule-token generation accuracy, suggesting greater difficulty in aligning structural representations within its limited parameter space. As model capacity increases, SIRLM can more effectively internalize structural knowledge, leading to more stable reasoning performance.
\vspace{-3mm}
\section{Conclusion}
\vspace{-3mm}
This paper identifies a pervasive issue in LLM-based KGR, termed reasoning evidence perception drift, caused by the misalignment between KG structural representations and LLM parametric knowledge, which undermines both reasoning effectiveness and faithfulness. To address this issue, we propose the Structure-Internalized Rule Language Model (SIRLM), which integrates a structure-internalized rule generator with a KG tokenizer and a neuro-symbolic reasoner. SIRLM aligns parametric knowledge with KG structure through structural representation learning, rule generation, and faithfulness feedback, enabling evidence-grounded reasoning over KGs. Extensive experiments with 17 baselines on 36 KGR benchmarks show that SIRLM consistently outperforms existing methods across end-to-end training, pre-training, supervised fine-tuning, and post-training settings. \textbf{Appendix}~\ref{appendixLimitations} discusses its limitations and future directions.

\subsection*{AI use statement}
In this work, we have not used generative AI tools for any tasks requiring disclosure, and the remaining required disclosure tasks are not applicable to this work. We used generative AI tools only to edit the manuscript for grammar and readability. All AI-assisted edits were reviewed by the authors to ensure that they did not alter the technical content, scientific claims, or intended meaning. We take responsibility for the final content of this work, including text, claims, or artifacts produced with the aid of generative AI.



\subsection*{Reproducibility statement}
We confirm that our study has reproducibility. Specifically, we have first submitted our desensitized project on anonymous GitHub (\url{https://github.com/lazyloafer/SIRLM}). The detailed pseudocode of the algorithm is provided in \textbf{Appendix~\ref{appendixAlgorithm}}. In addition, we provide specific details of the experimental conclusions in the main text, including dataset partitioning (\textbf{Appendix~\ref{appendixDataset}}), hyperparameter settings (\textbf{Appendix~\ref{appendixHyperparameter}}), and ablation variant settings (\textbf{Appendix~\ref{appendixMainResultsAbl}}).

\bibliography{ref.bib}
\bibliographystyle{iclr2027_conference}
\newpage
\appendix
\section{Design Details of Query Instructions}
\label{appendixInstruction}
Given a query triplet $(e_h, r_q, ?)$, we first provide its schema of a query instruction below:
\begin{tcolorbox}[
  title=Schema of the Query Instruction for Rule Generation,
]
\centering
\begin{lstlisting}[
  basicstyle=\small,
  breaklines=true,
  breakindent=0pt,
  linewidth=\columnwidth,
  columns=fullflexible,
  escapeinside={(*}{*)},
  %aboveskip=0pt,   % 去掉listing上间距
  %belowskip=0pt    % 去掉listing下间距
]
Suppose you are a linguistic expert who is learning a new rule language. Given the following head entity and query relation:

### Head entity: (*$<$Ent $e_h$$>$*)

### Query relations: (*$<$Rel $r_q$$>$*)

Generate the following rule language: (*$<$Rel $r_q$$>$=$\underbrace{<\text{Rel } r_1><\text{Rel } r_3><\text{Rel } r_2><\text{END}>}_{\text{mask for generation}}$*)
\end{lstlisting}
\end{tcolorbox}
\par Our query instruction consists of a fixed text prompt along with structural representation placeholders that vary depending on the query triplet. Specifically, $<\text{Ent }e_h>$ and $<\text{Rel }r_q>$ correspond to the structural representations of $e_h$ and $r_q$, respectively. The masked portion includes the structural representations of atomic relations in the rule body, $<\text{Rel }r_1><\text{Rel }r_3><\text{Rel }r_2>$, as well as the rule termination token $<\text{END}>$, all of which are derived from $\text{TKN}_{\text{KG}}$ in Eq. (\ref{eq6}).
\par This instruction format integrates textual tokes with the structural representations of a KG, enabling the LLM to further learn the contextual semantics of entities and relations within the structural space. Moreover, since the structural representations provided by NBFNet inherently encode high-order graph information, we can deliver richer structural information to the LLM using fewer tokens. In addition, this instruction format does not rely on explicit textual names of entities and relations, making it suitable for anonymized KGR scenarios.
\par We compare the instruction lengths designed by several recent LLM-based KGR methods in Table~\ref{TabInstructionTokens}, demonstrating that our constructed instructions achieve more efficient utilization of computational resources.
\begin{table}[h]
\setlength\tabcolsep{10pt}
\renewcommand{\arraystretch}{1.1}
\small
\newcommand{\tabincell}[2]{\begin{tabular}{@{}#1@{}}#2\end{tabular}}
\caption{\label{TabInstructionTokens}
Average instruction length of MKG, KRLM, and SIRLM in 36 KGR datasets.
}
\vspace{2mm}
\centering
\begin{threeparttable}
\begin{tabular}{ c  c  c  c}
\hline
\tabincell{c}{\textbf{Model}} & MKGL~\citep{MKGL} & KRLM~\citep{KRLM} & SIRLM\\
\hline
\textbf{Avg. Length} & 115.66$\pm$3.65 & 120.50$\pm$4.12 & 50.73$\pm$0.67\\
\hline
\end{tabular}
\end{threeparttable}
\end{table}
\section{Relational Graph Construction}
\label{appendixRelationalGraph}
Unlike a typical KG, a relational graph~\citep{ULTRA,MOTIF} is used to describe the relative states between relations. As shown in Figure~\ref{FigRelationalGraph}, in our study, the motifs connecting relation nodes in a relational graph are essentially a set of relation-oriented hyperedges that is defined as $\mathcal{R}^*$ in Eq. (\ref{eq1}). 
\begin{figure}
    \begin{centering}
      \includegraphics[width=\linewidth]{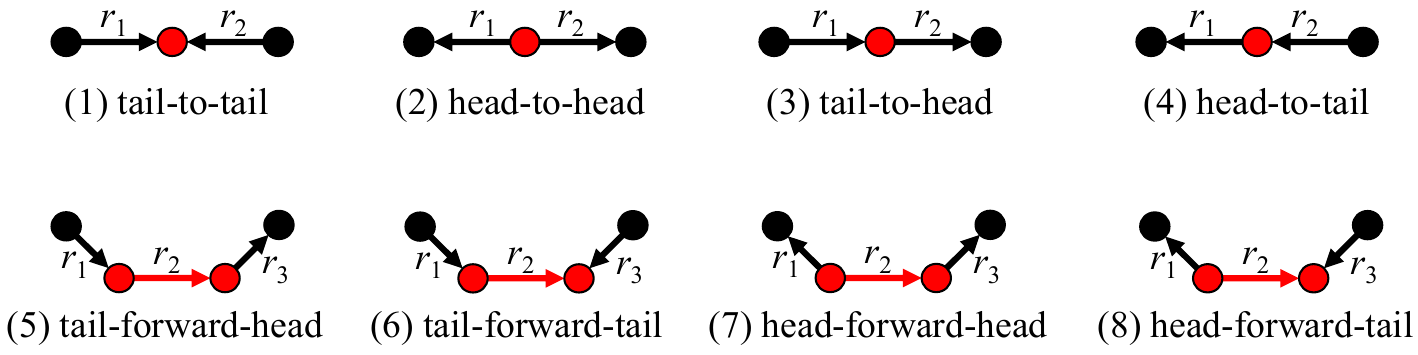}
\vspace{-5mm}
      \caption{\label{FigRelationalGraph}Motif edges of relations in a KG, whcih can be grouped into three binary edges and four ternary edges. A binary edge represents the state of the \textcolor{red}{entity} shared by two ordered relations, including \textbf{(1) tail-to-tail}: a shared tail entity, \textbf{(2) head-to-head}: a shared head entity, \textbf{(3) tail-to-head}: the tail entity of the previous relation being the head entity of the next relation, and \textbf{(4) head-to-tail}: the head entity of the previous relation being the tail entity of the next relation. A ternary edge represents the state of a \textcolor{red}{forward triplet} shared by two ordered relations, which is a hyperedge containing three relation nodes, including \textbf{(5) tail-forward-head}: the forward triplet as the tail of the previous relation and the head of the next relation, \textbf{(6) tail-forward-tail}: the forward triplet as the shared tail of two relations, \textbf{(7) head-forward-head}: the forward triplet as the shared head of two relations, and \textbf{(8) head-forward-tail}: the forward triplet as the head of the previous relation and the tail of the next relation.}
    \end{centering}
\end{figure}
\par In practical implementation, we use sparse matrices to construct the adjacency matrix of the relational graph. Giving a KG $\mathcal{G}=(\mathcal{E}, \mathcal{R}, \mathcal{T})$ defined as a graph with multiple directed edges, and its adjacency matrix can be represented as $\bm{A}\in\mathbb{R}^{\mathcal{E}\times\mathcal{R}\times\mathcal{E}}$. Subsequently, we construct the adjacency matrix by performing the maximum scatter operation on the head and tail node dimensions, resulting in two sparse matrices $\bm{A}_h\in\mathbb{R}^{\mathcal{E}\times\mathcal{R}}$ and $\bm{A}_t\in\mathbb{R}^{\mathcal{R}\times\mathcal{E}}$. $\bm{A}_h$ represents the out-degree edge of any relation from any node and $\bm{A}_t$ represents the in-degree edge of any relation pointing to any node. Then, the eight types of motif edges shown in Figure~\ref{FigRelationalGraph} can be obtained through a sparse matrix multiplication (spmm) operator:
\begin{equation}
\small
\nonumber
\begin{aligned}
&\text{Binary edges}=
\begin{cases}
\text{\textbf{Tail-to-tail} (\textit{t2t}): }\bm{A}_{t2t}=\text{spmm}(\bm{A}_t,\bm{A}_t^{\text{T}})\in\mathbb{R}^{\mathcal{R}\times\mathcal{R}}\\
\text{\textbf{Head-to-head} (\textit{h2h}): }\bm{A}_{h2h}=\text{spmm}(\bm{A}_h^{\text{T}},\bm{A}_h)\in\mathbb{R}^{\mathcal{R}\times\mathcal{R}}\\
\text{\textbf{Tail-to-head} (\textit{t2h}): }\bm{A}_{t2h}=\text{spmm}(\bm{A}_t,\bm{A}_h)\in\mathbb{R}^{\mathcal{R}\times\mathcal{R}}\\
\text{\textbf{Head-to-tail} (\textit{h2t}): }\bm{A}_{h2t}=\text{spmm}(\bm{A}_h^{\text{T}},\bm{A}_t^{\text{T}})\in\mathbb{R}^{\mathcal{R}\times\mathcal{R}}\\
\end{cases}\\
&\text{Ternary edges}=
\begin{cases}
\text{\textbf{Tail-forward-head} (\textit{tfh}): }\bm{A}_{tfh}=\text{spmm}(\bm{A}_t,\bm{A},\bm{A}_h)\in\mathbb{R}^{\mathcal{R}\times\mathcal{R}\times\mathcal{R}}\\
\text{\textbf{Tail-forward-tail} (\textit{tft}): }\bm{A}_{tft}=\text{spmm}(\bm{A}_t,\bm{A},\bm{A}_t^{\text{T}})\in\mathbb{R}^{\mathcal{R}\times\mathcal{R}\times\mathcal{R}}\\
\text{\textbf{Head-forward-heaed} (\textit{hfh}): }\bm{A}_{tft}=\text{spmm}(\bm{A}_h^{\text{T}},\bm{A},\bm{A}_h)\in\mathbb{R}^{\mathcal{R}\times\mathcal{R}\times\mathcal{R}}\\
\text{\textbf{Head-forward-tail} (\textit{hft}): }\bm{A}_{hft}=\text{spmm}(\bm{A}_h^{\text{T}},\bm{A},\bm{A}_t^{\text{T}})\in\mathbb{R}^{\mathcal{R}\times\mathcal{R}\times\mathcal{R}}\\
\end{cases}
\end{aligned}
\end{equation}
\section{Construction Details of the KG tokenizer and the RCMP Reasoner}
\label{appendixNBFNet}
Given that relational graphs contain ternary edges, there are certain differences in the construction details of NBFNet between the original KGs and the relational graphs in the KG tokenizer and the RCMP Reasoner.
\par For NBFNet on a relational graph, we need to consider the positional information of the relational nodes in the hyperedges. Therefore, Eq. (\ref{eq1}) can be concretized as
\begin{equation}
\label{eq17}
\small
\begin{aligned}
&\bm{r}^{(0)}_{j|q} = \mathbb{I}(r_j = r_q) * \bm{1}^d,\\
&\bm{r}^{(n)}_{j|q} = \sigma\Big(\bm{W}_r^{(n-1)}\Big[\bm{r}_{j|q}^{(n-1)}||\sum\limits_{r^*\in\mathcal{R}^*}\bm{r}^*\odot\big(\underset{r_z\in\mathcal{N}_{r^*}(r_j)}{\odot}(\bm{r}_{z|q}^{(n-1)}+\bm{p}_{z})\big)\Big]\Big),\bm{r}^*\in\bm{R}^*,
\end{aligned}
\end{equation}
where $\sigma(\cdot)$ is a ReLU activation function, $\odot$ is the Hadamard product operator, and $\bm{W}_r^{(n-1)}\in\mathbb{R}^{d\times d}$ and $\bm{p}_z\in\mathbb{R}^d$ is a trainable parameter matrix and a position embedding, respectively.
\par For the original KG, NBFNet only processes directed binary edges, so position information can be ignored. Therefore, Eq. (\ref{eq2}) can be concretized as
\begin{equation}
\label{eq18}
\small
\begin{aligned}
&\bm{e}^{(0)}_{i|h} = \mathbb{I}(e_i = e_h) * \bm{r}^{(N)}_{q|q},\\
&\bm{e}^{(n)}_{i|h} = \sigma\Big(\bm{W}_e^{(n-1)}\Big[\bm{e}_{i|h}^{(n-1)}||\sum\limits_{r_j\in\mathcal{R}}\sum\limits_{e_z\in\mathcal{N}_{r_j}(e_i)}\bm{r}_{j|q}^{(N)}\odot\bm{e}_{z|h}^{(n-1)}\Big]\Big),
\end{aligned}
\end{equation}
where $\bm{W}_e^{(n-1)}\in\mathbb{R}^{d\times d}$ is a trainable parameter matrix.
\par Similarly, Eqs. (\ref{eq11}) and (\ref{eq12}) can be concretized as Eqs. (\ref{eq19}) and (\ref{eq20}), respectively:
\begin{equation}
\label{eq19}
\small
\begin{aligned}
&\hat{\bm{r}}_{j|q} = \sum\limits_{<r, \bm{v}>\in U}\mathbb{I}(r_j = r) * \bm{v},\text{  where }U=\{<r_q, \mathbf{1}^d>\}\cup{\{<r_z, f_{\text{down}}(\bm{h}_{L+z})>\}}_{z=1}^\varepsilon\\
&\hat{\bm{r}}^{(n)}_{j|q} = \sigma\Big(\hat{\bm{W}}_r^{(n-1)}\Big[\hat{\bm{r}}_{j|q}^{(n-1)}||\sum\limits_{r^*\in\mathcal{R}^*}\hat{\bm{r}}^*\odot\big(\underset{r_z\in\mathcal{N}_{r^*}(r_j)}{\odot}(\hat{\bm{r}}_{z|q}^{(n-1)}+\bm{p}_{z})\big)\Big]\Big),\hat{\bm{r}}^*\in\hat{\bm{R}}^*,
\end{aligned}
\end{equation}
\begin{equation}
\label{eq20}
\small
\begin{aligned}
&\hat{\bm{e}}^{(0)}_{i|h} = \mathbb{I}(e_i = e_h) * \hat{\bm{r}}^{(N)}_{q|q},\\
&\hat{\bm{e}}^{(n)}_{i|h} = \sigma\Big(\hat{\bm{W}}_e^{(n-1)}\Big[\hat{\bm{e}}_{i|h}^{(n-1)}||\sum\limits_{r_j\in\mathcal{R}}\sum\limits_{e_z\in\mathcal{N}_{r_j}(e_i)}\hat{\bm{r}}_{j|q}^{(N)}\odot\hat{\bm{e}}_{z|h}^{(n-1)}\Big]\Big).
\end{aligned}
\end{equation}
Here, $\hat{\bm{W}}_r^{(n-1)},\hat{\bm{W}}_e^{(n-1)}\in\mathbb{R}^{d\times d}$ are two trainable parameter matrix.
\section{Discussion of the SRM In-context Layer}
\label{appendixSRMAttention}
This section discusses the effectiveness of the proposed SRM mechanism from the perspectives of structural representation alignment and semantic drift suppression. First, we provide the following explanation of the assumption and definitions required for subsequent analysis.
\begin{definition}[\textbf{In-context learning for the last token}]
\label{def:SRM}
Given a query triplet $<e_h, r_q, ?>$ and the corresponding instruction representation $\bm{X}\in\mathbb{R}^{L\times F}$, according to Eq. (\ref{eq4}), the hidden state of the last token $\bm{x}_L\in\bm{X}$ in the vanilla in-context learning module can be expressed as:
\begin{equation}
\small
\nonumber
\begin{aligned}
\bm{h}_L^{\mathrm{base}}=&\sum\limits_{i\le L}\alpha_if_{V}(\bm{x}_i),\\
\alpha_i=\frac{\exp{(a_i)}}{\sum\limits_{j\le L}\exp{(a_j)}}&,\hspace{2mm}a_i=\frac{f_Q(\bm{x}_L){[f_K(\bm{x}_i)]}^{\text{T}}}{\sqrt{F}}.
\end{aligned}
\end{equation}
\par Then, we introduce the structural relation memory $\bm{R}_{\mathrm{mem}}={\{\bm{r}_k\in\bm{R}_{|q}\}}_{k=1}^K$ from Eq. (\ref{eq7}) into the in-context learning module, the hidden state of the last token in Eq. (\ref{eq8}) can be expressed as:
\begin{equation}
\small
\nonumber
\begin{aligned}
\bm{h}_L^{\mathrm{SRM}}=&\sum\limits_{i\le L}\hat{\alpha}_if_{V}(\bm{x}_i)+\sum\limits_{k\le K}\beta_km_{V}(\bm{r}_k),\\
\hat{\alpha}_i=\frac{\exp{(a_i)}}{\sum\limits_{j\le L}\exp{(a_j)}+\sum\limits_{z\le K}\exp{(b_z)}}&,\hspace{2mm}\beta_k=\frac{\exp{(b_k)}}{\sum\limits_{j\le L}\exp{(a_j)}+\sum\limits_{z\le K}\exp{(b_z)}},\hspace{2mm}b_k=\frac{f_Q(\bm{x}_L){[m_K(\bm{r}_k)]}^{\text{T}}}{\sqrt{F}}.
\end{aligned}
\end{equation}
\end{definition}
\begin{definition}[\textbf{Structural subspace and structural observables}]
\label{def:StructuralSubspace}
Given a query triplet $<e_h, r_q, ?>$, define the structural subspace constructed by SIRLM as:
\begin{equation}
\small
\nonumber
\begin{aligned}
\mathcal{S}=\mathrm{span}(\{f_{\mathrm{up}}(\bm{r})|\bm{r}\in\bm{R}_{|q}\}\cup{\{m_V(\bm{r}_k)|\bm{r}_k\in\bm{R}_{\mathrm{mem}}\}}_{k=1}^K).
\end{aligned}
\end{equation}
\par Let $\bm{\mathrm{\Pi}}_{\mathcal{S}}$ denote the orthogonal projection operator onto $\mathcal{S}$. Then any $\bm{x}\in\mathbb{R}^F$ can be uniquely decomposed in $\mathcal{S}$ as:
\begin{equation}
\small
\nonumber
\begin{aligned}
\bm{x}=\bm{\mathrm{\Pi}}_{\mathcal{S}}\bm{x}+\bm{\mathrm{\Pi}}_{\mathcal{S}}^{\perp}\bm{x},
\end{aligned}
\end{equation}
where $\bm{\mathrm{\Pi}}_{\mathcal{S}}\bm{x}\in\mathcal{S}$ represents the structural component of $\bm{x}$ and $\bm{\mathrm{\Pi}}_{\mathcal{S}}^{\perp}\bm{x}$ is the semantic drift component orthogonal to $\mathcal{S}$.
\par Furthermore, define: 
\begin{equation}
\small
\nonumber
\begin{aligned}
\bm{u}_q=\frac{\bm{\mathrm{\Pi}}_{\mathcal{S}}f_{\mathrm{up}}(\bm{r}_{q|q})}{{\|\bm{\mathrm{\Pi}}_{\mathcal{S}}f_{\mathrm{up}}(\bm{r}_{q|q})\|}_2},\hspace{2mm}\bm{r}_{q|q}\in\bm{R}_{|q},
\end{aligned}
\end{equation}
which represents extracting a unit direction from the structural representation of $r_q$ within the structural subspace $\mathcal{S}$, serving as the structural anchor of $r_q$.
\par Based on $\bm{u}_q$ and $\bm{\mathrm{\Pi}}_{\mathcal{S}}$, two structural observables can be defined:
\begin{enumerate}[left=0pt]
\item[$\bullet$] \textbf{Structural alignment score}: $A_q(\bm{x})=\bm{u}_q^{\mathrm{T}}\bm{x}$, which measures the relevance of any $\bm{x}$ to $\bm{r}_q$ within the structural subspace $\mathcal{S}$.
\item[$\bullet$] \textbf{Semantic drift magnitude}: $D_\mathcal{S}(\bm{x})=\bm{u}_q^{\mathrm{T}}\bm{x}$, which measures the relevance of any $\bm{x}$ to $\bm{r}_q$ within the structural subspace $\mathcal{S}={\|\bm{\mathrm{\Pi}}_{\mathcal{S}}^{\perp}\bm{x}\|}_2$, which quantifies the component of any $\bm{x}$ orthogonal to the structural subspace $\mathcal{S}$.
\end{enumerate}
\end{definition}
\begin{assumption}
\label{assume:A}
Given a strictly increasing function $\phi(\cdot)$ and a strictly decreasing function $\psi(\cdot)$, for all Top-$K$ relations selected in Eq. (\ref{eq7}), we have:
\begin{equation}
\small
\nonumber
\begin{aligned}
A_q(m_V(\bm{r}_k))=\phi(s_{\mathrm{rel}}^{(k)}),\hspace{2mm}D_{\mathcal{S}}(m_V(\bm{r}_k))=\psi(s_{\mathrm{rel}}^{(k)}).
\end{aligned}
\end{equation}
\par Assume there exists a threshold $s_0$ such that ${\{s_{\mathrm{rel}}^{(k)}>s_0\}}_{k=1}^K$. Then for $\bm{h}^{\mathrm{base}}_L$, we have:
\begin{equation}
\small
\nonumber
\begin{aligned}
A_q(\bm{h}^{\mathrm{base}}_L)\le\phi(s_0),\hspace{2mm}D_{\mathcal{S}}(\bm{h}^{\mathrm{base}}_L)\ge\psi(s_0).
\end{aligned}
\end{equation}
That is, the aggregate result of the vanilla in-context learning module has a clear separation boundary from the structural reasoning subspace of the query triplet.
\end{assumption}
\par Based on the above definitions and assumptions, we present the following proposition.
\begin{proposition}
Given a query triplet $<e_h, r_q, ?>$ and its corresponding structural subspace $\mathcal{S}$, let $\bm{h}^{\mathrm{SRM}}_L$ and $\bm{h}^{\mathrm{base}}_L$ be the outputs obtained from the SRM-based and vanilla in-context learning modules, respectively. Then, the former exhibits stronger structural representation alignment and better suppression of semantic drift than the latter, i.e.,
\begin{equation}
\small
\nonumber
\begin{aligned}
A_q(\bm{h}^{\mathrm{SRM}}_L)>A_q(\bm{h}^{\mathrm{base}}_L),\hspace{2mm}D_{\mathcal{S}}(\bm{h}^{\mathrm{SRM}}_L)<D_{\mathcal{S}}(\bm{h}^{\mathrm{base}}_L).
\end{aligned}
\end{equation}
\end{proposition}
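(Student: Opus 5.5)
The plan is to write $\bm{h}^{\mathrm{SRM}}_L$ as a convex combination of $\bm{h}^{\mathrm{base}}_L$ and a memory readout, and then use linearity of $A_q$ together with convexity of $D_{\mathcal{S}}$. For the decomposition, I compare the two softmax normalizers in Definition~\ref{def:SRM}. Set $Z_a=\sum_{j\le L}\exp(a_j)$ and $Z_b=\sum_{z\le K}\exp(b_z)$, and put $\lambda=Z_b/(Z_a+Z_b)\in(0,1)$. Then $\hat{\alpha}_i=(1-\lambda)\alpha_i$, and $\beta_k=\lambda\,\gamma_k$ with $\gamma_k=\exp(b_k)/Z_b$, so that $\sum_k\gamma_k=1$. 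It follows that
\begin{equation}
\nonumber
\bm{h}^{\mathrm{SRM}}_L=(1-\lambda)\,\bm{h}^{\mathrm{base}}_L+\lambda\,\bm{m},\qquad \bm{m}=\sum_{k\le K}\gamma_k\, m_V(\bm{r}_k).
\end{equation}
Both $\lambda>0$ and $1-\lambda>0$ hold because exponentials are positive and $K\ge 1$. Throughout, I treat the FFN as omitted (or identity), as the definition implicitly does.

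For the alignment inequality, $A_q(\bm{x})=\bm{u}_q^{\mathrm{T}}\bm{x}$ is linear, so $A_q(\bm{h}^{\mathrm{SRM}}_L)=(1-\lambda)A_q(\bm{h}^{\mathrm{base}}_L)+\lambda\sum_k\gamma_kA_q(m_V(\bm{r}_k))$. Assumption~\ref{assume:A} gives $A_q(m_V(\bm{r}_k))=\phi(s^{(k)}_{\mathrm{rel}})$, and since $\phi$ is strictly increasing and $s^{(k)}_{\mathrm{rel}}>s_0$, this exceeds $\phi(s_0)\ge A_q(\bm{h}^{\mathrm{base}}_L)$. Hence $A_q(\bm{m})>A_q(\bm{h}^{\mathrm{base}}_L)$, and the convex combination with $\lambda>0$ is strictly larger than $A_q(\bm{h}^{\mathrm{base}}_L)$.

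For the drift inequality, I read $D_{\mathcal{S}}(\bm{x})=\|\bm{\Pi}^{\perp}_{\mathcal{S}}\bm{x}\|_2$; the printed definition conflates it with $A_q$, and I take the norm version as the intended one. This is a seminorm, so the triangle inequality and homogeneity give $D_{\mathcal{S}}(\bm{h}^{\mathrm{SRM}}_L)\le(1-\lambda)D_{\mathcal{S}}(\bm{h}^{\mathrm{base}}_L)+\lambda\sum_k\gamma_k\psi(s^{(k)}_{\mathrm{rel}})$. Since $\psi$ is strictly decreasing, each $\psi(s^{(k)}_{\mathrm{rel}})<\psi(s_0)\le D_{\mathcal{S}}(\bm{h}^{\mathrm{base}}_L)$, so the bound is strictly below $D_{\mathcal{S}}(\bm{h}^{\mathrm{base}}_L)$.

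One detail needs a remark: $m_V(\bm{r}_k)\in\mathcal{S}$ by the definition of $\mathcal{S}$, so in fact $D_{\mathcal{S}}(m_V(\bm{r}_k))=0$. This is consistent with the assumption only if $\psi$ takes the value $0$ there. The argument does not depend on this, because only the strict inequality $\psi(s^{(k)}_{\mathrm{rel}})<\psi(s_0)$ is used, and it still holds. In the same way, the projection onto $\mathcal{S}$ and the definition of $\bm{u}_q$ play no role beyond making $A_q$ linear and $D_{\mathcal{S}}$ convex.

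The main obstacle is exactly these loosely stated definitions: the ambiguous statement of $D_{\mathcal{S}}$, and the possible FFN in Eq.~(\ref{eq8}). If the FFN must be kept, linearity and convexity fail, and I would have to add a hypothesis, such as applying the observables before the FFN or assuming the FFN is linear. I would therefore state the FFN-free, pre-FFN interpretation explicitly at the start of the proof. Once that is fixed, the argument is just the strict convex-combination argument above.
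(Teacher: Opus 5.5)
Your proposal is correct and is essentially the paper's proof: the paper likewise sets $c=Z_a/(Z_a+Z_b)$ (your $1-\lambda$), rewrites $A_q(\bm{h}^{\mathrm{SRM}}_L)=A_q(\bm{h}^{\mathrm{base}}_L)+(1-c)\bigl(\overline{A}_{\mathrm{mem}}-A_q(\bm{h}^{\mathrm{base}}_L)\bigr)$ by linearity, and bounds $D_{\mathcal{S}}(\bm{h}^{\mathrm{SRM}}_L)$ with the triangle inequality before applying the thresholds $\phi(s_0)$ and $\psi(s_0)$ from Assumption~\ref{assume:A}. Your explicit remarks are accurate clarifications that the paper leaves implicit: the FFN is dropped as in Definition~\ref{def:SRM}, $D_{\mathcal{S}}$ must be read as $\|\bm{\mathrm{\Pi}}^{\perp}_{\mathcal{S}}\bm{x}\|_2$, $\lambda\in(0,1)$ is needed for strictness, and $D_{\mathcal{S}}(m_V(\bm{r}_k))=0$.
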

\begin{proof}
We prove the two claims separately.
\par \textbf{(I) Structural alignment improvement.} By Definitions \ref{def:SRM} and \ref{def:StructuralSubspace}, we have
\begin{equation}
\small
\nonumber
\begin{aligned}
&\begin{cases}
A_q(\bm{h}^{\mathrm{base}}_L)=\frac{\sum\limits_{i\le L}\exp{(a_i)}A_q\big(f_V(\bm{x}_i)\big)}{\sum\limits_{j\le L}\exp{(a_j)}}\\
A_q(\bm{h}^{\mathrm{SRM}}_L)=\frac{\sum\limits_{i\le L}\exp{(a_i)}A_q\big(f_V(\bm{x}_i)\big)}{\sum\limits_{j\le L}\exp{(a_j)}+\sum\limits_{z\le K}\exp{(b_z)}}+\sum\limits_{k\le K}\beta_kA_q\big(m_V(\bm{r}_k)\big)
\end{cases}\\
\Rightarrow&A_q(\bm{h}^{\mathrm{SRM}}_L)=\frac{\sum\limits_{j\le L}\exp{(a_j)}}{\sum\limits_{j\le L}\exp{(a_j)}+\sum\limits_{z\le K}\exp{(b_z)}}A_q(\bm{h}^{\mathrm{base}}_L)+\sum\limits_{k\le K}\beta_kA_q\big(m_V(\bm{r}_k)\big).
\end{aligned}
\end{equation}
\par Let $c=\frac{\sum\limits_{j\le L}\exp{(a_j)}}{\sum\limits_{j\le L}\exp{(a_j)}+\sum\limits_{z\le K}\exp{(b_z)}}$, then $\sum\limits_{k\le K}\beta_k=1-c$. Further, define $\overline{A}_{\mathrm{mem}}=\frac{\sum\limits_{k\le K}\beta_kA_q\big(m_V(\bm{r}_k)\big)}{\sum\limits_{k\le K}\beta_k}$, $A_q(\bm{h}^{\mathrm{SRM}}_L)$ can be rewritten as:
\begin{equation}
\small
\nonumber
\begin{aligned}
A_q(\bm{h}^{\mathrm{SRM}}_L)=A_q(\bm{h}^{\mathrm{base}}_L)+(1-c)(\overline{A}_{\mathrm{mem}}-A_q(\bm{h}^{\mathrm{base}}_L)).
\end{aligned}
\end{equation}
Therefore, it suffices to prove that $\overline{A}_{\mathrm{mem}}>A_q(\bm{h}^{\mathrm{base}}_L)$.
\par According to \textbf{Assumption~\ref{assume:A}} and the definition of $\overline{A}_{\mathrm{mem}}$, we have:
\begin{equation}
\small
\nonumber
\begin{aligned}
\begin{cases}
A_q(m_V(\bm{r}_k))=\phi(s_{\mathrm{rel}}^{(k)})>\phi(s_0)\\
A_q(\bm{h}^{\mathrm{base}}_L)\le\phi(s_0)
\end{cases}
\Rightarrow\overline{A}_{\mathrm{mem}}>A_q(\bm{h}^{\mathrm{base}}_L).
\end{aligned}
\end{equation}
Therefore, $A_q(\bm{h}^{\mathrm{SRM}}_L)>A_q(\bm{h}^{\mathrm{base}}_L)$ is proven.
\par \textbf{(II) Suppression of semantic drift.}
Following a similar procedure as in the previous proof, and based on Definitions \ref{def:SRM} and \ref{def:StructuralSubspace}, we obtain:
\begin{equation}
\small
\nonumber
\begin{aligned}
&\begin{cases}
D_{\mathcal{S}}(\bm{h}^{\mathrm{base}}_L)={\Bigg\|\frac{\sum\limits_{i\le L}\exp{(a_i)}\bm{\mathrm{\Pi}}_{\mathcal{S}}^{\perp}f_V(\bm{x}_i)}{\sum\limits_{j\le L}\exp{(a_j)}}\Bigg\|}_2\\
D_{\mathcal{S}}(\bm{h}^{\mathrm{SRM}}_L)={\Bigg\|\frac{\sum\limits_{i\le L}\exp{(a_i)}\bm{\mathrm{\Pi}}_{\mathcal{S}}^{\perp}f_V(\bm{x}_i)}{\sum\limits_{j\le L}\exp{(a_j)}+\sum\limits_{z\le K}\exp{(b_z)}}+\sum\limits_{k\le K}\beta_k\bm{\mathrm{\Pi}}_{\mathcal{S}}^{\perp}m_V(\bm{r}_k)\Bigg\|}_2
\end{cases}\\
\Rightarrow&D_{\mathcal{S}}(\bm{h}^{\mathrm{SRM}}_L)\le cD_{\mathcal{S}}(\bm{h}^{\mathrm{base}}_L)+\sum\limits_{k\le K}\beta_kD_{\mathcal{S}}\big(m_V(\bm{r}_k)\big).
\end{aligned}
\end{equation}
\par Let $\overline{D}_{\mathrm{mem}}=\frac{\sum\limits_{k\le K}\beta_kD_{\mathcal{S}}\big(m_V(\bm{r}_k)\big)}{\sum\limits_{k\le K}\beta_k}$, then $D_{\mathcal{S}}(\bm{h}^{\mathrm{SRM}}_L)$ admits the following upper bound:
\begin{equation}
\small
\nonumber
\begin{aligned}
D_{\mathcal{S}}(\bm{h}^{\mathrm{SRM}}_L)\le D_{\mathcal{S}}(\bm{h}^{\mathrm{base}}_L)+(1-c)(\overline{D}_{\mathrm{mem}}-D_{\mathcal{S}}(\bm{h}^{\mathrm{base}}_L)).
\end{aligned}
\end{equation}
Thus, it suffices to prove that $\overline{D}_{\mathrm{mem}}<D_{\mathcal{S}}(\bm{h}^{\mathrm{base}}_L)$.
\par According to \textbf{Assumption~\ref{assume:A}} and the definition of $\overline{D}_{\mathrm{mem}}$, we have:
\begin{equation}
\small
\nonumber
\begin{aligned}
\begin{cases}
D_{\mathcal{S}}(m_V(\bm{r}_k))=\psi(s_{\mathrm{rel}}^{(k)})<\psi(s_0)\\
D_{\mathcal{S}}(\bm{h}^{\mathrm{base}}_L)\ge\psi(s_0)
\end{cases}
\Rightarrow\overline{D}_{\mathrm{mem}}<D_{\mathcal{S}}(\bm{h}^{\mathrm{base}}_L).
\end{aligned}
\end{equation}
Therefore, $D_{\mathcal{S}}(\bm{h}^{\mathrm{SRM}}_L)<D_{\mathcal{S}}(\bm{h}^{\mathrm{base}}_L)$ is proven.
\end{proof}
\par \textbf{Discussion.}  The proof above establishes the effectiveness of Eq. (\ref{eq8}). The SRM branch does not merely append extra structural representation, it actively reorganizes the hidden state by aligning its structural component and suppressing text-only semantic bias. Because the next-relation predictor in Eqs. (\ref{eq9}) and (\ref{eq10}) operates directly on this hidden state, the hidden-state improvement transfers into a larger score margin for the correct structural relation, thereby promoting the generation of KG-grounded rules.
\section{Discussion of the RCMP Mechanism}
\label{appendixRCMP}
This section discusses the effectiveness of the proposed RCMP mechanism. Based on the theoretical foundation of the previous studies~\citep{CMPNN,HCNet}, we start with the relational Weisfeiler-Leman (WL) test~\citep{WLTest} to analyze the effectiveness of RCMP in terms of structural invariance learning, lower bound of model expression, and stricter reasoning scenarios.
\subsection{A Two-stage WL Test for the SIL Mechanism}
\label{appendixWLTestSIL}
Before formally discussing the effectiveness of RCMP, we first provide a theoretical background for the WL test of the primary SIL mechanism~\citep{MOTIF} without RCMP. Giving a KG $\mathcal{G}=(\mathcal{E},\mathcal{R},\mathcal{T})$ and a corresponding relational graph $\mathcal{G}_r=(\mathcal{R},\mathcal{R}^*,\mathcal{T}^*)$, the WL test of the SIL mechanism can be defined as a \textit{two-stage coloring scheme}.
\par \textbf{Stage 1: relation coloring on $\mathcal{G}_r$}. According to Eq. (\ref{eq1}), giving a query relation $r_q\in \mathcal{R}$, the color of relation $r_j\in \mathcal{R}$ at iteration $n$ is
denoted by $\mathrm{col}_r^{(n)}(r_j|r_q)$ and initialized by
\begin{equation}
\small
\nonumber
\begin{aligned}
\mathrm{col}_r^{(0)}(r_j|r_q)=\mathbb{I}(r_j=r_q) \cdot \mathbf{1}.
\end{aligned}
\end{equation}
\par It is then updated as
\begin{equation}
\small
\nonumber
\begin{aligned}
\mathrm{col}_r^{(n+1)}&(r_j|r_q)=\\
&\mathrm{HASH}\!\left(
\mathrm{col}^{(n)}_r(r_j|r_q),
\{\!\!\{
(\{(\mathrm{col}^{(n)}_r(r_s|r_q),p_s)\mid (r_s,p_s)\in \mathcal{N}_{r^*}(r_j)\}, r^*)
\mid r^*\in \mathcal{R}^*
\}\!\!\}
\right),
\end{aligned}
\end{equation}
where $\mathcal{N}_{r^*}(r_j)$ represents the set of all neighbors of $r_j$ on $r^*$. Since $r^*$ is a hyperedge, $\mathcal{N}_{r^*}(r_j)$ stores each neighbor $r_s$ and its position $p_s$ on $r^*$. $\text{HASH}(\cdot)$ is an abstract injective coloring function~\citep{WLTest} that compresses old colors and neighborhood information into a new color label. As long as the input is different, the output color will be different.
\par \textbf{Stage 2: entity coloring on $\mathcal{G}$}.
Given a fixed number $N$ of first-stage iterations, the color of an entity $e_v$ in a candidate triplet
$<e_u,r_q,e_v>$ at iteration $\ell$ is denoted by $\mathrm{col}^{(\ell)}_{e}(e_v|e_u,r_q)$.
It is initialized by
\begin{equation}
\small
\nonumber
\begin{aligned}
\mathrm{col}_e^{(0)}(e_v|e_u,r_q)=\mathbb{I}(e_v=e_u) \cdot \mathrm{col}_r^{(N)}(r_q|r_q),
\end{aligned}
\end{equation}
and updated as
\begin{equation}
\small
\nonumber
\begin{aligned}
\mathrm{col}_e^{(\ell+1)}&(e_v|e_u,r_q)=\\
&\mathrm{HASH}\!\left(
\mathrm{col}_e^{(\ell)}(e_v|e_u,r_q),
\{\!\!\{
(\mathrm{col}_e^{(\ell)}(e_w|e_u,r_q), \mathrm{col}_r^{(N)}(r_j|r_q))
\mid e_w\in \mathcal{N}_{r_j}(e_v),\ r_j\in \mathcal{R}
\}\!\!\}
\right).
\end{aligned}
\end{equation}
This two-stage WL test characterizes the separating power of
the SIL mechanism~\citep{MOTIF}: it is both an upper bound and a tight characterization under injective message-passing choices.
\subsection{A Two-stage WL Test for the RCMP Mechanism}
\label{appendixWLTestRCMP}
This section provides a detailed elaboration on the effectiveness of the proposed RCMP based on \textbf{Appendix~\ref{appendixWLTestSIL}}. First, we outline the overall pipeline of the proposed SIRLM as \textbf{Definition~\ref{def:SIRLM}} for subsequent analysis.
\begin{definition}
\label{def:SIRLM}
Giving a KG $\mathcal{G}=(\mathcal{E},\mathcal{R},\mathcal{T})$ and a corresponding relational graph $\mathcal{G}_r=(\mathcal{R},\mathcal{R}^*,\mathcal{T}^*)$, SIRLM processes a query $<e_u, r_q, ?>$ through the following stages:
\par \textbf{Stage 1: structural tokenization.}
We first run Eqs. (\ref{eq1}) and (\ref{eq2}) to obtain structural token embeddigns of relation $\bm{r}_{j|q}\in\bm{R}_{|q}$ and entity $\bm{e}_{i|u}\in\bm{E}_{|u}$. \textbf{These embeddings are used as structure-only tokens; no entity names,
relation names, or textual attributes are used}. In particular, only $\bm{e}_{u|u}$ and $\bm{r}_{q|q}$ are exposed to the rule generator as query context.
\par \textbf{Stage 2: structure-aware rule generation.} Let $\mathcal{P}_{\mathrm{sys}}$ be a fixed system prompt. For a query $r_q(e_u,?)$, define the input prompt as
\begin{equation}
\small
\nonumber
\begin{aligned}
\Pi(e_u,r_q) = [\mathcal{P}_{\mathrm{sys}};\ \bm{e}_{u|u};\
\bm{r}_{q|q}\,].
\end{aligned}
\end{equation}
LLM generates a rule consisting of a sequence of atomic relation
\begin{equation}
\small
\nonumber
\begin{aligned}
\rho(e_u,r_q)=r_{1|\Pi(e_u,r_q)},\ldots,r_{Z|\Pi(e_u,r_q)}\in \mathcal{R}
\end{aligned}
\end{equation}
and corresponding hidden states
\begin{equation}
\small
\nonumber
\begin{aligned}
\bm{h}_{1|\Pi(e_u,r_q)},\ldots,\bm{h}_{Z|\Pi(e_u,r_q)}.
\end{aligned}
\end{equation}
\par At $z$-th generation iteration, the last hidden state $\bm{h}_{z-1|\Pi(e_u,r_q)}$ is computed by
a LLM fed with $\Pi(e_u,r_q)$ and the previously $z-1$ generated atomic relations. The next atomic relation is then selected by a lookup over the structured
relation-token bank $\bm{R}_{|q}$:
\begin{equation}
\small
\nonumber
\begin{aligned}
r_{z|\Pi(e_u,r_q)}
=
\arg\max_{r_j\in R}\ \mathrm{Score}\bigl(\bm{h}_{z-1|\Pi(e_u,r_q)},\bm{r}_{j|q}\bigr),
\end{aligned}
\end{equation}
where $\mathrm{Score}(\cdot)$ is a
relation-token lookup scorer.
\par\textbf{Stage 3: RCMP initialization of $\mathcal{G}_r$.} For each relation $r\in \mathcal{R}$, define the accumulated rule representation as
\begin{equation}
\small
\nonumber
\begin{aligned}
\bm{x}(r|e_u,r_q)
=
\sum_{z\,:\,r_{z|\Pi(e_u,r_q)}=r}\bm{h}_{z|\Pi(e_u,r_q)},
\end{aligned}
\end{equation}
with the convention that $\bm{x}(r|e_u,r_q)=\mathbf{0}$ if $r$ does not occur in
$\rho(e_u,r_q)$. This additive definition directly handles repeated relations, that is,
every occurrence contributes its own autoregressive hidden state, and all such
occurrence-specific hidden states are accumulated.
\par Giving the generated rule $\rho(e_u,r_q)$ and the accumulated rule representation $\bm{x}(\cdot|e_u,r_q)$, the RCMP initialization of $\mathcal{G}_r$ can be concretized as
\begin{equation}
\small
\nonumber
\begin{aligned}
\widetilde{\mathrm{INIT}}_1(r_j|e_u,r_q)
=
\mathbb{I}(r_j=r_q) \cdot \mathbf{1} + \bm{x}(r_j|e_u,r_q),
\end{aligned}
\end{equation}
where $\mathbf{1}\neq \mathbf{0}$ are fixed vectors. Equivalently, if the query
relation $r_q$ appears in the generated rule body, then its node in the relational graph
is initialized by the anchor vector $\mathbf{1}$ plus the sum of all hidden states
associated with occurrences of $r_q$ in $\rho(e_u,r_q)$. Likewise, if any non-query relation $r_j$ appears
multiple times in $\rho(e_u,r_q)$, then its initialization is the sum of the hidden
states associated with all of its occurrences.
\par It should be emphasized that RCMP only changes the initialization strategy of the relational graph $\mathcal{G}_r$ in SIL with $\widetilde{\mathrm{INIT}}_1(r_j|e_u,r_q)$, while keeping $\mathcal{G}_r$ and all relation and entity message-passing operators unchanged.
\end{definition}
\begin{assumption}
\label{assume:B}
We make the following assumptions.
\begin{itemize}[leftmargin=2.2em]
    \item[\textbf{1.}] \textbf{Structure-token invariance}. The base embeddings $\bm{r}_{j|q}\in\bm{R}_{|q}$ and $\bm{e}_{i|u}\in\bm{E}_{|u}$ are structurally invariants in \textbf{Definition~\ref{def:SIRLM}}.
    \item[\textbf{2.}] \textbf{Equivariance of LLM generation for structural relation tokens}. For every isomorphism $h=(\pi,\phi):\mathcal{G}\to \mathcal{G}'$, the autoregressive decoder and the lookup rule are equivariant with respect to the induced renaming of the structural tokens, \textit{i.e.},
\begin{equation}
\small
\nonumber
\begin{aligned}
\bm{h}_{z|\Pi(e_u,r_q)}=\bm{h}_{z|\Pi(\pi(e_u),\phi(r_q))},
    \qquad
    r_{z|\Pi(\pi(e_u),\phi(r_q))}=\phi(r_{z|\Pi(e_u,r_q)}).
\end{aligned}
\end{equation}
    \item[\textbf{3.}] \textbf{Anchor separation of relations}. Query relations are disjoint from non-query relations, \textit{i.e.},
\begin{equation}
\small
\nonumber
\begin{aligned}
\mathbf{1} + \bm{x}(r_j|e_u,r_q)\neq\bm{x}(r'_j|e'_u,r'_q)
    \qquad\text{whenever } r'_j\neq r'_q.
\end{aligned}
\end{equation}
    Equivalently, $\widetilde{\mathrm{INIT}}_1(r_j|e_u,r_q)=\widetilde{\mathrm{INIT}}_1(r'_j|e'_u,r'_q)$ implies $r_j=r_q$ if and only if $r'_j=r'_q$.
    \item[\textbf{4.}] \textbf{Injective operator in RCMP}. The update, aggregation, and message functions in both the relation encoder and the entity encoder are injective, exactly as in the constructive direction of \textbf{Appendix~\ref{appendixWLTestSIL}}.
\end{itemize}
\end{assumption}
\par The next analysis shows that the proposed RCMP preserves structural invariance
and refines the expressive power of the original SIL model.
\begin{proposition}
Under \textbf{Assumptions \ref{assume:B}1-\ref{assume:B}4}, the following hold for RCMP.
\begin{enumerate}[label=(\arabic*)]
    \item[\textbf{I.}] The initialization $\widetilde{\mathrm{INIT}}_1(u,q,r)$ has relation invariance. Consequently, RCMP computes relation invariants in the relation encoder and link invariants in the entity encoder.

    \item[\textbf{II.}] Let
\begin{equation}
\small
\nonumber
\begin{aligned}
\widetilde{\mathrm{col}}_r^{(0)}(r_j|e_u,r_q)
    =
    \chi\!\left(\widetilde{\mathrm{INIT}}_1(r_j|e_u,r_q)\right),
\end{aligned}
\end{equation}
    where $\chi$ is injective, and let
    $(\widetilde{\mathrm{col}}_r,\widetilde{\mathrm{col}}_e)$
    be the two-stage WL test of RCMP, while keeping all update rules unchanged. Then for all $n,\ell\ge 0$, we have
\begin{equation}
\small
\nonumber
\begin{aligned}
\widetilde{\mathrm{col}}_r^{(n)}(r_j|e_u,r_q)
    =
    \widetilde{\mathrm{col}}_r^{(n)}(r'_j|e'_u,r'_q)
    \Longrightarrow
    \mathrm{col}_r^{(n)}(r_j|r_q)
    =
    \mathrm{col}_r^{(n)}(r'_j|r'_q)
\end{aligned}
\end{equation}
    and
\begin{equation}
\small
\nonumber
\begin{aligned}
\widetilde{\mathrm{col}}_e^{(\ell)}(e_v|e_u,r_q)
    =
    \widetilde{\mathrm{col}}_e^{(\ell)}(e'_v|e'_u,r'_q)
    \Longrightarrow
    \mathrm{col}_e^{(\ell)}(e_v|e_u,r_q)
    =
    \mathrm{col}_e^{(\ell)}(e'_v|e'_u,r'_q).
\end{aligned}
\end{equation}
    Hence RCMP is at least as expressive as the original SIL model.

    \item[\textbf{III.}] Assume there exist a query $<e_u, r_q, ?>$ and two relations $r_j,r_s\in \mathcal{R}$ such that
\begin{equation}
\small
\nonumber
\begin{aligned}
\mathrm{col}_r^{(n)}(r_j,r_q)=\mathrm{col}_r^{(n)}(r_s,r_q)
    \quad \text{for all } n\ge 0,
\end{aligned}
\end{equation}
    but
\begin{equation}
\small
\nonumber
\begin{aligned}
\widetilde{\mathrm{INIT}}_1(r_j|e_u,r_q)
    \neq
    \widetilde{\mathrm{INIT}}_1(r_s|e_u,r_q).
\end{aligned}
\end{equation}
Then the refinement is strict, that is, there exists a KG and two query links that cannot be separated by any instance of SIL, but can be separated by some instance of RCMP.
\end{enumerate}
\end{proposition}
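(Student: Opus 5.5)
We treat the three parts in order, each by induction on the WL iterations of the two-stage scheme of Appendix~\ref{appendixWLTestSIL}, with the RCMP initialization $\widetilde{\mathrm{INIT}}_1$ from Definition~\ref{def:SIRLM}.

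\textbf{Part I (invariance).} Fix an isomorphism $h=(\pi,\phi):\mathcal{G}\to\mathcal{G}'$. It induces an isomorphism of relational graphs $\mathcal{G}_r\to\mathcal{G}'_r$ that preserves the motif hyperedges and their positions, since motif edges are defined purely from incidence (Appendix~\ref{appendixRelationalGraph}).

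By Assumption~\ref{assume:B}2, the generated rule transforms as $r_{z|\Pi(\pi(e_u),\phi(r_q))}=\phi(r_{z|\Pi(e_u,r_q)})$, and the hidden states are equal. Summing over occurrences therefore gives
\[
\bm{x}(\phi(r)|\pi(e_u),\phi(r_q))=\bm{x}(r|e_u,r_q).
\]
Also $\mathbb{I}(\phi(r_j)=\phi(r_q))=\mathbb{I}(r_j=r_q)$. Together these give $\widetilde{\mathrm{INIT}}_1(\phi(r_j)|\pi(e_u),\phi(r_q))=\widetilde{\mathrm{INIT}}_1(r_j|e_u,r_q)$.

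Since every message, aggregation and update operator is built from $\mathcal{G}_r$ and $\mathcal{G}$ alone, a routine induction on $n$ gives relation invariance of all layers $\hat{\bm{r}}^{(n)}_{j|q}$. Feeding these into the entity NBFNet, whose initialization is $\hat{\bm{r}}^{(N)}_{q|q}$ at $e_u$, a second induction on $\ell$ gives link invariance.

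\textbf{Part II (refinement).} We show by induction on $n$ that $\widetilde{\mathrm{col}}_r^{(n)}$ refines $\mathrm{col}_r^{(n)}$.

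For $n=0$: $\chi$ is injective, so equal tilde colors force $\widetilde{\mathrm{INIT}}_1(r_j|\cdot)=\widetilde{\mathrm{INIT}}_1(r'_j|\cdot)$. Assumption~\ref{assume:B}3 then yields $r_j=r_q\iff r'_j=r'_q$, which is exactly equality of $\mathrm{col}_r^{(0)}=\mathbb{I}(\cdot=r_q)\mathbf{1}$. This is the one place the anchor-separation assumption is needed.

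For the step $n\to n+1$: HASH is injective, so equal tilde colors at $n+1$ force equal tilde colors at $n$ and equal multisets of neighborhood tuples $(\{(\widetilde{\mathrm{col}}^{(n)}_r(r_s),p_s)\},r^*)$. We match these tuples bijectively and apply the induction hypothesis to each component. This gives equal old-scheme inputs, hence equal $\mathrm{col}^{(n+1)}_r$.

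For stage 2 we argue the same way. At $\ell=0$, refinement follows from the $n=N$ result applied to $r_q$, together with the indicator $\mathbb{I}(e_v=e_u)$; the indicator is recoverable because a zero versus nonzero initialization is preserved. The inductive step uses injectivity of HASH and the relation-color refinement on the edge labels $\mathrm{col}_r^{(N)}(r_j)$.

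The main subtlety is the base case. We must make sure the refinement statement compares colors across possibly different queries, so that the query-anchor indicator is genuinely recovered; Assumption~\ref{assume:B}3 is exactly what supplies this. Finally, by Assumption~\ref{assume:B}4 the model's separating power coincides with its WL test, so "at least as expressive" follows.

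\textbf{Part III (strictness).} We take the hypothesized $r_j,r_s$ with identical SIL colors for all $n$ but distinct RCMP initializations. By injectivity, $\widetilde{\mathrm{col}}_r^{(0)}(r_j)\neq\widetilde{\mathrm{col}}_r^{(0)}(r_s)$, and since each color is hashed together with its predecessor, the two stay distinct for all $n$.

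Next we build a KG witness: two entities $e_v,e_{v'}$ such that $e_v$ reaches $e_u$ only through an $r_j$-edge and $e_{v'}$ only through an $r_s$-edge, in otherwise isomorphic local configurations. A concrete choice is to attach two fresh pendant entities to $e_u$, one by $r_j$ and one by $r_s$.

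Under SIL, the edge labels $\mathrm{col}_r^{(N)}(r_j)$ and $\mathrm{col}_r^{(N)}(r_s)$ coincide. So the two pendants receive equal colors at every $\ell$, by a symmetry induction, and the links $(e_u,r_q,e_v)$ and $(e_u,r_q,e_{v'})$ are not separated by any SIL instance. Under RCMP, at $\ell=1$ the two pendants aggregate different relation labels, and injective HASH separates them. Choosing injective operators, as Assumption~\ref{assume:B}4 permits, realizes this separation in an actual RCMP instance.

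The main obstacle in this part is ensuring that adding the pendants does not disturb the hypothesis that $r_j,r_s$ have equal SIL colors. The first thing to try is to add them symmetrically, keeping the relational-graph neighborhoods of $r_j$ and $r_s$ isomorphic; otherwise, choose the pendants inside the given KG where the hypothesis already holds.
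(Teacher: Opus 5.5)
Your Parts I and II follow the paper's proof essentially step for step. Equivariance (Assumption~\ref{assume:B}2) gives invariance of $\widetilde{\mathrm{INIT}}_1$. The refinement is the same pair of inductions, with Assumption~\ref{assume:B}3 used only to recover $\mathbb{I}(r_j=r_q)$ at $n=0$.

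The gap is in Part III, and it is exactly the step you call the ``main obstacle'' and then leave open. Both hypotheses of III are given for the original KG: $\mathrm{col}_r^{(n)}(r_j|r_q)=\mathrm{col}_r^{(n)}(r_s|r_q)$ for all $n$, and $\widetilde{\mathrm{INIT}}_1(r_j|e_u,r_q)\neq\widetilde{\mathrm{INIT}}_1(r_s|e_u,r_q)$. Your separation argument uses them in the augmented KG containing $(e_u,r_j,e_v)$ and $(e_u,r_s,e_{v'})$, and neither transfers automatically.

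\begin{itemize}
\item \emph{SIL colors.} The relational graph is rebuilt from the binarized incidence matrices $\bm{A}_h,\bm{A}_t$. Suppose $e_u$ already has an outgoing $r_j$-edge but no outgoing $r_s$-edge. Then the $r_j$-pendant creates no new binary motif edges. The $r_s$-pendant, however, creates head-to-head edges to every relation leaving $e_u$ and tail-to-head edges from every relation entering $e_u$. So a ``symmetric'' addition in $\mathcal{G}$ is asymmetric in $\mathcal{G}_r$ and can split the SIL colors of $r_j$ and $r_s$.
\item \emph{RCMP initialization.} Independently, the generator's inputs $\bm{e}_{u|u}$, $\bm{R}_{|q}$ and $\bm{R}_{\mathrm{mem}}$ are recomputed on the new graph. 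The generated rule, and with it $\widetilde{\mathrm{INIT}}_1$, may therefore change and stop distinguishing $r_j$ from $r_s$.
\end{itemize}

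Your fallback of finding such pendants inside the given KG is not available in general. As written, you have proved only relation-level strictness: $\widetilde{\mathrm{col}}_r^{(n)}$ separates $r_j,r_s$ for every $n$ while $\mathrm{col}_r^{(n)}$ never does. For link-level strictness you need an explicit argument that both conditions survive the augmentation, or you must place the hypothesis of III on the witness graph itself.

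The paper does not supply this step either, and it uses a different witness. It builds a KG $\mathcal{G}^\star$ with fresh entities, triples $(x,r_j,b)$ and $(y,r_s,d)$, and links $(a,r_q,b)$ and $(c,r_q,d)$. It then transplants the colors computed for the query $(e_u,r_q)$ on the original graph, even though both the queries $(a,r_q)$, $(c,r_q)$ and the graph differ. Suppose $\mathcal{G}^\star$ consists only of those two triples. Note that $r_q\notin\{r_j,r_s\}$, since their SIL colors agree at $n=0$. Then the map exchanging $r_j\leftrightarrow r_s$, $x\leftrightarrow y$, $b\leftrightarrow d$, $a\leftrightarrow c$ and fixing $r_q$ is an automorphism carrying one link to the other. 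By Part I, no RCMP instance can separate them.

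Your same-source construction is the better template. It stays with the query for which the hypothesis is stated. Its pendants also receive messages directly from the nonzero source state. In $\mathcal{G}^\star$, by contrast, $b$ and $d$ only receive messages from zero-initialized nodes unreachable from the source, which a DistMult-type message cannot exploit. It still needs the persistence argument above before Part III is a proof.
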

\begin{proof}
We prove the three claims as follows.
\par \textbf{I. $\widetilde{\mathrm{INIT}}_1$ is a relation invariant.} By \textbf{Assumption~\ref{assume:B}1}, $\bm{R}_{|q}$ and
$\bm{E}_{|u}$ depend only on KG structure, not on concrete entity or relation names. Since the prompt $\Pi(e_u,r_q)$ is formed only by these structural
tokens and a fixed system prompt, \textbf{Assumption~\ref{assume:B}2} implies that for every graph isomorphism
$h=(\pi,\phi):\mathcal{G}\to \mathcal{G}'$, we have
\begin{equation}
\small
\nonumber
\begin{aligned}
\bm{h}_{z|\Pi(e_u,r_q)}=\bm{h}_{z|\Pi(\pi(e_u),\phi(r_q))}
    \qquad\text{and}\qquad
    r_{z|\Pi(\pi(e_u),\phi(r_q))}=\phi(r_{z|\Pi(e_u,r_q)})
\end{aligned}
\end{equation}
for all decoding steps $z$. Therefore, for every relation $r_j\in \mathcal{R}$, the accumulated rule signal satisfies
\begin{equation}
\small
\nonumber
\begin{aligned}
\bm{x}(r_j|e_u,r_q)
=
\bm{x}(\phi(r_j)|\pi(e_u),\phi(r_q)).
\end{aligned}
\end{equation}
Since the query-anchor case $r_j=r_q$ is mapped to $\phi(r_j)=\phi(r_q)$ under the same
isomorphism, the initialization satisfies
\begin{equation}
\small
\nonumber
\begin{aligned}
\widetilde{\mathrm{INIT}}_1(r_j|e_u,r_q)
=
\widetilde{\mathrm{INIT}}_1(\phi(r_j)|\pi(e_u),\phi(r_q)).
\end{aligned}
\end{equation}
Hence $\widetilde{\mathrm{INIT}}_1$ is a relation invariant.
\par Now the relation encoder follows exactly the same inductive argument used in the
main manuscript. Once the initialization $\widetilde{\mathrm{INIT}}_1$ is a relation invariant, every subsequent layer of
the relation encoder remains a relation invariant because message, aggregation, and
update only combine invariant quantities over the relational graph $\mathcal{G}_r$.
Likewise, the entity encoder receives invariant relation representations and uses the same message-passing structure as the original SIL model, so it computes link invariants.
\par \textbf{II. RCMP is at least as expressive as the original SIL model.}
We define the first-stage colors of RCMP by
\begin{equation}
\small
\nonumber
\begin{aligned}
\widetilde{\mathrm{col}}_r^{(0)}(r_j|e_u,r_q)
    =
    \chi\!\left(\widetilde{\mathrm{INIT}}_1(r_j|e_u,r_q)\right),
\end{aligned}
\end{equation}
Then, for $n\ge 0$, we use the same first-stage update as in \textbf{Appendix~\ref{appendixWLTestSIL}}, but with the richer initial coloring:
\begin{equation}
\small
\nonumber
\begin{aligned}
\widetilde{\mathrm{col}}_r^{(n+1)}&(r_j|e_u,r_q)=\\
&\mathrm{HASH}\left(
\widetilde{\mathrm{col}}_r^{(n)}(r_j|e_u,r_q), \right.\left.
\{\!\!\{
(\{(\widetilde{\mathrm{col}}_r^{(n)}(r_s|e_u,r_q),p_s)\mid (r_s,p_s)\in \mathcal{N}_{r^*}(r_j)\},r^*)
\mid r^*\in \mathcal{R}^*
\}\!\!\}
\right).
\end{aligned}
\end{equation}
Similarly, define the second-stage colors by
\begin{equation}
\small
\nonumber
\begin{aligned}
\widetilde{\mathrm{col}}_e^{(0)}(e_v|e_u,r_q)=\mathbb{I}(e_v=e_u) \cdot \widetilde{\mathrm{col}}_r^{(N)}(r_q|e_u,r_q),
\end{aligned}
\end{equation}
and for $\ell\ge 0$,
\begin{equation}
\small
\nonumber
\begin{aligned}
\widetilde{\mathrm{col}}_e^{(\ell+1)}&(e_v|e_u,r_q)=\\
&\mathrm{HASH}\!\left(
\widetilde{\mathrm{col}}_e^{(\ell)}(e_v|e_u,r_q),
\{\!\!\{
(\widetilde{\mathrm{col}}_e^{(\ell)}(e_w|e_u,r_q), \widetilde{\mathrm{col}}_r^{(N)}(r_j|e_u,r_q))
\mid e_w\in \mathcal{N}_{r_j}(e_v),\ r_j\in \mathcal{R}
\}\!\!\}
\right).
\end{aligned}
\end{equation}
\textbf{(1) We first prove by induction on $n$ that}
\begin{equation}
\small
\nonumber
\begin{aligned}
\widetilde{\mathrm{col}}_r^{(n)}(r_j|e_u,r_q)
    =
    \widetilde{\mathrm{col}}_r^{(n)}(r'_j|e'_u,r'_q)
    \Longrightarrow
    \mathrm{col}_r^{(n)}(r_j|r_q)
    =
    \mathrm{col}_r^{(n)}(r'_j|r'_q).
\end{aligned}
\end{equation}
\par For $n=0$, injectivity of $\chi$ gives $\widetilde{\mathrm{INIT}}_1(r_j|e_u,r_q)=\widetilde{\mathrm{INIT}}_1(r'_j|e'_u,r'_q)$. According to \textbf{Assumption~\ref{assume:B}3}, such an equation preserves whether the anchor contribution
$\mathbb{I}(r_j=r_q)\cdot\mathbf{1}$ is present. Therefore, $\mathbb{I}(r_j=r_q)=\mathbb{I}(r'_j=r'_q)$, which is exactly 
\begin{equation}
\small
\nonumber
\begin{aligned}
\mathrm{col}_r^{(0)}(r_j|r_q)=\mathrm{col}_r^{(0)}(r'_j|r'_q).
\end{aligned}
\end{equation}
\par Assume the implication holds at layer $n$ and suppose
\begin{equation}
\small
\nonumber
\begin{aligned}
\widetilde{\mathrm{col}}_r^{(n+1)}(r_j|e_u,r_q)
    =
    \widetilde{\mathrm{col}}_r^{(n+1)}(r'_j|e'_u,r'_q).
\end{aligned}
\end{equation}
\par Since $\mathrm{HASH}$ is injective, both the previous colors and the corresponding
position-aware neighbor multisets of SIL must agree. Applying the induction hypothesis to all
first-stage colors appearing inside those multisets shows that the original colors in
\textbf{Appendix~\ref{appendixWLTestSIL}} also agree. Therefore the multisets defining
$\mathrm{col}_r^{(n+1)}(r_j|r_q)$ and $\mathrm{col}_r^{(n+1)}(r'_j|r'_q)$ coincide, thus
\begin{equation}
\small
\nonumber
\begin{aligned}
\mathrm{col}_r^{(n+1)}(r_j|r_q)=\mathrm{col}_r^{(n+1)}(r'_j|r'_q)
\end{aligned}
\end{equation}
is proven.
\par \textbf{(2) We next prove by induction on $\ell$ that}
\begin{equation}
\small
\nonumber
\begin{aligned}
\widetilde{\mathrm{col}}_e^{(\ell)}(e_v|e_u,r_q)=\widetilde{\mathrm{col}}_e^{(\ell)}(e'_v|e'_u,r'_q)
\Longrightarrow
\mathrm{col}_e^{(\ell)}(e_v|e_u,r_q)=\mathrm{col}_e^{(\ell)}(e'_v|e'_u,r'_q).
\end{aligned}
\end{equation}
For $\ell=0$, equality of the RCMP colors implies
\begin{equation}
\small
\nonumber
\begin{aligned}
\mathbb{I}(e_v=e_u)=\mathbb{I}(e'_v=e'_u)
\qquad\text{and}\qquad
\widetilde{\mathrm{col}}_r^{(N)}(r_j|e_u,r_q)
    =
    \widetilde{\mathrm{col}}_r^{(N)}(r'_j|e'_u,r'_q).
\end{aligned}
\end{equation}
By the first-stage result proved above, we obtain
\begin{equation}
\small
\nonumber
\begin{aligned}
\mathrm{col}_r^{(N)}(r_q|r_q)=\mathrm{col}_r^{(N)}(r'_q|r'_q),
\end{aligned}
\end{equation}
hence
\begin{equation}
\small
\nonumber
\begin{aligned}
\mathrm{col}_e^{(0)}(e_v|e_u,r_q)=\mathrm{col}_e^{(0)}(e'_v|e'_u,r'_q).
\end{aligned}
\end{equation}
\par Assume the implication holds at layer $\ell$ and suppose
\begin{equation}
\small
\nonumber
\begin{aligned}
\widetilde{\mathrm{col}}_e^{(\ell+1)}(e_v|e_u,r_q)=\widetilde{\mathrm{col}}_e^{(\ell+1)}(e'_v|e'_u,r'_q).
\end{aligned}
\end{equation}
Injectivity of $\mathrm{HASH}$ yields equality of the previous second-stage colors of SIL and
of the corresponding message multisets. By the induction hypothesis on $\ell$ and the
first-stage implication already proved, the original second-stage colors and original
first-stage relation colors also agree. Therefore, the multisets defining
$col^{(\ell+1)}_{F,T}(q(u,v))$ and
$col^{(\ell+1)}_{F,T}(q'(u',v'))$ are identical, and the assumption
\begin{equation}
\small
\nonumber
\begin{aligned}
\mathrm{col}_e^{(\ell+1)}(e_v|e_u,r_q)=\mathrm{col}_e^{(\ell+1)}(e'_v|e'_u,r'_q)
\end{aligned}
\end{equation}
is proven.
\par Therefore, the two-stage WL test of RCMP is a refinement of the original two-stage test
of SIL on \textbf{Appendix~\ref{appendixWLTestSIL}}. Under \textbf{Assumption~\ref{assume:B}4}, the same constructive argument as in \textbf{Appendix~\ref{appendixWLTestSIL}} applies to the present model, because only the initialization changes while the message-passing architecture over $\mathcal{G}_r$ and over the original KG remains unchanged. Therefore, the refined two-stage test characterizes RCMP, and the refinement
of the WL tests implies that RCMP is at least as expressive as SIL.
\par \textbf{III. Strictness.}
Assume there exist a query triplet $<e_u, r_q, ?>$ and two relations $r_j,r_s\in \mathcal{R}$ that satisfy 
\begin{equation}
\small
\nonumber
\begin{aligned}
\mathrm{col}_r^{(n)}(r_j|r_q)=\mathrm{col}_r^{(n)}(r_s|r_q)\qquad\text{but}\qquad\widetilde{\mathrm{INIT}}_1(r_j|e_u,r_q)\neq\widetilde{\mathrm{INIT}}_1(r_s|e_u,r_q)\qquad\text{for all }n\ge 0.
\end{aligned}
\end{equation}
\par We then construct a new KG $\mathcal{G}^\star$ with fresh entities $\{a,b,c,d,x,y\}$ and triplets $\{<x, r_j, b>,<y, r_s, d>\}$, and no other edges incident to $b$ or $d$.
\par Consider two query triplets
\begin{equation}
\small
\nonumber
\begin{aligned}
<a, r_q, b> \qquad\text{and}\qquad <c, r_q, d>,
\end{aligned}
\end{equation}
for the original SIL model, the incoming relation colors attached to $b$ and $d$ are the same because $r_j$ and $r_s$ are indistinguishable by the first-stage WL test. Moreover, the local entity neighborhoods of $b$ and $d$ are isomorphic because each target node has exactly one incoming edge from a fresh non-source node. Based on the above conditions, we can obtain
\begin{equation}
\small
\nonumber
\begin{aligned}
\mathrm{col}_e^{(\ell)}(b|a,r_q)=\mathrm{col}_e^{(\ell)}(d|c,r_q) \qquad\text{for all } \ell\ge 0.
\end{aligned}
\end{equation}
\par For RCMP, we already have $\widetilde{\mathrm{col}}_r^{(0)}(r_j|e_u,r_q)\neq\widetilde{\mathrm{col}}_r^{(0)}(r_s|e_u,r_q)$. 
Therefore, after a first-stage update, the message multiset received by $b$ contains
\begin{equation}
\small
\nonumber
\begin{aligned}
\bigl(
\widetilde{\mathrm{col}}^{(0)}_e(x|a,r_q),
\widetilde{\mathrm{col}}^{(N)}_r(r_j|a,r_q)
\bigr),
\end{aligned}
\end{equation}
whereas the one received by $d$ contains
\begin{equation}
\small
\nonumber
\begin{aligned}
\bigl(
\widetilde{\mathrm{col}}^{(0)}_e(y|c,r_q),
\widetilde{\mathrm{col}}^{(N)}_r(r_s|c,r_q)
\bigr).
\end{aligned}
\end{equation}
Based on $\widetilde{\mathrm{col}}_e^{(0)}(e_v|e_u,r_q)=\mathbb{I}(e_v=e_u) \cdot \widetilde{\mathrm{col}}_r^{(N)}(r_q|e_u,r_q)$, the first components are equal, but the second components are different, so the
multisets in the second-stage test are different. Finally, we can obtain
\begin{equation}
\small
\nonumber
\begin{aligned}
\widetilde{\mathrm{col}}_e^{(\ell)}(b|a,r_q)\neq\widetilde{\mathrm{col}}_e^{(\ell)}(d|c,r_q) \qquad\text{for all } \ell\ge 0.
\end{aligned}
\end{equation}
Again using the constructive direction of \textbf{Assumption~\ref{assume:B}4}, there exists an
instance where RCMP can separate but SIL cannot. This proves the strict refinement.
\end{proof}
\par \textbf{Discussion.} The proposition identifies a novel perspective of expressiveness beyond changing the motif set $\mathcal{R}^*$ itself. The original theory strengthens SIL by enriching the relational hypergraph through richer motifs. Our construction keeps $\mathcal{R}^*$ fixed and instead strengthens the base coloring of the first-stage relation process by injecting a structure-only autoregressive rule signal. Therefore, the gain comes from refining the initial partition of relations rather than from
adding new motifs. Moreover, the structure-only prompting design is crucial. The invariance statement depends on the equivariance of the generator with respect to isomorphisms, which would
generally fail if one injected relation names, entity names, or arbitrary textual
attributes into the prompt.
\begin{algorithm}[t]
\caption{\label{algorithm1}Pre-training framework of SIRLM}
\raggedright
{\bf Input:} KG $\mathcal{G}=(\mathcal{E}, \mathcal{R}, \mathcal{T})$; relational graph $\mathcal{G}_r=(\mathcal{R}, \mathcal{R}^*, \mathcal{T}^*)$; trainable parameter set of SIRLM $\Omega$; learning rate $\eta$; max training step $s$; batch size $b$.\\
{\bf Output:}
Optimized parameter set $\Omega$.
\begin{algorithmic}[1]
\State $step=0$
\For{$step<s$}
    \State Randomly select $b$ query triplets from $\mathcal{T}$ to form $\mathcal{T}_q$
    \State $\mathcal{L}_{total}=0$
    \For{$<e_h, r_q, ?>$ in $\mathcal{T}_q$}
        \State Construct the query instruction $X$ of $<e_h, r_q, ?>$ according to Appendix~\ref{appendixInstruction}
        \State Obtain structural embeddings $\bm{R}_{|q}$ and $\bm{E}_{|h}$ according to Eqs (\ref{eq1}) and (\ref{eq2}), respectively
        \State Obtain the tokenizer $\text{TKN}_{\text{RG}}$ according to Eq. (\ref{eq6})
        \State Obtain structural relation memory $\bm{R}_{\text{mem}}$ by Eq. (\ref{eq7})
        \State Convert $X$ into token sequence using $\text{TKN}_{\text{RG}}$ and conduct in-context learning with
        \Statex \hspace{10mm}$\bm{R}_{\text{mem}}$ according to Eq. (\ref{eq8})
        \State Generate the rule body $\rho=\bigwedge_{z=1}^{\varepsilon} r_z$ with the hidden states ${\{\bm{h}_{L+z}\}}_{z=1}^{\varepsilon}$ using Eq. (\ref{eq10})
        \State Reason the missing entity in the query triplet according to Eqs. (\ref{eq11})-(\ref{eq13})
        \State Calculate the loss $\mathcal{L}_{\text{SFT}}$ using Eq. (\ref{eq14})
        \State $\mathcal{L}_{total}\leftarrow\mathcal{L}_{total}+\mathcal{L}_{\text{SFT}}$
    \EndFor
    \State \textbf{end for}
    \State Optimize trainable parameters according to $\Omega\leftarrow\Omega-\eta\nabla(\mathcal{L}_{total})$
    \State $step\leftarrow step+1$
\EndFor
\State \textbf{end for}
\State {\bf return} $\Omega$
\end{algorithmic}
\end{algorithm}
\section{Training Algorithm}
\label{appendixAlgorithm}
Algorithm~\ref{algorithm1} provides a complete training process for $\text{SIRLM}_{\text{PT}}$. In each training iteration, we first convert a query triplet into a instruction form consisting of textual and structural tokens (\textbf{Step 6}). Next, we obtain the structural representaions of entities and relations to construct a tokenizer for the query instruciton (\textbf{Steps 7 and 8}). We then obtain the structural relation memory and conduct in-context learning for the query instruciton (\textbf{Steps 9 and 10}), which provides the hidden states for rule generation (\textbf{Step 11}) and reasoning (\textbf{Step 12}). Finally, we using the reasoning loss to update the model parameters (\textbf{Step 16}) and return a pre-trained parameter set for downstream KGR tasks.
\section{Computational Complexity}
\label{appendixComplexity}
\begin{table}[t]
\scriptsize
\centering
\scriptsize
\setlength{\tabcolsep}{2pt}
\renewcommand{\arraystretch}{1.15}
\caption{Comparison of KRLM and SIRLM in terms of TFLOPs, memory footprint, and wall-clock time under pre-training and fine-tuning settings.}
\label{TabTrainingCost}

\begin{tabular}{c cc cc cc}
\toprule
\multirow{2}{*}{\textbf{Metric}} 
& \multicolumn{2}{c}{\makecell{\textbf{Pre-training}\\(3 transductive datasets)}} 
& \multicolumn{2}{c}{\makecell{\textbf{Fine-tuning}\\ (FB15k237 v1)}} 
& \multicolumn{2}{c}{\makecell{\textbf{Fine-tuning}\\ (FB15k237-25)}} \\
\cmidrule(lr){2-3} \cmidrule(lr){4-5} \cmidrule(lr){6-7}
& \textbf{KRLM} & \textbf{$\text{SIRLM}_{\text{PT}}$} & \textbf{KRLM} & \textbf{$\text{SIRLM}_{\text{SFT}}$} & \textbf{KRLM} & \textbf{$\text{SIRLM}_{\text{SFT}}$} \\
\midrule
\makecell{\textbf{Avg. TFLOPs}\\(forward propagation 100 steps)}
& 3.3436$\pm$0.4540 & \textbf{2.0397$\pm$0.3841}
& 3.2755$\pm$0.5208 & \textbf{2.0683$\pm$0.0175}
& 3.3312$\pm$0.4859 & \textbf{2.2287$\pm$0.0000}\\
\hline
\textbf{Training memory} 
& 36.12 GB / GPU & \textbf{27.58 GB / GPU}
& 32.57 GB / GPU & \textbf{11.38 GB / GPU}
& 32.67 GB / GPU & \textbf{13.12 GB / GPU} \\
\hline
\textbf{Wall-clock time }
& \makecell{3h10m / epoch} & \makecell{\textbf{2h38m / epoch}}
& \makecell{7m28s / epoch} & \makecell{\textbf{3m39s / epoch}}
& \makecell{12m13s / epoch} & \makecell{\textbf{10m45s / epoch}} \\
\bottomrule
\end{tabular}
\end{table}
The computational complexity of SIRLM consists of three parts. 
For the construction of the relational graph $\mathcal{G}^*$, since we need to dynamically construct it based on negative sampling during the training process, the time complexity of this part needs to be considered~\citep{MOTIF}. To construct motif edges in $\mathcal{G}^*$, we first need to compress $\bm{A}\in\mathbb{R}^{|\mathcal{E}|\times|\mathcal{R}|\times|\mathcal{E}|}$ into two sparse matrices $\bm{A}_h\in\mathbb{R}^{|\mathcal{E}|\times|\mathcal{R}|}$ and $\bm{A}_t\in\mathbb{R}^{|\mathcal{R}|\times|\mathcal{E}|}$, with a complexity of $\mathcal{O}(|\mathcal{E}|^2|\mathcal{R}|)$. Next, the construction process of the four types of binary edges theoretically requires a complexity of $\mathcal{O}(|\mathcal{E}||\mathcal{R}|^2)$. In practical situations, we use the sparse  operator $\text{spmm}(\cdot)$ to reduce the complexity of constructing binary edges to $\mathcal{O}(\text{nnz}(\bm{X}|\bm{X}\in\{\bm{A}_h, \bm{A}_t\})\times\text{nnz}(\bm{Y}|\bm{Y}\in\{\bm{A}_h^{\text{T}}, \bm{A}_t^{\text{T}}\}))$, where $\text{nnz}(\cdot)$ is a operator finding the number of non-zero element in $\bm{X}$ and 
$\bm{Y}$. Similarly, the theoretical complexity and actual sparse computational complexity of constructing ternary edges are $\mathcal{O}(|\mathcal{E}|^2|\mathcal{R}|^2+|\mathcal{E}||\mathcal{R}|^3)$ and $\mathcal{O}(\text{nnz}(\bm{X}|\bm{X}\in\{\bm{A}_h, \bm{A}_t\})\times\text{nnz}(\bm{A})\times\text{nnz}(\bm{Y}|\bm{Y}\in\{\bm{A}_h^{\text{T}}, \bm{A}_t^{\text{T}}\}))$, respectively
\par From the perspective of the KG tokenizer and RCMP reasoner, the time complexity is upper-bounded by the NBFNet executed on $\mathcal{G}$, as $|\mathcal{R}|\ll|\mathcal{E}|$. For each layer on NBFNet, the reasoning time complexity is calculated as $\mathcal{O}(|\mathcal{T}|d+|\mathcal{E}|d^2)$, where $d$ is the embedding dimension. Therefore, for a $N$-layer NBFNet executed on $\mathcal{G}$, its overall time complexity is $\mathcal{O}(N(|\mathcal{T}|d+|\mathcal{E}|d^2))$. Furthermore, by using the efficient relational messaging kernel in the Pytorch-geometric library, the complexity of the NBFNet is optimized to $\mathcal{O}(N|\mathcal{E}|d)$~\citep{ULTRA}.
\par The complexity of the in-contex learning module with the sturctural relation memory can be divided into the self-attention matrix calculation ($\mathcal{O}(L^2F)$), the memory key-value calculation ($\mathcal{O}(LKd)$), and the final aggregation ($\mathcal{O}(L(L+K)F)$), where $F$ is the hidden dimensions of LLM. Because $L\gg K$, the complexity upper bound of the in-contex learning module can be represented as $\mathcal{O}(L(L+K)F)$.
\par We further provide Table~\ref{TabTrainingCost}, which shows the TFLOPs in forward propagation, memory footprint, and wall-clock time of $\text{SIRLM}_{\text{PT}}$ and $\text{SIRLM}_{\text{SFT}}$ under the condition of batch\_size = 4 per GPU × 4 GPUs. For the SFT paradigm, we include results on the largest inductive dataset (FB15k237-25) and the smallest inductive dataset (FB15k237 v1) to provide a boundary of the computational cost.
\section{Datasets}
\label{appendixDataset}
\begin{table}
\setlength\tabcolsep{1pt}
\tiny
\newcommand{\tabincell}[2]{\begin{tabular}{@{}#1@{}}#2\end{tabular}}
\caption{\label{TabDatasets}
Element statistics of KGR dataset. “\textbf{Triplets}” represents the number of total triplets contained in a training/validation/testing graph.  “\textbf{\#Valid}” and  “\textbf{\#Test}” are the number of evaluation triplets in the validation and testing graph, respectively.
}
\centering
\begin{threeparttable}
\begin{tabular}{ c  c  c  c  c  c  c  c  c  c  c  c  c  c  c  c}
\hline
\multirow{2}{*}[-0.05in]{\tabincell{c}{\textbf{Type}}} & \multirow{2}{*}[-0.05in]{\tabincell{c}{\textbf{Dataset}}} & \multicolumn{3}{c}{\textbf{Training graph}} & \multicolumn{4}{c}{\textbf{Validation Graph}} & \multicolumn{4}{c}{\textbf{Testing Graph}}\\
\cmidrule(r){3-5}\cmidrule(r){6-9}\cmidrule(r){10-13}
 & & \textbf{Entities} & \textbf{Relations} & \textbf{Triplets} & \textbf{Entities} & \textbf{Relations} & \textbf{Triplets} & \textbf{\#Valid} & \textbf{Entities} & \textbf{Relations} & \textbf{Triplets} & \textbf{\#Test}\\
\hline
\multirow{4}{*}[-0.01in]{\tabincell{c}{\textbf{Transductive}}} & FB15k237~\citep{FB15k237} & 14541 & 237 & 272115 & 14541 & 237 & 272115 & 17535 & 14541 & 237 & 272115 & 20466\\
& CoDEx-M~\citep{CoDEx} & 17050 & 51 & 185584 & 17050 & 51 & 185584 & 10310 & 17050 & 51 & 185584 & 10311\\
& WN18RR~\citep{WN18RR} & 40943 & 11 & 86835 & 40943 & 11 & 86835 & 3034 & 40943 & 11 & 86835 & 3134\\
& NELL995~\citep{DeepPathNELL995} & 74536 & 200 & 149678 & 74536 & 200 & 149678 & 543 & 74536 & 200 & 149678 & 2818\\
\hline
\multirow{12}{*}[-0.01in]{\tabincell{c}{\textbf{IndE}}} & FB15k237 v1~\citep{GraIL} & 1594 & 180 & 4245 & 1594 & 180 & 4245 & 489 & 1093 & 180 & 1993 & 411\\
& FB15k237 v2~\citep{GraIL} & 2608 & 200 & 9739 & 2608 & 200 & 9739 & 1166 & 1660 & 200 & 4145 & 947\\
& FB15k237 v3~\citep{GraIL} & 3668 & 215 & 17986 & 3668 & 215 & 17986 & 2194 & 2501 & 215 & 7406 & 1731\\
& FB15k237 v4~\citep{GraIL} & 4707 & 219 & 27203 & 4707 & 219 & 27203 & 3352 & 3051 & 219 & 11714 & 2840\\
& WN18RR v1~\citep{GraIL} & 2746 & 9 & 5410 & 2746 & 9 & 5410 & 630 & 922 & 9 & 1618 & 373\\
& WN18RR v2~\citep{GraIL} & 6954 & 10 & 15262 & 6954 & 10 & 15262 & 1838 & 2757 & 10 & 4011 & 852\\
& WN18RR v3~\citep{GraIL} & 12078 & 11 & 25901 & 12078 & 11 & 25901 & 3097 & 5084 & 11 & 6327 & 1143\\
& WN18RR v4~\citep{GraIL} & 3861 & 9 & 7940 & 3861 & 9 & 7940 & 934 & 7084 & 9 & 12334 & 2823\\
& NELL995 v1~\citep{GraIL} & 3103 & 14 & 4687 & 3103 & 14 & 4687 & 414 & 225 & 14 & 833 & 201\\
& NELL995 v2~\citep{GraIL} & 2564 & 88 & 8219 & 2564 & 88 & 8219 & 922 & 2086 & 88 & 4586 & 935\\
& NELL995 v3~\citep{GraIL} & 4647 & 142 & 16393 & 4647 & 142 & 16393 & 1851 & 3566 & 142 & 8048 & 1620\\
& NELL995 v4~\citep{GraIL} & 2092 & 76 & 7546 & 2092 & 76 & 7546 & 876 & 2795 & 76 & 7073 & 1447\\
\hline
\multirow{20}{*}[-0.01in]{\tabincell{c}{\textbf{FullInd}}} & FB15k237-25~\citep{InGram} & 5190 & 163 & 91571 & 4097 & 216 & 17147 & 5716 & 4097 & 216 & 17147 & 5716\\
& FB15k237-50~\citep{InGram} & 5190 & 153 & 85375 & 4445 & 205 & 11636 & 3879 & 4445 & 205 & 11636 & 3879\\
& FB15k237-75~\citep{InGram} & 4659 & 134 & 62809 & 2792 & 186 & 9316 & 3106 & 2792 & 186 & 9316 & 3106\\
& FB15k237-100~\citep{InGram} & 4659 & 134 & 62809 & 2624 & 77 & 6987 & 2329 & 2624 & 77 & 6987 & 2329\\
& NELL995-25~\citep{InGram} & 4396 & 106 & 17578 & 2146 & 120 & 2230 & 743 & 2146 & 120 & 2230 & 744\\
& NELL995-50~\citep{InGram} & 4396 & 106 & 17578 & 2335 & 119 & 2576 & 859 & 2335 & 119 & 2576 & 859\\
& NELL995-75~\citep{InGram} & 2607 & 96 & 11058 & 1578 & 116 & 1818 & 606 & 1578 & 116 & 1818 & 607\\
& NELL995-100~\citep{InGram} & 1258 & 55 & 7832 & 1709 & 53 & 2378 & 793 & 1709 & 53 & 2378 & 793\\
& Wikidata68K-25~\citep{InGram} & 12659 & 47 & 41873 & 3228 & 74 & 3391 & 1130 & 3228 & 74 & 3391 & 1131\\
& Wikidata68K-50~\citep{InGram} & 12022 & 72 & 82481 & 9328 & 93 & 9672 & 3224 & 9328 & 93 & 9672 & 3225\\
& Wikidata68K-75~\citep{InGram} & 6853 & 52 & 28741 & 2722 & 65 & 3430 & 1143 & 2722 & 65 & 3430 & 1144\\
& Wikidata68K-100~\citep{InGram} & 9784 & 67 & 49875 & 12136 & 37 & 13487 & 4496 & 12136 & 37 & 13487 & 4496\\
& MTDEA1-tax~\citep{MTDEA} & 10000 & 10 & 17178 & 10000 & 10 & 17178 & 1908 & 10000 & 9 & 16526 & 1834\\
& MTDEA1-health~\citep{MTDEA} & 10000 & 7 & 14371 & 10000 & 7 & 14371 & 1596 & 10000 & 7 & 14110 & 1566\\
& MTDEA2-org~\citep{MTDEA} & 10000 & 10 & 23233 & 10000 & 10 & 23233 & 2581 & 10000 & 11 & 21976 & 2441\\
& MTDEA2-sci~\citep{MTDEA} & 10000 & 16 & 16471 & 10000 & 16 & 16471 & 1830 & 10000 & 16 & 14852 & 1650\\
& MTDEA3-art~\citep{MTDEA} & 10000 & 45 & 27262 & 10000 & 45 & 27262 & 3026 & 10000 & 45 & 28023 & 3113\\
& MTDEA3-infra~\citep{MTDEA} & 10000 & 24 & 21990 & 10000 & 24 & 21990 & 2443 & 10000 & 27 & 21646 & 2405\\
& MTDEA4-sci~\citep{MTDEA} & 10000 & 42 & 12576 & 10000 & 42 & 12576 & 1397 & 10000 & 42 & 12516 & 1388\\
& MTDEA4-health~\citep{MTDEA} & 10000 & 21 & 15539 & 10000 & 21 & 15539 & 1725 & 10000 & 20 & 15337 & 1703\\
\hline
\end{tabular}
\end{threeparttable}
\end{table}
In our experiments, we conduct evaluations on 36 datasets. According to the overlap level between train KG $\mathcal{G}_{train}=(\mathcal{E}_{train},\mathcal{R}_{train},\mathcal{T}_{train})$ and test KG $\mathcal{G}_{test}=(\mathcal{E}_{test},\mathcal{R}_{test},\mathcal{T}_{test})$, these datasets can be divided into the following three categories:
\begin{enumerate}[left=0pt]
\item[$\bullet$] \textbf{Transductive} datasets that $\mathcal{E}_{test}=\mathcal{E}_{train}$ and $\mathcal{R}_{test}=\mathcal{R}_{train}$: FB15k-237~\citep{FB15k237}, WN18RR~\citep{WN18RR}, CoDEx-M~\citep{CoDEx}, and NELL995~\citep{DeepPathNELL995}.
\item[$\bullet$] \textbf{Ind}uctive \textbf{E}ntity (\textbf{IndE}) datasets that $\mathcal{E}_{test}\neq\mathcal{E}_{train}$ and $\mathcal{R}_{test}=\mathcal{R}_{train}$, including 12 datasets from GraIL~\citep{GraIL} (FB15k237 V1, FB15k237 V2, FB15k237 V3, FB15k237 V4, WN18RR V1, WN18RR V2, WN18RR V3, WN18RR V4, NELL995 V1, NELL995 V2, NELL995 V3, and NELL995 V4).
\item[$\bullet$] \textbf{Full}y \textbf{Ind}uctive  (\textbf{FullInd}) datasets that $\mathcal{E}_{test}\neq\mathcal{E}_{train}$ and $\mathcal{R}_{test}\neq\mathcal{R}_{train}$, including 20 datasets from InGram~\citep{InGram} (FB15k237-25, FB15k237-50, FB15k237-75, FB15k237-100, NELL995-25, NELL995-50, NELL995-75, NELL995-100, Wikidata68K-25, Wikidata68K-50, Wikidata68K-75, and Wikidata68K-100) and MTDEA~\citep{MTDEA} (MTDEA1-tax, MTDEA1-health, MTDEA2-org, MTDEA2-sci, MTDEA3-art, MTDEA3-infra, MTDEA4-sci, and MTDEA4-health).
\end{enumerate}
\par These dataset are used to evaluate the model in the four training paradigms mentioned in Section~\ref{Setting}. Table~\ref{TabDatasets} provides detailed elemental statistics for these datasets.
\begin{figure}
    \begin{centering}
      \includegraphics[width=4.6in]{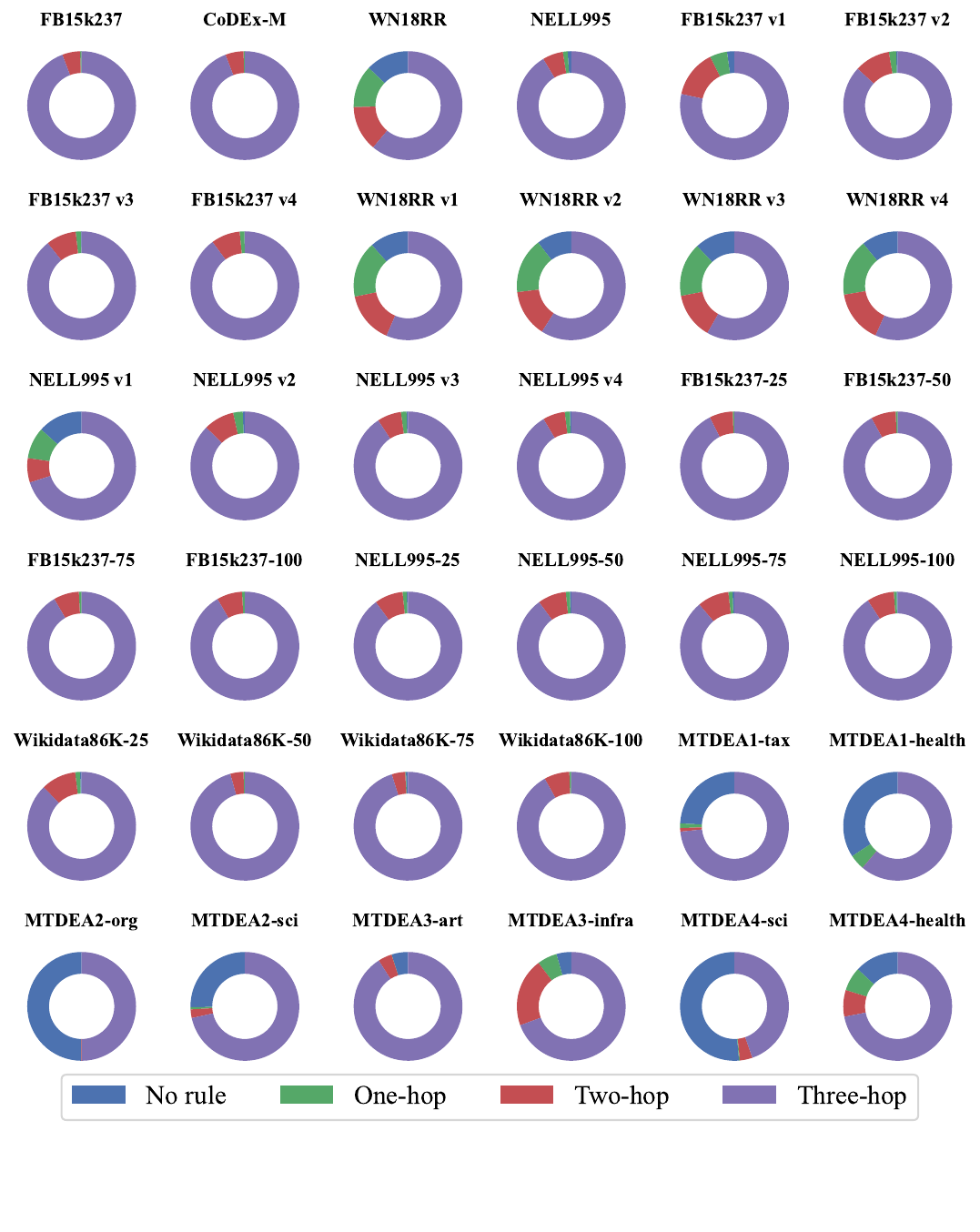}
      \caption{\label{FigRuleLenDistribution}The length distribution of rules mined from various datasets.}
\vspace{-5mm}
    \end{centering}
\end{figure}
\par Next, we discuss the rule mining approach in the data preprocessing stage. For each complete triplet in the training KG, we extract all closed paths within three hops. Due to the sparsity of the KG, some triplets may not have any closed paths of limited length that can be extracted. Figure~\ref{FigRuleLenDistribution} presents the proportion of triplets without rules (No rule) and the distribution of rules at different lengths across datasets. It can be observed that datasets such as WN18RR and the MTDEA1, MTDEA2, and MTDEA4 series exhibit a relatively high proportion of rule-less triplets. This is one of the reasons that the performance gains of SIRLM on these datasets, as shown in Tables~\ref{TabMainResults},~\ref{TabMainIndEResults}, and~\ref{TabMainFullIndResults}, are generally lower than on other datasets, \textit{i.e.}, the difficulty in obtaining comprehensive rule-based contextual information from the KG.
\par After extracting the closed paths, we handle triplets without any rules by directly using the query relation of the triplet as the rule body. For the remaining triplets that contain at least one closed path, we further construct candidate rule bodies by sequentially aggregating the directed relation sequences along each closed path. We then count the occurrence frequency of all candidate rule bodies for each triplet and select the most frequent one as the rule to be generated for that triplet, \textit{i.e.}, the masked component of the query instruction described in \textbf{Appendix~\ref{appendixInstruction}}.
\section{Experimental Hyperparameter Settings}
\label{appendixHyperparameter}
\begin{table}
\setlength\tabcolsep{2pt}
\renewcommand{\arraystretch}{1.1}
\scriptsize
\newcommand{\tabincell}[2]{\begin{tabular}{@{}#1@{}}#2\end{tabular}}
\caption{\label{TabMainHyperparameter}
Hyperperameters of KRLM used in pre-training and end-to-end training from scratch.
}
\centering
\begin{threeparttable}
\begin{tabular}{ c | c | c }
\hline
\textbf{Module} & \textbf{Component} & \textbf{Parameter} \\
\hline
\multirow{4}{*}[-0.4in]{\tabincell{c}{{\textbf{KG Tokenizer}}}} & NBFNet on $\mathcal{G}_r$ & \tabincell{c}{Layer number $N=6$\\Hidden dim $d=64$\\Message function $\textsc{Msg}(\cdot)=$ DistMult\\ Aggregation function $\textsc{Agg}(\cdot)=$ Sum\\Updating function $\textsc{Up}(\cdot)=\text{nn.Linear}(128, 64)$\\Motif Edge embeddings $\bm{R}^*=\text{nn.Embedding}(8, 64)$}\\
\cline{2-3}
 & NBFNet on $\mathcal{G}$ & \tabincell{c}{Layer number $N=6$\\Hidden dim $d=64$\\Message function $\textsc{Msg}(\cdot)=$ DistMult\\ Aggregation function $\textsc{Agg}(\cdot)=$ Sum\\Updating function $\textsc{Up}(\cdot)=\text{nn.Linear}(128, 64)$} \\
\cline{2-3}
 & Up-scale layer $f_\text{up}(\cdot)$ & \tabincell{c}{$\text{nn.Linear}(64, 1536)$} \\
\cline{2-3}
 & Score function $\mathcal{S}_\text{KG}(\cdot)$ & \tabincell{c}{$\text{nn.Linear}(128, 64)$\\$\text{nn.ReLU}(\cdot)$\\$\text{nn.Linear}(64, 1)$} \\
\hline
\multirow{4}{*}[-0.25in]{\tabincell{c}{{\textbf{In-context Learning Module}}}} & Qwen2.5-1.5b backbone & \tabincell{c}{Default configuration} \\
\cline{2-3}
 & LoRA configuration & \tabincell{c}{$r=64$\\$\alpha=32$\\$\text{droupout}=0.1$\\$\text{target module}=\text{[gate\_proj, up\_proj, down\_proj]}$}\\
\cline{2-3}
 & Memory key layer $m_K(\cdot)$ & $\text{nn.Linear}(64,1536)$ \\
\cline{2-3}
 & Memory value layer $m_V(\cdot)$ & $\text{nn.Linear}(64,1536)$ \\
\cline{2-3}
 & Scale of structural relation memory & $K=30$ \\
\hline
\textbf{Next-relation Predictor} & Porject layer $f_{\bm{\Theta}}(\cdot)$ & \tabincell{c}{$\text{nn.Linear}(1600, 1600)$\\$\text{nn.ReLU}(\cdot)$\\$\text{nn.Linear}(1600, 1)$}\\
\hline
\multirow{4}{*}[-0.4in]{\tabincell{c}{{\textbf{RCMP Reasoner}}}} & NBFNet on $\mathcal{G}_r$ & \tabincell{c}{Layer number $N=6$\\Hidden dim $d=64$\\Message function $\textsc{Msg}(\cdot)=$ DistMult\\ Aggregation function $\textsc{Agg}(\cdot)=$ Sum\\Updating function $\textsc{Up}(\cdot)=\text{nn.Linear}(128, 64)$\\Motif Edge embeddings $\hat{\bm{R}}^*=\text{nn.Embedding}(8, 64)$}\\
\cline{2-3}
 & NBFNet on $\mathcal{G}$ & \tabincell{c}{Layer number $N=6$\\Hidden dim $d=64$\\Message function $\textsc{Msg}(\cdot)=$ DistMult\\ Aggregation function $\textsc{Agg}(\cdot)=$ Sum\\Updating function $\textsc{Up}(\cdot)=\text{nn.Linear}(128, 64)$} \\
\cline{2-3}
 & Down-scale layer $f_\text{down}(\cdot)$ & \tabincell{c}{$\text{nn.Linear}(1536, 64)$} \\
\cline{2-3}
 & Score function $\mathcal{S}_\text{RCMP}(\cdot)$ & \tabincell{c}{$\text{nn.Linear}(128, 64)$\\$\text{nn.ReLU}(\cdot)$\\$\text{nn.Linear}(64, 1)$} \\
\hline
\multirow{3}{*}[0.05in]{\tabincell{c}{{\textbf{Training}}}} & Optimizer & AdamW \\
\cline{2-3}
 & Number of negative samples & 512\\
\hline
\end{tabular}
\end{threeparttable}
\vspace{-5mm}
\end{table}
\begin{table}
\setlength\tabcolsep{20pt}
\tiny
\newcommand{\tabincell}[2]{\begin{tabular}{@{}#1@{}}#2\end{tabular}}
\caption{\label{TabTrainingHyperparameter}
Detailed Training parameters, where $b$, $\eta$, and $M$ represent batch size, learning rate, and GRPO sampling number, respectively. “all” means that each epoch needs to iterate through all training queries.
}
\centering
\begin{threeparttable}
\begin{tabular}{ c | c  c  c }
\hline
\textbf{Datasets} & \tabincell{c}{\textbf{$\text{SIRLM}_{\text{E2E}}$}\\($b$, $\eta$, epoch, step)} & \tabincell{c}{\textbf{$\text{SIRLM}_{\text{SFT}}$}\\($b$, $\eta$, epoch, step)} & \tabincell{c}{\textbf{$\text{SIRLM}_{\text{GRPO}}$}\\($b$, $\eta$, $M$, epoch, step)} \\
\hline
FB15k237 & (12, 5e-4, 10, all) & (12, 1e-4, 3, all) & (8, 1e-5, 8, 1, all) \\
CoDEx-M & (24, 1e-4, 10, all) & (24, 5e-5, 3, all) & (8, 1e-5, 8, 1, all)\\
WN18RR & (12, 5e-4, 10, all) & (12, 1e-4, 3, all) &  (8, 1e-5, 8, 3, all)\\
NELL995 & (12, 5e-4, 10, all) & (12, 1e-4, 3, all) &  (8, 1e-5, 8, 1, all)\\
\hline
FB15k237 v1 & (24, 1e-4, 10, all) & (24, 5e-5, 3, all) & (16, 1e-5, 8, 3, all) \\
FB15k237 v2 & (24, 1e-4, 10, all) & (24, 5e-5, 3, all) & (16, 1e-5, 8, 3, all) \\
FB15k237 v3 & (24, 1e-4, 10, all) & (24, 5e-5, 3, all) & (16, 1e-5, 8, 3, all) \\
FB15k237 v4 & (24, 1e-4, 5, all) & (24, 5e-5, 3, all) & (16, 1e-5, 8, 3, all) \\
WN18RR v1 & (48, 1e-4, 10, all) & (48, 5e-5, 3, all) & (16, 1e-5, 8, 3, all) \\
WN18RR v2 & (48, 1e-4, 10, all) & (48, 5e-5, 3, all) & (16, 1e-5, 8, 3, all) \\
WN18RR v3 & (48, 1e-4, 20, all) & (48, 5e-5, 5, all) & (16, 1e-5, 8, 3, all) \\
WN18RR v4 & (48, 1e-4, 10, all) & (48, 5e-5, 3, all) & (16, 1e-5, 8, 3, all) \\
NELL995 v1 & (32, 1e-4, 5, all) & (32, 5e-5, 3, all) & (16, 1e-5, 8, 3, all) \\
NELL995 v2 & (32, 1e-4, 10, all) & (32, 5e-5, 3, all) & (16, 1e-5, 8, 3, all) \\
NELL995 v3 & (32, 1e-4, 5, all) & (32, 5e-5, 3, all) & (16, 1e-5, 8, 3, all) \\
NELL995 v4 & (32, 1e-4, 5, all) & (32, 5e-5, 3, all) & (16, 1e-5, 8, 3, all) \\
\hline
FB15k237-25 & (32, 1e-4, 10, all) & (32, 5e-5, 3, all) & (8, 1e-5, 8, 3, all) \\
FB15k237-50 & (32, 1e-4, 10, all) & (32, 5e-5, 3, all) & (8, 1e-5, 8, 3, all) \\
FB15k237-75 & (32, 1e-4, 10, all) & (32, 5e-5, 3, all) & (8, 1e-5, 8, 3, all) \\
FB15k237-100 & (32, 1e-4, 10, all) & (32, 5e-5, 3, all) & (8, 1e-5, 8, 3, all) \\
NELL995-25 & (32, 1e-4, 10, all) & (32, 5e-5, 3, all) & (8, 1e-5, 8, 3, all) \\
NELL995-50 & (32, 1e-4, 10, all) & (32, 5e-5, 3, all) & (8, 1e-5, 8, 3, all) \\
NELL995-75 & (32, 1e-4, 10, all) & (32, 5e-5, 3, all) & (8, 1e-5, 8, 3, all) \\
NELL995-100 & (32, 1e-4, 10, all) & (32, 5e-5, 3, all) & (8, 1e-5, 8, 3, all) \\
Wikidata68K-25 & (32, 1e-4, 10, all) & (32, 5e-5, 3, all) & (16, 1e-5, 8, 3, all) \\
Wikidata68K-50 & (32, 1e-4, 10, all) & (32, 5e-5, 3, all) & (16, 1e-5, 8, 3, all) \\
Wikidata68K-75 & (32, 1e-4, 10, all) & (32, 5e-5, 3, all) & (16, 1e-5, 8, 3, all) \\
Wikidata68K-100 & (32, 1e-4, 10, all) & (32, 5e-5, 3, all) & (16, 1e-5, 8, 3, all) \\
MTDEA1-tax & (32, 1e-4, 10, all) & (32, 5e-5, 3, all) & (16, 1e-5, 8, 3, all) \\
MTDEA1-health & (32, 1e-4, 10, all) & (32, 5e-5, 3, all) & (16, 1e-5, 8, 3, all) \\
MTDEA2-org & (32, 1e-4, 10, all) & (32, 5e-5, 3, all) & (16, 1e-5, 8, 3, all) \\
MTDEA2-sci & (32, 1e-4, 10, all) & (32, 5e-5, 3, all) & (16, 1e-5, 8, 3, all) \\
MTDEA3-art  & (32, 1e-4, 10, all) & (32, 5e-5, 3, all) & (16, 1e-5, 8, 3, all) \\
MTDEA3-infra & (32, 1e-4, 10, all) & (32, 5e-5, 3, all) & (16, 1e-5, 8, 3, all) \\
MTDEA4-sci & (32, 1e-4, 10, all) & (32, 5e-5, 3, all) & (16, 1e-5, 8, 3, all) \\
MTDEA4-health & (32, 1e-4, 10, all) & (32, 5e-5, 3, all) & (16, 1e-5, 8, 3, all) \\
\hline
\end{tabular}
\end{threeparttable}
\end{table}
In Section~\ref{MainResults}, we report four training paradigms of SIRLM, including End-to-End (E2E) training from scratch, Pre-Training (PT), SFT, and GRPO post-training. The hyperparameters of the model architecture under the four paradigms are uniformly set to the values in Table~\ref{TabMainHyperparameter}. In the PT paradigm, we set the learning rate to 1e-4 and use the AdamW optimizer with a 1\% warm-up step. The batch size per GPU is 12. Each epoch consists of 5000 iterations and the training process is conducted for a total of 20 epochs. More detailed training hyperparameters for the other training paradigms are provided in Table~\ref{TabTrainingHyperparameter}.
\section{Details Experimental Analysis}
\label{appendixExperimentalResults}
\subsection{Additional Analysis of Main Experiments}
\label{appendixMainResults}
\begin{table}
\setlength\tabcolsep{8.5pt}
\renewcommand{\arraystretch}{0.95}
\tiny
\newcommand{\tabincell}[2]{\begin{tabular}{@{}#1@{}}#2\end{tabular}}
\caption{\label{TabMainIndEResults}
The overall performance of various methods on IndE datasets. The colored cells represent the \colorbox{myRed!80}{best}, \colorbox{myOrange!80}{second-best}, and \colorbox{myYellow!80}{third-best} values, respectively.
}
\centering
\begin{threeparttable}
\begin{tabular}{ c  c  c  c  c  c  c  c  c }
\hline
\multirow{2}{*}[-0.05in]{\tabincell{c}{\textbf{Methods}}} & \multicolumn{2}{c}{\raisebox{-0.5ex}{\textbf{FB15k237 v1}}} & \multicolumn{2}{c}{\raisebox{-0.5ex}{\textbf{FB15k237 v2}}} & \multicolumn{2}{c}{\raisebox{-0.5ex}{\textbf{FB15k237 v3}}} & \multicolumn{2}{c}{\raisebox{-0.5ex}{\textbf{FB15k237 v4}}} \\
\cmidrule(r){2-3}\cmidrule(r){4-5}\cmidrule(r){6-7}\cmidrule(r){8-9}
 & \textbf{MRR}$\uparrow$ & \textbf{Hit10}$\uparrow$ & \textbf{MRR}$\uparrow$ & \textbf{Hit10}$\uparrow$ & \textbf{MRR}$\uparrow$ & \textbf{Hit10}$\uparrow$ & \textbf{MRR}$\uparrow$ & \textbf{Hit10}$\uparrow$\\
\hline
NeuralLP & 0.325 & 0.468 & 0.389 & 0.586 & 0.400 & 0.571 & 0.396 & 0.593\\
DRUM & 0.333 & 0.474 & 0.395 & 0.595 & 0.402 & 0.571 & 0.410 & 0.593\\
\hline
NBFNet & 0.422 & 0.574 & 0.514 & 0.685 & 0.476 & 0.637 & 0.453 & 0.627\\
RED-GNN & 0.369 & 0.483 & 0.469 & 0.629 & 0.445 & 0.603 & 0.442 & 0.621\\
ULTRA & 0.509 & 0.670 & 0.524 & 0.710 & 0.504 & 0.663 & 0.496 & 0.684\\
MOTIF & 0.530 & 0.702 & \cellcolor{myOrange!80}\textbf{0.557} & \cellcolor{myYellow!80}\textbf{0.744} & 0.519 & 0.684 & 0.508 & 0.695\\	
\hline
ChatRule\tnote{$\dagger$} & 0.448 & 0.546 & 0.484 & 0.628 & 0.471 & 0.653 & 0.457 & 0.598\\
MKGL\tnote{$\dagger$} & 0.475 & 0.595 & 0.508 & 0.681 & 0.486 & 0.643 & 0.471 & 0.645\\
KRLM\tnote{$\dagger$} & 0.511 & 0.668 & 0.527 & 0.712 & 0.520 & 0.681 & 0.505 & 0.693\\
\hline
$\text{SIRLM}_{\text{E2E}}$ & 0.548 & \cellcolor{myYellow!80}\textbf{0.710} & \cellcolor{myOrange!80}\textbf{0.557} & \cellcolor{myYellow!80}\textbf{0.744} & \cellcolor{myYellow!80}\textbf{0.527} & \cellcolor{myOrange!80}\textbf{0.705} & \cellcolor{myOrange!80}\textbf{0.512} & \cellcolor{myOrange!80}\textbf{0.708}\\
$\text{SIRLM}_{\text{PT}}$ & \cellcolor{myYellow!80}\textbf{0.552} & \cellcolor{myOrange!80}\textbf{0.718} & 0.543 & 0.741 & 0.510 & \cellcolor{myYellow!80}\textbf{0.702} & 0.497 & \cellcolor{myYellow!80}\textbf{0.705}\\
$\text{SIRLM}_{\text{SFT}}$ & \cellcolor{myOrange!80}\textbf{0.558} & \cellcolor{myOrange!80}\textbf{0.718} & \cellcolor{myYellow!80}\textbf{0.555} & \cellcolor{myOrange!80}\textbf{0.747} & \cellcolor{myOrange!80}\textbf{0.529} & \cellcolor{myOrange!80}\textbf{0.708} & \cellcolor{myYellow!80}\textbf{0.510} & 0.702\\
$\text{SIRLM}_{\text{GRPO}}$ &  \cellcolor{myRed!80}\textbf{0.561} & \cellcolor{myRed!80}\textbf{0.721} & \cellcolor{myRed!80}\textbf{0.559} & \cellcolor{myRed!80}\textbf{0.752} & \cellcolor{myRed!80}\textbf{0.532} & \cellcolor{myRed!80}\textbf{0.711} & \cellcolor{myRed!80}\textbf{0.515} & \cellcolor{myRed!80}\textbf{0.714}\\
\hline
\textcolor{upColor}{\textit{Avg. gain}}\tnote{$\star$} & \textcolor{upColor}{\textit{+12.50\%}} & \textcolor{upColor}{\textit{+14.50\%}} & \textcolor{upColor}{\textit{+7.38\%}} & \textcolor{upColor}{\textit{+8.87\%}} & \textcolor{upColor}{\textit{+6.28\%}} & \textcolor{upColor}{\textit{+7.70\%}} & \textcolor{upColor}{\textit{+5.52\%}} & \textcolor{upColor}{\textit{+7.52\%}}\\
\hline\hline
\multirow{2}{*}[-0.05in]{\tabincell{c}{\textbf{Methods}}} & \multicolumn{2}{c}{\raisebox{-0.5ex}{\textbf{WN18RR v1}}} & \multicolumn{2}{c}{\raisebox{-0.5ex}{\textbf{WN18RR v2}}} & \multicolumn{2}{c}{\raisebox{-0.5ex}{\textbf{WN18RR v3}}} & \multicolumn{2}{c}{\raisebox{-0.5ex}{\textbf{WN18RR v4}}} \\
\cmidrule(r){2-3}\cmidrule(r){4-5}\cmidrule(r){6-7}\cmidrule(r){8-9}
 & \textbf{MRR}$\uparrow$ & \textbf{Hit10}$\uparrow$ & \textbf{MRR}$\uparrow$ & \textbf{Hit10}$\uparrow$ & \textbf{MRR}$\uparrow$ & \textbf{Hit10}$\uparrow$ & \textbf{MRR}$\uparrow$ & \textbf{Hit10}$\uparrow$\\
\hline
NeuralLP &  0.649 & 0.772 & 0.635 & 0.749 & 0.361 & 0.476 & 0.628 & 0.706\\
DRUM & 0.666 & 0.777 & 0.646 & 0.747 & 0.380 & 0.477 & 0.627 & 0.702\\
\hline
NBFNet & \cellcolor{myOrange!80}\textbf{0.741} & \cellcolor{myRed!80}\textbf{0.826} & 0.704 & \cellcolor{myOrange!80}\textbf{0.798} & 0.432 & \cellcolor{myOrange!80}\textbf{0.568} & 0.641 & 0.694\\
RED-GNN & 0.701 & 0.799 & 0.690 & 0.780 & 0.427 & 0.524 & 0.651 & 0.721\\
ULTRA & 0.685 & 0.793 & 0.679 & 0.779 & 0.411 & 0.546 & 0.614 & 0.720\\
MOTIF & 0.703 & \cellcolor{myYellow!80}\textbf{0.806} & 0.680 & 0.781 & \cellcolor{myRed!80}\textbf{0.466} & \cellcolor{myRed!80}\textbf{0.590} & \cellcolor{myYellow!80}\textbf{0.659} & 0.733\\
\hline
ChatRule\tnote{$\dagger$} & 0.621 & 0.694 & 0.650 & 0.686 & 0.395 & 0.497 & 0.568 & 0.649\\
MKGL\tnote{$\dagger$} & \cellcolor{myRed!80}\textbf{0.746} & \cellcolor{myOrange!80}\textbf{0.822} & \cellcolor{myOrange!80}\textbf{0.712} & \cellcolor{myRed!80}\textbf{0.799} & 0.456 & 0.559 & \cellcolor{myRed!80}\textbf{0.664} & \cellcolor{myOrange!80}\textbf{0.741}\\
KRLM\tnote{$\dagger$} & 0.705 & 0.797 & 0.690 & \cellcolor{myYellow!80}\textbf{0.791} & \cellcolor{myYellow!80}\textbf{0.457} & \cellcolor{myRed!80}\textbf{0.590} & \cellcolor{myOrange!80}\textbf{0.662} & \cellcolor{myYellow!80}\textbf{0.737}\\
\hline
$\text{SIRLM}_{\text{E2E}}$ & 0.705 & 0.766 & 0.699 & 0.789 & 0.456 & 0.562 & 0.658 & 0.732\\
$\text{SIRLM}_{\text{PT}}$ & 0.668 & 0.744 & 0.670 & 0.766 & 0.433 & 0.549 & 0.627 & 0.693\\
$\text{SIRLM}_{\text{SFT}}$ & \cellcolor{myYellow!80}\textbf{0.706} & 0.778 & \cellcolor{myRed!80}\textbf{0.714} & \cellcolor{myRed!80}\textbf{0.799} & \cellcolor{myOrange!80}\textbf{0.459} & \cellcolor{myOrange!80}\textbf{0.568} & \cellcolor{myOrange!80}\textbf{0.662} & \cellcolor{myRed!80}\textbf{0.745}\\
$\text{SIRLM}_{\text{GRPO}}$ & 0.695 & 0.762 & \cellcolor{myYellow!80}\textbf{0.708} & \cellcolor{myYellow!80}\textbf{0.791} & 0.454 & \cellcolor{myYellow!80}\textbf{0.566} & \cellcolor{myOrange!80}\textbf{0.662} & \cellcolor{myOrange!80}\textbf{0.741}\\
\hline
\textcolor{upColor}{\textit{Avg. gain}}\tnote{$\star$} & \textcolor{upColor}{\textit{+0.38\%}} & \textcolor{upColor}{\textit{+0.21\%}} & \textcolor{upColor}{\textit{+3.78\%}} & \textcolor{upColor}{\textit{+3.12\%}} & \textcolor{upColor}{\textit{+3.84\%}} & \textcolor{upColor}{\textit{+3.17\%}} & \textcolor{upColor}{\textit{+2.71\%}} & \textcolor{upColor}{\textit{+3.36\%}}\\
\hline\hline
\multirow{2}{*}[-0.05in]{\tabincell{c}{\textbf{Methods}}} & \multicolumn{2}{c}{\raisebox{-0.5ex}{\textbf{NELL995 v1}}} & \multicolumn{2}{c}{\raisebox{-0.5ex}{\textbf{NELL995 v2}}} & \multicolumn{2}{c}{\raisebox{-0.5ex}{\textbf{NELL995 v3}}} & \multicolumn{2}{c}{\raisebox{-0.5ex}{\textbf{NELL995 v4}}} \\
\cmidrule(r){2-3}\cmidrule(r){4-5}\cmidrule(r){6-7}\cmidrule(r){8-9}
 & \textbf{MRR}$\uparrow$ & \textbf{Hit10}$\uparrow$ & \textbf{MRR}$\uparrow$ & \textbf{Hit10}$\uparrow$ & \textbf{MRR}$\uparrow$ & \textbf{Hit10}$\uparrow$ & \textbf{MRR}$\uparrow$ & \textbf{Hit10}$\uparrow$\\
\hline
NeuralLP & 0.610 & 0.871 & 0.361 & 0.564 & 0.367 & 0.576 & 0.261 & 0.539\\
DRUM & 	0.628 & 0.873 & 0.365 & 0.540 & 0.375 & 0.577 & 0.270 & 0.531\\
\hline
NBFNet & 0.648 & 0.862 & 0.421 & 0.599 & 0.462 & 0.578 & 0.404 & 0.588\\
RED-GNN & 0.637 & 0.866 & 0.419 & 0.601 & 0.436 & 0.594 & 0.363 & 0.556\\
ULTRA & 0.757 & 0.878 & 0.575 & 0.761 & 0.563 & 0.755 & 0.469 & 0.733\\
MOTIF & 0.712 & 0.873 & 0.566 & 0.765 & 0.580 & 0.764 & 0.507 & 0.740\\
\hline
ChatRule\tnote{$\dagger$} & 0.719 & 0.806 & 0.529 & 0.690 & 0.552  & 0.732 & 0.528 & 0.741\\
MKGL\tnote{$\dagger$} & \cellcolor{myYellow!80}\textbf{0.749} & \cellcolor{myYellow!80}\textbf{0.886} & 0.570 & 0.767 & 0.571 & 0.759 & 0.525 & 0.749\\
KRLM\tnote{$\dagger$} & 0.679 & \cellcolor{myOrange!80}\textbf{0.898} & 0.568 & 0.761 & 0.561 & 0.755 & \cellcolor{myRed!80}\textbf{0.554} & 0.762\\
\hline
$\text{SIRLM}_{\text{E2E}}$ & 0.680 & 0.871 & \cellcolor{myOrange!80}\textbf{0.580} & \cellcolor{myYellow!80}\textbf{0.769} & \cellcolor{myOrange!80}\textbf{0.585} & \cellcolor{myYellow!80}\textbf{0.769} & \cellcolor{myYellow!80}\textbf{0.547} & \cellcolor{myYellow!80}\textbf{0.767}\\
$\text{SIRLM}_{\text{PT}}$ & \cellcolor{myOrange!80}\textbf{0.762} & \cellcolor{myRed!80}\textbf{0.908} & \cellcolor{myYellow!80}\textbf{0.579} & \cellcolor{myOrange!80}\textbf{0.787} & \cellcolor{myYellow!80}\textbf{0.583} & \cellcolor{myOrange!80}\textbf{0.775} & 0.539 & 0.764\\
$\text{SIRLM}_{\text{SFT}}$ & \cellcolor{myRed!80}\textbf{0.768} & 0.879 & \cellcolor{myRed!80}\textbf{0.582} & \cellcolor{myRed!80}\textbf{0.789} & \cellcolor{myRed!80}\textbf{0.588} & \cellcolor{myRed!80}\textbf{0.779} & \cellcolor{myOrange!80}\textbf{0.552} & \cellcolor{myRed!80}\textbf{0.771}\\
$\text{SIRLM}_{\text{GRPO}}$ & 0.742 & 0.880 & 0.572 & 0.768 & 0.576 & 0.757 & 0.543 & \cellcolor{myOrange!80}\textbf{0.769}\\
\hline
\textcolor{upColor}{\textit{Avg. gain}}\tnote{$\star$} & \textcolor{upColor}{\textit{+8.59\%}} & \textcolor{upColor}{\textit{+3.99\%}} & \textcolor{upColor}{\textit{+9.60\%}} & \textcolor{upColor}{\textit{+11.70\%}} & \textcolor{upColor}{\textit{+9.17\%}} & \textcolor{upColor}{\textit{+10.23\%}} & \textcolor{upColor}{\textit{+12.08\%}} & \textcolor{upColor}{\textit{+11.11\%}}\\
\hline
\end{tabular}
\end{threeparttable}
\begin{tablenotes}
\item[$\dagger$] $\dagger$ We reproduce the experimental metrics of ChatRule, MKGL, and KRLM. 
\item[$\star$] $\star$ We calculate the average gain of the optimal value in the four training modes of SIRLM compared to all baselines.
\end{tablenotes}
\end{table}
\begin{table}
\setlength\tabcolsep{8.4pt}
\renewcommand{\arraystretch}{0.95}
\tiny
\newcommand{\tabincell}[2]{\begin{tabular}{@{}#1@{}}#2\end{tabular}}
\caption{\label{TabMainFullIndResults}
The overall performance of various methods on FullInd datasets. The colored cells represent the \colorbox{myRed!80}{best}, \colorbox{myOrange!80}{second-best}, and \colorbox{myYellow!80}{third-best} values, respectively.
}
\centering
\begin{threeparttable}
\begin{tabular}{ c  c  c  c  c  c  c  c  c  c  c  c  c  c  c  c  c }
\hline
\multirow{2}{*}[-0.05in]{\tabincell{c}{\textbf{Methods}}} & \multicolumn{2}{c}{\raisebox{-0.5ex}{\textbf{FB15k237-25}}} & \multicolumn{2}{c}{\raisebox{-0.5ex}{\textbf{FB15k237-50}}} & \multicolumn{2}{c}{\raisebox{-0.5ex}{\textbf{FB15k237-75}}} & \multicolumn{2}{c}{\raisebox{-0.5ex}{\textbf{FB15k237-100}}}\\
\cmidrule(r){2-3}\cmidrule(r){4-5}\cmidrule(r){6-7}\cmidrule(r){8-9}
 & \textbf{MRR}$\uparrow$ & \textbf{Hit10}$\uparrow$ & \textbf{MRR}$\uparrow$ & \textbf{Hit10}$\uparrow$ & \textbf{MRR}$\uparrow$ & \textbf{Hit10}$\uparrow$ & \textbf{MRR}$\uparrow$ & \textbf{Hit10}$\uparrow$\\
\hline
NBFNet & 0.224 & 0.410 & 0.130 & 0.259 & 0.089 & 0.166 & 0.072 & 0.154\\
RED-GNN & 0.145 & 0.284 & 0.129 & 0.251 & 0.107 & 0.201 & 0.121 & 0.263\\
ULTRA & 0.383 & 0.635 & 0.334 & 0.538 & 0.400 & 0.598 & 0.444 & 0.643\\
MOTIF & \cellcolor{myYellow!80}\textbf{0.388} & 0.635 & \cellcolor{myYellow!80}\textbf{0.340} & \cellcolor{myYellow!80}\textbf{0.544} & 0.399 & 0.607 & 0.439 & 0.642\\
\hline
ChatRule\tnote{$\dagger$} & 0.336 & 0.602 & 0.289 & 0.514 & 0.250 & 0.507 & 0.361 & 0.554\\
KRLM\tnote{$\dagger$} & \cellcolor{myRed!80}\textbf{0.398} & \cellcolor{myYellow!80}\textbf{0.640} & \cellcolor{myOrange!80}\textbf{0.345} & \cellcolor{myRed!80}\textbf{0.552} & \cellcolor{myOrange!80}\textbf{0.414} & \cellcolor{myYellow!80}\textbf{0.620} & \cellcolor{myRed!80}\textbf{0.455} & \cellcolor{myOrange!80}\textbf{0.655}\\
\hline
$\text{SIRLM}_{\text{E2E}}$ & 0.386 & 0.635 & \cellcolor{myOrange!80}\textbf{0.345} & 0.540 & \cellcolor{myYellow!80}\textbf{0.411} & \cellcolor{myOrange!80}\textbf{0.623} & 0.444 & 0.646\\
$\text{SIRLM}_{\text{PT}}$ & 0.376 & 0.631 & 0.315 & 0.534 & 0.396 & 0.613 & 0.442 & \cellcolor{myOrange!80}\textbf{0.655}\\
$\text{SIRLM}_{\text{SFT}}$ & \cellcolor{myYellow!80}\textbf{0.388} & \cellcolor{myOrange!80}\textbf{0.647} & \cellcolor{myOrange!80}\textbf{0.345} & 0.541 & 0.400 & 0.610 & \cellcolor{myYellow!80}\textbf{0.448} & \cellcolor{myYellow!80}\textbf{0.648}\\
$\text{SIRLM}_{\text{GRPO}}$ & \cellcolor{myOrange!80}\textbf{0.396} & \cellcolor{myRed!80}\textbf{0.649} & \cellcolor{myRed!80}\textbf{0.347} & \cellcolor{myOrange!80}\textbf{0.546} & \cellcolor{myRed!80}\textbf{0.420} & \cellcolor{myRed!80}\textbf{0.626} & \cellcolor{myOrange!80}\textbf{0.449} & \cellcolor{myRed!80}\textbf{0.656}\\
\hline
\textcolor{upColor}{\textit{Avg. gain}}\tnote{$\star$} & \textcolor{upColor}{\textit{+8.37\%}} & \textcolor{upColor}{\textit{+11.47\%}} & \textcolor{upColor}{\textit{+8.58\%}} & \textcolor{upColor}{\textit{+10.30\%}} & \textcolor{upColor}{\textit{+14.35\%}} & \textcolor{upColor}{\textit{+17.62\%}} & \textcolor{upColor}{\textit{+13.37\%}} & \textcolor{upColor}{\textit{+17.08\%}}\\
\hline\hline
\multirow{2}{*}[-0.05in]{\tabincell{c}{\textbf{Methods}}} & \multicolumn{2}{c}{\raisebox{-0.5ex}{\textbf{NELL995-25}}} & \multicolumn{2}{c}{\raisebox{-0.5ex}{\textbf{NELL995-50}}} & \multicolumn{2}{c}{\raisebox{-0.5ex}{\textbf{NELL995-75}}} & \multicolumn{2}{c}{\raisebox{-0.5ex}{\textbf{NELL995-100}}}\\
\cmidrule(r){2-3}\cmidrule(r){4-5}\cmidrule(r){6-7}\cmidrule(r){8-9}
 & \textbf{MRR}$\uparrow$ & \textbf{Hit10}$\uparrow$ & \textbf{MRR}$\uparrow$ & \textbf{Hit10}$\uparrow$ & \textbf{MRR}$\uparrow$ & \textbf{Hit10}$\uparrow$ & \textbf{MRR}$\uparrow$ & \textbf{Hit10}$\uparrow$\\
\hline
NBFNet & 0.283 & 0.417 & 0.225 & 0.346 & 0.137 & 0.255 & 0.096 & 0.199\\
RED-GNN & 0.214 & 0.266 & 0.179 & 0.115 & 0.203 & 0.353 & 0.212 & 0.385\\
ULTRA & \cellcolor{myYellow!80}\textbf{0.407} & 0.596 & \cellcolor{myOrange!80}\textbf{0.418} & \cellcolor{myOrange!80}\textbf{0.595} & \cellcolor{myOrange!80}\textbf{0.374} & \cellcolor{myOrange!80}\textbf{0.570} & 0.458 & \cellcolor{myOrange!80}\textbf{0.684}\\
MOTIF & 0.390 & 0.580 & \cellcolor{myYellow!80}\textbf{0.414} & 0.573 & 0.360 & 0.548 & 0.464 & \cellcolor{myYellow!80}\textbf{0.682}\\
\hline
ChatRule\tnote{$\dagger$} & 0.359 & 0.564 & 0.368 & 0.540 & 0.302 & 0.521 & 0.444 & 0.608\\
KRLM\tnote{$\dagger$} & 0.401 & 0.596 & \cellcolor{myRed!80}\textbf{0.432} & \cellcolor{myRed!80}\textbf{0.598} & 0.367 & 0.559 & \cellcolor{myRed!80}\textbf{0.489} & \cellcolor{myRed!80}\textbf{0.688}\\
\hline
$\text{SIRLM}_{\text{E2E}}$ & \cellcolor{myYellow!80}\textbf{0.407} & \cellcolor{myYellow!80}\textbf{0.601} & 0.413 & \cellcolor{myOrange!80}\textbf{0.595} & \cellcolor{myYellow!80}\textbf{0.371} & \cellcolor{myOrange!80}\textbf{0.570} & \cellcolor{myOrange!80}\textbf{0.477} & 0.666\\
$\text{SIRLM}_{\text{PT}}$ & 0.394 & 0.593 & 0.394 & \cellcolor{myYellow!80}\textbf{0.581} & 0.355 & 0.550 & \cellcolor{myYellow!80}\textbf{0.472} & 0.668\\
$\text{SIRLM}_{\text{SFT}}$ & \cellcolor{myRed!80}\textbf{0.412} & \cellcolor{myRed!80}\textbf{0.613}& \cellcolor{myOrange!80}\textbf{0.418} & \cellcolor{myRed!80}\textbf{0.598} & \cellcolor{myRed!80}\textbf{0.379} & \cellcolor{myRed!80}\textbf{0.578} & \cellcolor{myRed!80}\textbf{0.489} & 0.667\\
$\text{SIRLM}_{\text{GRPO}}$ & \cellcolor{myOrange!80}\textbf{0.410} & \cellcolor{myOrange!80}\textbf{0.607}  & 0.382 & 0.567 & 0.368 & \cellcolor{myYellow!80}\textbf{0.560} & \cellcolor{myOrange!80}\textbf{0.477} & 0.668\\
\hline
\textcolor{upColor}{\textit{Avg. gain}}\tnote{$\star$} & \textcolor{upColor}{\textit{+6.97\%}} & \textcolor{upColor}{\textit{+10.98\%}} & \textcolor{upColor}{\textit{+7.87\%}} & \textcolor{upColor}{\textit{+13.68\%}} & \textcolor{upColor}{\textit{+8.85\%}} & \textcolor{upColor}{\textit{+11.03\%}} & \textcolor{upColor}{\textit{+12.85\%}} & \textcolor{upColor}{\textit{+12.70\%}}\\
\hline\hline
\multirow{2}{*}[-0.05in]{\tabincell{c}{\textbf{Methods}}} & \multicolumn{2}{c}{\raisebox{-0.5ex}{\textbf{Wikidata68K-25}}} & \multicolumn{2}{c}{\raisebox{-0.5ex}{\textbf{Wikidata68K-50}}} & \multicolumn{2}{c}{\raisebox{-0.5ex}{\textbf{Wikidata68K-75}}} & \multicolumn{2}{c}{\raisebox{-0.5ex}{\textbf{Wikidata68K-100}}}\\
\cmidrule(r){2-3}\cmidrule(r){4-5}\cmidrule(r){6-7}\cmidrule(r){8-9}
 & \textbf{MRR}$\uparrow$ & \textbf{Hit10}$\uparrow$ & \textbf{MRR}$\uparrow$ & \textbf{Hit10}$\uparrow$ & \textbf{MRR}$\uparrow$ & \textbf{Hit10}$\uparrow$ & \textbf{MRR}$\uparrow$ & \textbf{Hit10}$\uparrow$\\
\hline
NBFNet & 0.154 & 0.301 & 0.062 & 0.105 & 0.072 & 0.172 & 0.014 & 0.026\\
RED-GNN & 0.170 & 0.263 & 0.058 & 0.093 & 0.172 & 0.290 & 0.096 & 0.136\\
ULTRA & 0.321 & 0.535 & 0.140 & 0.280 & 0.380 & 0.530 & 0.168 & 0.286\\
MOTIF & 0.317 & 0.505 & 0.160 & 0.304 & 0.371 & 0.535 & 0.173 & 0.284\\
\hline
ChatRule\tnote{$\dagger$} & 0.256 & 0.372 & 0.090 & 0.126 & 0.266 & 0.483 & 0.107 & 0.196\\
KRLM\tnote{$\dagger$} & \cellcolor{myRed!80}\textbf{0.332} & \cellcolor{myRed!80}\textbf{0.550} & 0.168 & \cellcolor{myYellow!80}\textbf{0.328} & 0.384 & 0.538 & \cellcolor{myOrange!80}\textbf{0.189} & \cellcolor{myOrange!80}\textbf{0.313}\\
\hline
$\text{SIRLM}_{\text{E2E}}$ & \cellcolor{myYellow!80}\textbf{0.322} & \cellcolor{myYellow!80}\textbf{0.536} & \cellcolor{myRed!80}\textbf{0.173} & \cellcolor{myOrange!80}\textbf{0.329} & \cellcolor{myOrange!80}\textbf{0.406} & \cellcolor{myRed!80}\textbf{0.568} & \cellcolor{myYellow!80}\textbf{0.186} & 0.304\\
$\text{SIRLM}_{\text{PT}}$ & 0.303 & 0.497 & 0.166 & 0.309 & \cellcolor{myYellow!80}\textbf{0.399} & \cellcolor{myYellow!80}\textbf{0.556} & \cellcolor{myOrange!80}\textbf{0.189} & 0.300\\
$\text{SIRLM}_{\text{SFT}}$ & \cellcolor{myOrange!80}\textbf{0.328} & \cellcolor{myOrange!80}\textbf{0.541} & \cellcolor{myOrange!80}\textbf{0.171} & \cellcolor{myRed!80}\textbf{0.330} & \cellcolor{myRed!80}\textbf{0.412} & \cellcolor{myOrange!80}\textbf{0.566} & \cellcolor{myRed!80}\textbf{0.191} & \cellcolor{myRed!80}\textbf{0.316}\\
$\text{SIRLM}_{\text{GRPO}}$ & 0.314 & 0.511 & \cellcolor{myYellow!80}\textbf{0.170} & 0.315 & 0.385 & 0.550 & 0.185 & \cellcolor{myYellow!80}\textbf{0.307}\\
\hline
\textcolor{upColor}{\textit{Avg. gain}}\tnote{$\star$} & \textcolor{upColor}{\textit{+6.97\%}} & \textcolor{upColor}{\textit{+12.00\%}} & \textcolor{upColor}{\textit{+5.80\%}} & \textcolor{upColor}{\textit{+12.40\%}} & \textcolor{upColor}{\textit{+13.78\%}} & \textcolor{upColor}{\textit{+14.33\%}} & \textcolor{upColor}{\textit{+6.65\%}} & \textcolor{upColor}{\textit{+10.92\%}}\\
\hline\hline
\multirow{2}{*}[-0.05in]{\tabincell{c}{\textbf{Methods}}} & \multicolumn{2}{c}{\raisebox{-0.5ex}{\textbf{MTDEA1-tax}}} & \multicolumn{2}{c}{\raisebox{-0.5ex}{\textbf{MTDEA1-health}}} & \multicolumn{2}{c}{\raisebox{-0.5ex}{\textbf{MTDEA2-org}}} & \multicolumn{2}{c}{\raisebox{-0.5ex}{\textbf{MTDEA2-sci}}}\\
\cmidrule(r){2-3}\cmidrule(r){4-5}\cmidrule(r){6-7}\cmidrule(r){8-9}
 & \textbf{MRR}$\uparrow$ & \textbf{Hit10}$\uparrow$ & \textbf{MRR}$\uparrow$ & \textbf{Hit10}$\uparrow$ & \textbf{MRR}$\uparrow$ & \textbf{Hit10}$\uparrow$ & \textbf{MRR}$\uparrow$ & \textbf{Hit10}$\uparrow$\\
\hline
NBFNet & 0.113 & 0.315 & 0.122 & 0.339 & \cellcolor{myRed!80}\textbf{0.109} & \cellcolor{myRed!80}\textbf{0.292} & 0.091 & 0.235 \\
ULTRA & 0.330 & 0.459 & 0.380 & 0.467 & \cellcolor{myYellow!80}\textbf{0.104} & \cellcolor{myOrange!80}\textbf{0.170} & 0.311 & 0.451\\
MOTIF & \cellcolor{myRed!80}\textbf{0.416} & 0.522 & \cellcolor{myYellow!80}\textbf{0.385} & \cellcolor{myYellow!80}\textbf{0.473} & \cellcolor{myOrange!80}\textbf{0.106} & \cellcolor{myOrange!80}\textbf{0.170} & 0.326 & 0.520 \\
\hline
ChatRule\tnote{$\dagger$} & 0.300 & 0.388 & 0.352 & 0.427 & 0.101 & 0.140 & 0.343 & 0.468 \\
KRLM\tnote{$\dagger$} & 0.378 & 0.508 & 0.376 & 0.457 & 0.098 & 0.140 & 0.361 & 0.526 \\
\hline
$\text{SIRLM}_{\text{E2E}}$ & 0.399 & \cellcolor{myYellow!80}\textbf{0.528} & \cellcolor{myOrange!80}\textbf{0.390} & \cellcolor{myOrange!80}\textbf{0.475} & 0.101 & 0.142 & 0.385 & 0.530\\
$\text{SIRLM}_{\text{PT}}$ & 0.395 & 0.523 & 0.383 & 0.470 & 0.083 & 0.136 & \cellcolor{myYellow!80}\textbf{0.390} & \cellcolor{myYellow!80}\textbf{0.532} \\
$\text{SIRLM}_{\text{SFT}}$ & \cellcolor{myOrange!80}\textbf{0.407} & \cellcolor{myRed!80}\textbf{0.533} & \cellcolor{myRed!80}\textbf{0.404} & \cellcolor{myRed!80}\textbf{0.481} & \cellcolor{myRed!80}\textbf{0.109} & \cellcolor{myYellow!80}\textbf{0.145} & \cellcolor{myRed!80}\textbf{0.410} & \cellcolor{myRed!80}\textbf{0.548}\\
$\text{SIRLM}_{\text{GRPO}}$ & \cellcolor{myYellow!80}\textbf{0.401} & \cellcolor{myOrange!80}\textbf{0.530} & 0.384 & \cellcolor{myYellow!80}\textbf{0.473} & 0.097 & 0.140 & \cellcolor{myOrange!80}\textbf{0.399} & \cellcolor{myOrange!80}\textbf{0.538}\\
\hline
\textcolor{upColor}{\textit{Avg. gain}}\tnote{$\star$} & \textcolor{upColor}{\textit{+9.96\%}} & \textcolor{upColor}{\textit{+9.46\%}} & \textcolor{upColor}{\textit{+8.10\%}} & \textcolor{upColor}{\textit{+4.84\%}} & \textcolor{upColor}{\textit{+0.54\%}} & \textcolor{upColor}{\textit{-3.74\%}} & \textcolor{upColor}{\textit{+12.36\%}} & \textcolor{upColor}{\textit{+10.80\%}}\\
\hline\hline
\multirow{2}{*}[-0.05in]{\tabincell{c}{\textbf{Methods}}} & \multicolumn{2}{c}{\raisebox{-0.5ex}{\textbf{MTDEA3-art}}} & \multicolumn{2}{c}{\raisebox{-0.5ex}{\textbf{MTDEA3-infra}}} & \multicolumn{2}{c}{\raisebox{-0.5ex}{\textbf{MTDEA4-sci}}} & \multicolumn{2}{c}{\raisebox{-0.5ex}{\textbf{MTDEA4-health}}}\\
\cmidrule(r){2-3}\cmidrule(r){4-5}\cmidrule(r){6-7}\cmidrule(r){8-9}
 & \textbf{MRR}$\uparrow$ & \textbf{Hit10}$\uparrow$ & \textbf{MRR}$\uparrow$ & \textbf{Hit10}$\uparrow$ & \textbf{MRR}$\uparrow$ & \textbf{Hit10}$\uparrow$ & \textbf{MRR}$\uparrow$ & \textbf{Hit10}$\uparrow$\\
\hline
NBFNet & 0.099 & 0.257 & 0.137 & 0.357 & 0.094 & 0.246 & 0.124 & 0.328\\
ULTRA & 0.306 & 0.473 & 0.657 & 0.807 & 0.303 & 0.478 & 0.704 & \cellcolor{myYellow!80}\textbf{0.785}\\
MOTIF & 0.315 & 0.469 & \cellcolor{myYellow!80}\textbf{0.683} & 0.827 & 0.309 & 0.483 & 0.703 & \cellcolor{myOrange!80}\textbf{0.787}\\
\hline
ChatRule\tnote{$\dagger$} & 0.267 & 0.432 & 0.659 & 0.811 & 0.268 & 0.449 & 0.658 & 0.740\\
KRLM\tnote{$\dagger$} & 0.315 & 0.468 & 0.670 & 0.811 & 0.303 & 0.471 & 0.695 & 0.776\\
\hline
$\text{SIRLM}_{\text{E2E}}$ & \cellcolor{myOrange!80}\textbf{0.328} & \cellcolor{myOrange!80}\textbf{0.499} & \cellcolor{myOrange!80}\textbf{0.686} & \cellcolor{myOrange!80}\textbf{0.834} & 0.311 & 0.489 & \cellcolor{myRed!80}\textbf{0.710} & \cellcolor{myRed!80}\textbf{0.789}\\
$\text{SIRLM}_{\text{PT}}$ & \cellcolor{myYellow!80}\textbf{0.324} & 0.492 & 0.677 & 0.827 & \cellcolor{myYellow!80}\textbf{0.317} & \cellcolor{myYellow!80}\textbf{0.497} & \cellcolor{myOrange!80}\textbf{0.707} & \cellcolor{myOrange!80}\textbf{0.787}\\
$\text{SIRLM}_{\text{SFT}}$ & \cellcolor{myOrange!80}\textbf{0.328} & \cellcolor{myYellow!80}\textbf{0.493} & \cellcolor{myRed!80}\textbf{0.688} & \cellcolor{myRed!80}\textbf{0.839} & \cellcolor{myRed!80}\textbf{0.330} & \cellcolor{myRed!80}\textbf{0.526} & 0.701& \cellcolor{myOrange!80}\textbf{0.787}\\
$\text{SIRLM}_{\text{GRPO}}$ &\cellcolor{myRed!80}\textbf{0.333} & \cellcolor{myRed!80}\textbf{0.510} & 0.676 & \cellcolor{myYellow!80}\textbf{0.830} & \cellcolor{myOrange!80}\textbf{0.321} & \cellcolor{myOrange!80}\textbf{0.514} & \cellcolor{myYellow!80}\textbf{0.706} & \cellcolor{myRed!80}\textbf{0.789}\\
\hline
\textcolor{upColor}{\textit{Avg. gain}}\tnote{$\star$} & \textcolor{upColor}{\textit{+7.26\%}} & \textcolor{upColor}{\textit{+9.02\%}} & \textcolor{upColor}{\textit{+12.68\%}} & \textcolor{upColor}{\textit{+11.64\%}} & \textcolor{upColor}{\textit{+7.46\%}} & \textcolor{upColor}{\textit{+10.06\%}} & \textcolor{upColor}{\textit{+13.32\%}} & \textcolor{upColor}{\textit{+10.58\%}}\\
\hline
\end{tabular}
\end{threeparttable}
\begin{tablenotes}
\item[$\dagger$] $\dagger$ We reproduce the experimental metrics of ChatRule and KRLM. 
\item[$\star$] $\star$ We calculate the average gain of the optimal value in the four training modes of SIRLM compared to all baselines.
\end{tablenotes}
\end{table}
\begin{table}
\setlength\tabcolsep{5pt}
\tiny
\newcommand{\tabincell}[2]{\begin{tabular}{@{}#1@{}}#2\end{tabular}}
\caption{\label{TabAblationResults}
The overall performance of ablation variants on different datasets, where the MRR and Hit10 in the IndE and FullInd scenarios are summarized as average values. The colored cells represent the \colorbox{myRed!80}{best}, \colorbox{myOrange!80}{second-best}, and \colorbox{myYellow!80}{third-best} values, respectively.
}
\centering
\begin{threeparttable}
\begin{tabular}{ c | c  c  c  c  c  c  c  c | c  c  c  c }
\hline
\multirow{2}{*}[-0.05in]{\tabincell{c}{\textbf{Methods}}} & \multicolumn{2}{c}{\raisebox{-0.5ex}{\textbf{FB15k237}}} & \multicolumn{2}{c}{\raisebox{-0.5ex}{\textbf{CodeX-M}}} & \multicolumn{2}{c}{\raisebox{-0.5ex}{\textbf{WN18RR}}} & \multicolumn{2}{c}{\raisebox{-0.5ex}{\textbf{NELL995}}} & \multicolumn{2}{|c}{\raisebox{-0.5ex}{\textbf{12 IndE Datasets}}} & \multicolumn{2}{c}{\raisebox{-0.5ex}{\textbf{20 FullInd Datasets}}} \\
\cmidrule(r){2-3}\cmidrule(r){4-5}\cmidrule(r){6-7}\cmidrule(r){8-9}\cmidrule(r){10-11}\cmidrule(r){12-13}
 & \textbf{MRR} & \textbf{Hit10} & \textbf{MRR} & \textbf{Hit10} & \textbf{MRR} & \textbf{Hit10} & \textbf{MRR} & \textbf{Hit10} & \textbf{MRR} & \textbf{Hit10} & \textbf{MRR} & \textbf{Hit10}\\
\hline
\textbf{Full Model} & \cellcolor{myRed!80}\textbf{0.408} & \cellcolor{myRed!80}\textbf{0.593} & \cellcolor{myRed!80}\textbf{0.367} & \cellcolor{myRed!80}\textbf{0.522} & \cellcolor{myRed!80}\textbf{0.461} & \cellcolor{myRed!80}\textbf{0.556} & \cellcolor{myRed!80}\textbf{0.520} & \cellcolor{myRed!80}\textbf{0.638} & \cellcolor{myRed!80}\textbf{0.580} & \cellcolor{myRed!80}\textbf{0.738} & \cellcolor{myRed!80}\textbf{0.374} & \cellcolor{myRed!80}\textbf{0.538}\\
\hline
w/o \textbf{MMQI} & \cellcolor{myOrange!80}\textbf{0.389} & \cellcolor{myOrange!80}\textbf{0.579} & \cellcolor{myOrange!80}\textbf{0.360} & \cellcolor{myOrange!80}\textbf{0.499} & \cellcolor{myOrange!80}\textbf{0.458} & \cellcolor{myYellow!80}\textbf{0.525} & \cellcolor{myOrange!80}\textbf{0.514} & \cellcolor{myOrange!80}\textbf{0.627} & \cellcolor{myOrange!80}\textbf{0.570} & \cellcolor{myOrange!80}\textbf{0.721} & \cellcolor{myOrange!80}\textbf{0.364} & \cellcolor{myOrange!80}\textbf{0.522}\\
w/o \textbf{SRM} & \cellcolor{myYellow!80}\textbf{0.358} & \cellcolor{myYellow!80}\textbf{0.566} & \cellcolor{myYellow!80}\textbf{0.348} & \cellcolor{myYellow!80}\textbf{0.478} & \cellcolor{myYellow!80}\textbf{0.455} & 0.526 & 0.488 & 0.606 & \cellcolor{myYellow!80}\textbf{0.558} & \cellcolor{myYellow!80}\textbf{0.709} & \cellcolor{myYellow!80}\textbf{0.348} & \cellcolor{myYellow!80}\textbf{0.508}\\
w/o \textbf{RCMP} & 0.352 & 0.538 & 0.339 & 0.422 & 0.404 & 0.500 & \cellcolor{myYellow!80}\textbf{0.501} & \cellcolor{myYellow!80}\textbf{0.611} & 0.540 & 0.685 & 0.344 & 0.502\\
\hline
\end{tabular}
\end{threeparttable}
\end{table}
Tables~\ref{TabMainIndEResults} and~\ref{TabMainFullIndResults} correspond to the detailed experimental results of each method in Table~\ref{TabMainResults} on the IndE and FullInd datasets, respectively.
\par Traditional embedding-based methods rely on initializing fixed representations of entities and relations tailored to a specific KGR scenario. 

\begin{wrapfigure}[15]{r}{0.55\linewidth}
    \vspace*{-10pt}  
    \begin{centering}
      \includegraphics[width=\linewidth]{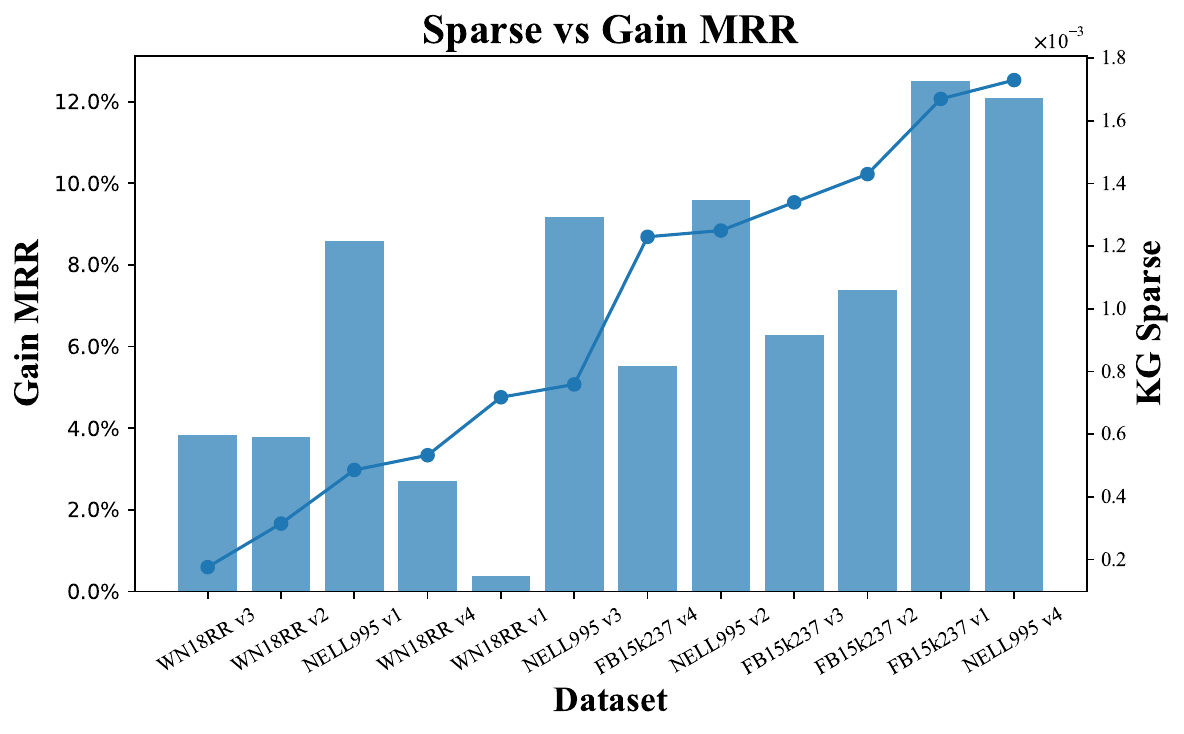}
\vspace{-7mm}
      \caption{\label{FigSparseGains}The correlation trend between KG sparsity and the MRR gain of SIRLM.}
    \end{centering}
\end{wrapfigure}
As a result, they cannot accommodate unseen entities and relations in the inference KG, making them unsuitable for inductive KGR tasks. Rule-based approaches, while capable of handling predictions involving unseen entities under a fixed set of relation types, fail to effectively generalize rule bodies when relations are dynamic, which prevents their application to FullInd KGR settings. In contrast, GNN-based methods are applicable across all KGR scenarios considered in our experiments. This advantage stems from their ability to learn structural context, enabling more structural pattern induction over the entire KG and allowing them to identify unfamiliar entities and relations through representations grounded in structural invariance.
\par LLM-based methods leverage pre-trained knowledge and strong contextual reasoning capabilities to uncover latent facts in incomplete KGs and to recognize unseen entities and relations. For instance, ChatRule utilizes the natural language understanding ability of LLMs to filter candidate textualized KG rules and select those that best support reasoning over a query triplet, while MKGL and KRLM project explicit KG structures into the implicit parametric knowledge of LLMs and ultimately generate potential facts through next-entity prediction. However, these approaches lack faithfulness constraints on the implicit reasoning process of LLMs, making them susceptible to biases arising from misalignment between structural knowledge and parametric representations, which can lead to erroneous inference outcomes.
\par In contrast, our SIRLM enforces the generation of structurally grounded rules over the KG, mitigating the tendency of LLMs to overlook the underlying structural evidence in favor of pre-trained knowledge when generating answers. Furthermore, we observe a correlation between the performance gains of SIRLM and the sparsity of the KG. Based on the results in Tables~\ref{TabMainIndEResults} and~\ref{TabMainFullIndResults}, Figure~\ref{FigSparseGains} illustrates this trend. Notably, SIRLM performs slightly worse on sparse KGs (\textit{e.g.}, the WN18RR series dataset) than on dense ones, which highlights an inherent limitation of structural rule-based reasoning under sparse conditions and points to an important direction for future research.
\subsection{Details Ablation Analysis}
\label{appendixMainResultsAbl}
Section~\ref{Ablation Experiments} analyzes the ablation components of SIRLM. To verify the universality of each component, we chose to perform ablation variants in $\text{SIRLM}_{\text{PT}}$ training mode. We first provide the design details of each ablation variant.
\begin{enumerate}[left=0pt]
\item[$\bullet$] \textbf{-MMQI}. This variant only provides the textual instructions of query triplets, without including structural representations of entities and relations. Following MKGL~\citep{MKGL}, such instruction sequences contain textual strings of entities/relations along with their corresponding textual explanations. During tokenization, we use the principal neighborhood aggregation method~\citep{PNA} to aggregate the text token sequences of entity/relation into word-level token embeddings, which are then used for subsequent rule generation and reasoning. Since the instructions do not include structural representations, the tokenizer in Eq. (\ref{eq6}) degenerates into $\text{TKN}_{\text{LLM}}$. The rest of the SIRLM architecture remains unchanged.
\item[$\bullet$] \textbf{-SRM}. This variant removes the structural relation memory in the in-context learning module, along with its corresponding linear layers ($m_K(\cdot)$ and $m_V(\cdot)$) in Eq. (\ref{eq8}). All other components remain consistent with the primary SIRLM.
\item[$\bullet$] \textbf{-RCMP}. This variant removes ${\{<r_z, f_{\text{down}}(\bm{h}_{L+z})>\}}_{z=1}^\varepsilon$ in Eq. (\ref{eq11}), reverting it back to the original SIL module.
\end{enumerate}
Table~\ref{TabAblationResults} provides the details ablation experimental results of each ablation variant.
\subsection{Additional Analysis on Different LLM Backbones}
\label{appendixLLMBackboneAnalysis}
Section~\ref{LLMBackboneAnalysis} presents the experimental results of SIRLM across four different LLM backbones. Overall, the results show a positive correlation between the parameter scale of the backbone and its reasoning performance. We attribute this to the fact that larger parameter sizes enable the model to more quickly fit the newly injected structured representation space.
\par To further illustrate this, Figure~\ref{FigLLMBackboneAnalysis} reports the convergence of rule generation accuracy for the four LLM backbones on WN18RR v1. It is evident that the two 7B backbones (Qwen2.5-7b and Llama2-7b) approach convergence at around 200 steps, whereas Qwen2.5-0.5b and Qwen2.5-1.5b backbones require approximately 400 steps to reach a similar level of convergence.
\begin{figure}
    \begin{centering}
      \includegraphics[width=\linewidth]{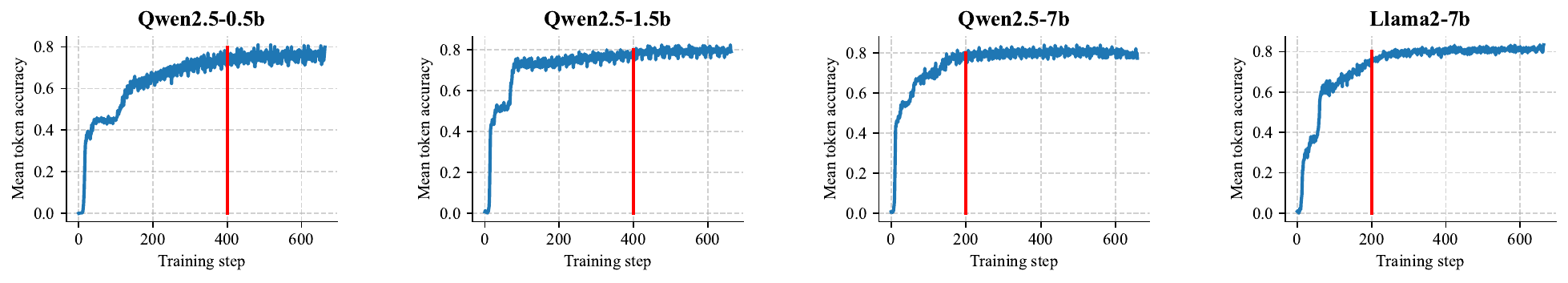}
\vspace{-6mm}
      \caption{\label{FigLLMBackboneAnalysis}Convergence of rule generation accuracy for SIRLM with different LLMs on WN18RR v1.}
    \end{centering}
\vspace{-4mm}
\end{figure}
\subsection{Case Study and Error Analysis}
\label{appendixCase}
\begin{figure}
    \begin{centering}
      \includegraphics[width=\linewidth]{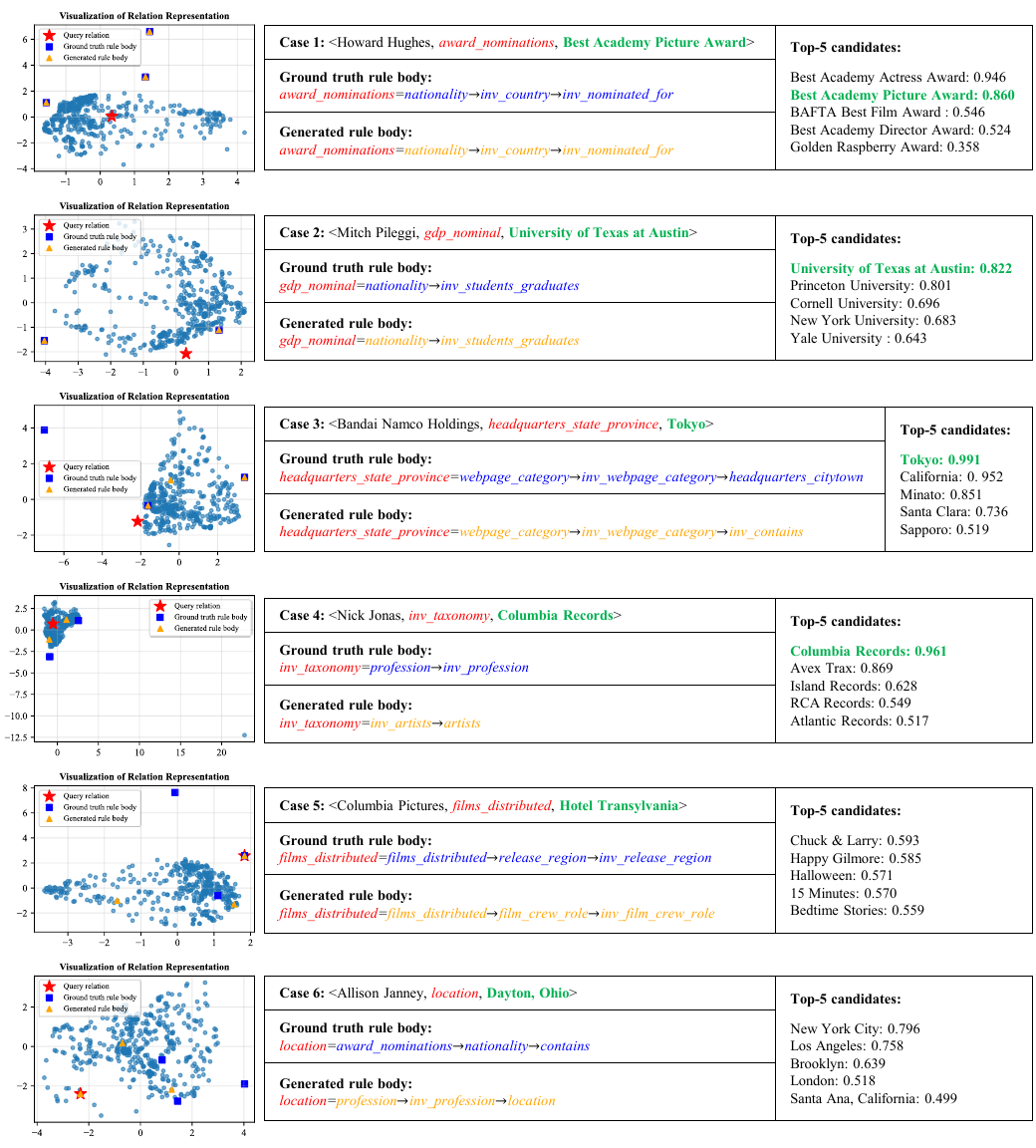}
\vspace{-6mm}
      \caption{\label{FigCaseStudy}Case studies of rule generation. \textit{\textcolor{red}{Red}}, \textit{\textcolor{blue}{blue}}, and \textit{\textcolor{orange}{orange}} denote the query relation, the atomic relations in the ground-truth rule, and the atomic relations in the predicted rule, respectively. \textbf{\textcolor{green}{Green}} denotes the target entities that need to be predicted.}
    \end{centering}
\vspace{-4mm}
\end{figure}
To better understand the behavior of our SIRLM, we visualize several representative cases of rule generation in Figure~\ref{FigCaseStudy}. \textbf{Cases 1 and 2} is a successful example where the generated rule exactly matches the ground-truth rule, indicating that the model can accurately identify the correct multi-hop reasoning pattern. As a result, the correct entities “\textbf{\textcolor{green}{Best Academy Picture Award}}” and “\textbf{\textcolor{green}{University of Texas at Austin}}” appears in the top-ranked candidates with a relatively high confidence score. 
\par \textbf{Cases 3 and 4} demonstrates that an exact match to the ground-truth rule is not necessary for correct prediction. Although the generated rule deviates from the annotated reasoning path, it still leads to the correct answers “\textbf{\textcolor{green}{Tokyo}}” and “\textbf{\textcolor{green}{Columbia Records}}”. This suggests that the model can exploit alternative reasoning paths with similar structural semantics. In these case, the generated atomic relations (\textcolor{blue}{blue square}) differ from the ground-truth atomic relations (\textcolor{orange}{orange triangle}), but remain semantically similar in the representation space.
\par In contrast, \textbf{Cases 5 and 6} reveals a limitation of the method. The generated rule introduces irrelevant atomic relations that differ substantially from the ground truth, causing the reasoning process to deviate from the correct semantic direction and resulting in incorrect top-ranked candidates.
\par These observations indicate that the effectiveness of SIRLM critically depends on the quality of the generated rule body, that is, semantically accurate rules improve both interpretability and predictive performance, whereas semantically inconsistent rules can significantly degrade performance.
\section{Limitations and Future Work}
\label{appendixLimitations}
SIRLM provides a novel modeling and training framework for LLM-based KGR research, effectively alleviating the problem of reasoning evidence perception drift caused by the knowledge representation gap between LLMs and KGs. However, SIRLM still has several potential limitations. We discuss these limitations as follows and provide potential future research directions.
\par \textbf{Computational complexity.} As shown in \textbf{Appendix~\ref{appendixComplexity}}, SIRLM needs to make real-time relational graph updates and message passing over the entire KG during training. Although sparse operators are adopted in practice to significantly reduce the computational cost of graph processing, the upper bound remains quadratic in the number of entities, which leads to a notable computational bottleneck for large-scale KGs. Therefore, in future work, we plan to introduce a query-driven evidence graph extraction mechanism~\citep{Relink}, such that the graph structure processed by SIRLM is closely aligned with the query triplets. This would help eliminate the unnecessary computational overhead incurred by processing irrelevant KG context.
\par \textbf{Sparse KG reasoning.} The comprehensive analysis of Figures 7 and 8, as well as Tables 8 and 9, shows that there is still significant room for improvement in the reasoning performance of SIRLM on sparse KGs. We consider that this limitation is attributed to the fact that SIRLM is a rule-driven KGR framework. In sparse KGs, the diversity and completeness of structural rules are inherently limited, which in turn hinders the subsequent processes of rule learning and generation by LLMs. To address this issue, we plan to incorporate a soft rule mechanism~\citep{RNNLogic} into the rule mining stage. Specifically, instead of relying solely on hard inductive methods based on observable closed-path patterns, we will jointly employ soft rule mining to construct approximate rules and corresponding soft labels for query triplets that lack closed-path evidence. These soft signals can serve as additional supervision for subsequent LLM-based rule generation. In this way, the contextual exploration space of LLMs over sparse KGs can be effectively expanded, thereby alleviating the limitations in rule learning caused by insufficient structural context.

\end{document}